\newtheorem{theorem}{Theorem}
\newtheorem{lemma}{Lemma}
\newtheorem{proposition}{Proposition}
\newtheorem{corollary}{Corollary}
\newtheorem{definition}{Definition}
\newtheorem{example}{Example}
\newtheorem{remark}{Remark}
\newtheorem{assumption}{Assumption}
\newcommand{\Pa}{\textit{Pa}}
\newcommand{\Ch}{\textit{Ch}}
\newcommand{\Size}{\textit{Size}}
\newcommand{\Counter}{\textsc{Counter}}
\newcommand{\MEC}{\textit{MEC}}
\newcommand{\I}{\mathcal{I}}
\newcommand{\E}{\mathcal{E}}
\newcommand{\tG}{\tilde{G}}
\newcommand{\tT}{\tilde{T}}
\renewcommand\@biblabel[1]{\textbf{#1.}} % Change the square brackets for each bibliography item from '[1]' to '1.'
\renewcommand{\@listI}{\itemsep=0pt} % Reduce the space between items in the itemize and enumerate environments and the bibliography
\renewcommand{\maketitle}{ % Customize the title - do not edit title and author name here, see the TITLE block below
%\begin{flushright} % Right align
\begin{center}
{\LARGE\@title} % Increase the font size of the title
\end{center}

\vspace{50pt} % Some vertical space between the title and author name

{\large\@author} % Author name
%\\\@date % Date

\vspace{40pt} % Some vertical space between the author block and abstract
%\end{flushright}
}
\title{Interventional Experiment Design\\ for Causal Structure Learning}
\author{\noindent
\textsc{AmirEmad Ghassami}\\
\textit{Department of Electrical and Computer Engineering,\\ University of Illinois at Urbana-Champaign, Urbana, IL 61801, USA}\\
       \vspace{3mm}
\texttt{ghassam2@illinois.edu}\\
\textsc{Saber Salehkaleybar} \\
\textit{Department of Electrical Engineering,\\ Sharif University of Technology, Tehran, Iran}\\
       \vspace{3mm}  
\texttt{saleh@sharif.edu}\\  
\textsc{Negar Kiyavash}\\
\textit{College of Management of Technology, Management of Technology and Entrepreneurship Institute\\ \'Ecole Polytechnique F\'ed\'erale de Lausanne (EPFL), Switzerland}\\
\texttt{negar.kiyavash@epfl.ch} \\
       %\AND
       %\name Elias Bareinboim \email eb@purdue.edu \\
       %\addr Department of Computer Science\\ Purdue University\\
       %West Lafayette, IN 47906, USA
       }
\begin{document}

\maketitle

\begin{abstract}%   <- trailing '%' for backward compatibility of .sty file
It is known that from purely observational data, a causal DAG is identifiable only up to its Markov equivalence class, and for many ground truth DAGs, the direction of a large portion of the edges will be remained unidentified. The golden standard for learning the causal DAG beyond Markov equivalence is to perform a sequence of interventions in the system and use the data gathered from the interventional distributions. We consider a setup in which given a budget $k$, we design $k$ interventions non-adaptively. We cast the problem of finding the best intervention target set as an optimization problem which aims to maximize the number of edges whose directions are identified due to the performed interventions. First, we consider the case that the underlying causal structure is a tree. For this case, we propose an efficient exact algorithm for the worst-case gain setup, as well as an approximate algorithm for the average gain setup. We then show that the proposed approach for the average gain setup can be extended to the case of general causal structures. In this case, besides the design of interventions, calculating the objective function is also challenging. We propose an efficient exact calculator as well as two estimators for this task. We evaluate the proposed methods using synthetic as well as real data.
\end{abstract}

\hspace*{3,6mm}\textit{Keywords:} Causal Structure Learning, Intervention Design, Directed Acyclic Graphs, Interventional Markov Equivalence. % Keywords

\vspace{30pt} % Some vertical space between the abstract and first section

%\begin{keywords}
%Causal Structure learning, Intervention Design, Directed Acyclic Graphs, Interventional Markov Equivalence.
%\end{keywords}

\newpage
%\tableofcontents

% Introduction
\section{Introduction}
\label{sec:intro}

%The problem of learning the causal relations underlying a complex system is of great interest in AI and throughout the empirical sciences. 

One of the most prominent approaches for modeling and representing causal relationships among variables in a system is to use the framework of causal Bayesian networks, which consists of a directed acyclic graph (DAG), paired with a joint distribution over the variables of the system  \citep{spirtes2000causation, pearl2009causality}. In the DAG in this modeling, a directed edge from variable $X_1$ to variable $X_2$ indicates that $X_1$ is a direct cause of $X_2$. 

\begin{sloppy}

Under certain assumptions on the underlying data generating processes, such as considering linear models with non-Gaussian exogenous variables \citep{shimizu2006linear}, or assuming specific types of non-linearity on the causal modules \citep{hoyer2009nonlinear,zhang2008distinguishing}, in the population dataset, the causal DAG can be identified uniquely. 
However, such assumptions usually force a sort of additivity on the exogenous variables of the system, which in many applications may not be realistic.

\end{sloppy}

Without such extra assumptions the underlying causal DAG can be identified only up to its Markov equivalence class, which is the set of DAGs which represent the same set of conditional independencies among the variables. 
Hence, for many ground truth DAGs, the direction of a large portion of the edges will be remained unidentified. For instance, the three DAGs $X_1\rightarrow X_2\rightarrow X_3$, $X_1\leftarrow X_2\rightarrow X_3$ and $X_1\leftarrow X_2\leftarrow X_3$ indicate that $X_3$ is independent of $X_1$ conditioned on $X_2$, and they are Markov equivalent. 
In order to learn the structure beyond Markov equivalence, the golden standard is to assume that extra joint distributions generated from the perturbed causal system are available.

%more than one joint distribution generated from the causal system is required. 

The main method for generating such extra joint distributions is to perform a set of interventions, each on a subset of the variables of the system, and subsequently collect data from the intervened system. This is the core idea in interventional causal structure learning.
An intervention on a variable $X$ varies the conditional distribution of $X$ given its direct causes. It can also completely make variable $X$ independent from its causes. The information obtained from an intervention depends on the type of the performed intervention, as well as the size of the intervention (i.e., the number of the target variables), and the location of the targets of the intervention in the underlying causal DAG. (We will discuss this in details in Section \ref{sec:prelim}.) An interventional experiment is comprised of a sequence of interventions with different target sets. It can be adaptive, in which each intervention in the sequence is designed based on the information obtained from previous interventions, or non-adaptive, in which all the interventions in the sequence are designed before any data is collected. There are two main questions regarding the design of interventional experiments for structure learning:
\begin{enumerate}
\item What is the smallest required number of interventions in order to fully learn the underlying causal graph?
\item For a fixed number of interventions (budget), what portion of the causal graph is learnable?
\end{enumerate}

The first problem has been addressed in the literature under different assumptions \citep{eberhardt2007causation, eberhardt2012almost, he2008active, shanmugam2015learning}.
Specifically, \cite{eberhardt2007causation} provided the worst case bounds on the number of required interventions for different types of interventions. The second question mentioned above has received less attention and we address this question herein. We consider a setup in which given a budget $k$, we design $k$ interventions non-adaptively. 
%Interventions are assumed to be singleton, that is, each of them is of size one. 
The setup in this work can be interpreted as an extension of the adaptive experiment design, in which interventions are designed in batches of size $k$, i.e., setting $k=1$, reduces the setup to the standard adaptive experiment design. 
 The main contributions of this work are summarized bellow:
\begin{itemize}
\item We cast the problem of finding the best intervention target set as an optimization problem which aims to maximize the experiment gain. The gain is defined as the number of edges whose directions are identified due to the performed interventions. We consider the optimization of the worst-case gain, as well as the average gain.
\item We start the investigation of the optimization problems by considering the case that the underlying causal structure is a tree. For this case, we present an efficient exact algorithm for the worst-case gain setup, as well as an approximate algorithm for the average gain setup. The latter is based on proving that the objective function for the average gain setup is a monotonically increasing and submodular set function.
\item We extend the approximate algorithm to the case of general causal DAGs. In this case, besides the design of interventions, calculating the objective function is also challenging. We propose an efficient exact calculator as well as an unbiased and a fast heuristic estimator for this task. Convergence analysis is provided for the unbiased estimator.
%If the maximum degree of the graph is not small, in the worst case this estimator has high computational complexity. In order to circumvent this challenge, 
%we provide another efficient, albeit slightly biased estimator.
%Although this estimator is not unbiased, our extensive experimental results confirms that the sampling distribution of the second sampler is very close to uniform (Section \ref{sec:approach}).
\end{itemize}

%We implement an accelerated variant of the general greedy algorithm through {\it lazy} evaluations, originally proposed by Minoux \cite{minoux1978accelerated}. This algorithm can lead to orders of magnitude performance speedup.
%Using synthetic and real data, in Section \ref{sec:experiments}, we show that the proposed approach recovers a significant portion of the edges by performing only a few interventions in the underlying causal system.

This paper is an extended version of our previous work \citep{ghassami2018budgeted,ghassami2018counting}. 
Here, we have provided an extended and more detailed presentation of the approach and algorithms. Also, we have added the study of the experiment design for the case that the underlying structure is a tree. Moreover, we have provided extended performance evaluations in the absence and presence of estimation errors.

The rest of the paper is organized as follows:
After a brief review of related works in Section \ref{sec:relwork},
we start the exposition with reviewing required concepts and classic results, as well as introducing notations and terminologies in Section \ref{sec:prelim}. A formal description of the problem setup is presented in Section \ref{sec:desc}. The proposed experiment design approach for tree causal structures and general causal structures are presented in Sections \ref{sec:tree} and \ref{sec:general}, respectively. 
A variation of the general greedy algorithm through lazy evaluations 
%originally proposed by Minoux \cite{minoux1978accelerated}. 
is presented in Section \ref{sec:algorithm}.
Using synthetic and real data, the proposed methods are evaluated in Section \ref{sec:exp}; and finally, our concluding remarks are presented in Section \ref{sec:conc}.
All the proofs are provided in the Appendix.

% Related Works
%\section{Related Works} TBD :o
%\label{sec:relwork}
\section{Related Works}
\label{sec:relwork}

The main methods in the literature for learning causal DAGs from purely observational data include constraint-based methods \citep{spirtes2000causation,pearl2009causality}, score-based methods \citep{heckerman1995learning,chickering2002optimal}, and hybrid methods \citep{tsamardinos2006max}. 
As mentioned in the introduction, without any extra assumptions on the generating causal modules, Markov equivalence class of the ground truth structure is the extent of learnability, and performing interventions is needed for learning beyond Markov equivalence.

A formal definition and the details of the utilization of interventions for the task of causal discovery is provided by \cite{pearl2009causality} and \cite{spirtes2000causation}.
Especially, \cite{pearl2009causality} used the concept of atomic intervention, in which the intervened variable is forced to one particular value rather than a non-degenerate distribution, that is, $X_i=x_i$, for some value $x_i$ in the support of random variable $X_i$.
Works including \citep{eberhardt2007causation, eberhardt2012almost, he2008active, shanmugam2015learning} address the problem of finding the smallest number of interventions required for fully identifying the causal structure.
\cite{eberhardt2007causation} provided the worst case bounds on the number of required interventions for different types of interventions.
\cite{hyttinen2013experiment} drew connections between causality and known separating system constructions.
\cite{eberhardt2012almost} conjectured regarding the number of intervention with targets of unbounded size sufficient and in the worst case necessary for fully identifying a causal model. The conjecture was proved in \cite{hauser2014two} where the authors provided an algorithm that finds such a set of interventions in polynomial time. 
The problem of intervention design with interventions of unbounded size is also addressed in the case that each variable has a certain cost to intervene on \citep{kocaoglu2017cost, lindgren2018experimental}.

Note that the aforementioned works mostly assume that the cardinality of the interventions could be as large as half of the order of the graph, which may render the applicability of the results infeasible for some applications.
\cite{shanmugam2015learning} considered the problem of learning a causal graph when intervention sizes are bounded by some parameter and provided a lower bound on the number of required interventions for adaptive algorithms.
We focus on a setup with singleton interventions, i.e., interventions of size 1. As will be explained in Section 4, this setup is suitable for the applications that certain variables cannot be randomized simultaneously, and also  maximizes the gain obtained from the performed randomizations.
There are other works focused on singleton interventions as well \citep{eberhardt2006n,he2008active,hauser2014two}.
%Note that our results cannot be designed by limiting the size of experiments in those approaches to one, as that would result in trivial designs for experiments, such as requiring to intervene on all the variables. 
\cite{eberhardt2006n} showed that $N-1$ experiments suffice to determine the causal relations among $N>2$ variables when each experiment randomizes at most one variable.
\cite{he2008active} proposed an adaptive algorithm to minimize the uncertainty of candidate structures based on the minimax and the maximum entropy criteria.
\cite{hauser2014two} provided a greedy adaptive approach that maximizes the number of orientable edges based on a minimax optimization.  

The problem of interventional causal structure learning is also  considered in the causally insufficient systems (i.e., with latent confounders) \citep{kocaoglu2017experimental}. There also exist works that consider the problem of adaptive intervention design using a Bayesian framework, in which a distribution over possible structures and their associated parameters is maintained \citep{tong2001active,masegosa2013interactive}.

One less usual connection to the problem of interventional structure learning when we are limited to a budget of $k$ vertices to intervene on, is with the literature concerned with the influence maximization problem. 
The goal in the influence maximization problem is to find $k$ vertices (seeds) in a given network such that under a specified influence model, the expected number of vertices influenced by the seeds is maximized \citep{kempe2003maximizing,leskovec2007cost,chen2009efficient}.   
Besides the interpretative differences, an important distinction between the two problems is that in the influence maximization  problem, the goal is to spread the influence to the vertices of the graph, while in budgeted experiment design problem, the goal is to pick the initial $k$ vertices in a way that leads to discovering the orientation of as many edges as possible.  Therefore, the optimal solution to these two problems for a given graph can be quite different (see the appendix for an example).

% Preliminaries
\section{Preliminaries}
\label{sec:prelim}

In this section we briefly review concepts and classical results from the fields of graph theory, graphical models and causal structure learning, needed in the rest of the exposition. For the definitions in this section, we mainly follow \cite{pearl2009causality}, \cite{spirtes2000causation}, and   \cite{andersson1997characterization}. 

\subsection{Graphical Notation and Terminology}

A graph $G$ is a pair $G=(V(G),E(G))$, where $V(G)$ is a finite set of vertices and $E(G)$, the set of edges, is a subset of $(V\times V)\setminus\{(a,a):a\in V\}$. If for an edge $(a,b)\in E(G)$ its opposite edge, i.e., $(b,a)$, also belongs to $E(G)$ then this edges is called an \emph{undirected edge}, and we write $a-b\in G$. If for an edge $(a,b)\in E(G)$, we have $(b,a)\not\in E(G)$, then this edge is called a \emph{directed edge}, and we write $a\rightarrow  b\in G$. In this case, vertex $a$ is called a \emph{parent} of vertex $b$ and $b$ is called a \emph{child} of $a$. The set of parents and children of vertex $a$ are denoted by $\Pa(a)$ and $\Ch(a)$, respectively. 
For vertex $a$, the set of vertices $b$ such that $(a,b)\in E(G)$ or $(b,a)\in E(G)$ is called the set of \emph{neighbors} of $a$, and is denoted by $N(a)$.
A graph is called directed if all of its edges are directed, and is called undirected if all of its edges are undirected. In a graph, a vertex is called a \emph{root} vertex if it does not have any parents. An undirected graph $G^s$, for which $V(G^s)=V(G)$ and $E(G^s)=E(G)\cup\{(a,b):(b,a)\in E(G)\}$ is called the \emph{skeleton} of $G$. For a subset of vertices $A\subseteq V(G)$ the \emph{induced subgraph} of $G$ on $A$ is the graph $G[A]\coloneqq(A,E[A])$, where $E[A]\coloneqq E(G)\cap (A\times A)$.

A sequence of distinct vertices $(a_1,a_2,...,a_m)$ is called a \emph{path} from $a_1$ to $a_m$ if for $1\le i\le m-1$, $(a_i,a_{i+1})\in E(G)$, and is called a \emph{quasi-path} from $a_1$ to $a_m$ if for $1\le i\le m-1$, $(a_i,a_{i+1})\in E(G)$ or $(a_{i+1},a_i)\in E(G)$.  
A sequence of vertices $(a_1,a_2,...,a_m=a_1)$, in which all vertices except the first vertex are distinct, is called a \emph{cycle} if for $1\le i\le m-1$, $(a_i,a_{i+1})\in E(G)$. If all the edges on a path or cycle are directed, then it is called a directed path or cycle. If at least one directed and one undirected edge belongs to a path or cycle, then it is called partially directed. Vertices which have a directed path from vertex $a$ are called the \emph{descendants} of $a$, denoted by $\textit{Desc}(a)$. Any vertex is assumed to be a descendant of itself. A directed acyclic graph (DAG) is a directed graph with no directed cycles.
A chord of a cycle is an edge not in the cycle whose endpoints are in the cycle. A hole in a graph is a cycle of length at least 4 having no chords. A graph is \emph{chordal} if it has no holes.
A graph is called a \emph{chain graph} if it contains no directed or partially directed cycles. After removing all directed edges of a chain graph, the components of the remaining undirected graph are called the \emph{chain components} of the chain graph. 
 
\subsection{Causal Bayesian Networks}
 
A Bayesian network is a probabilistic graphical model representing statistical independencies among a set of variables via a DAG. This type of graphical model is of particular interest in many applications, such as pattern recognition and economics, due to its power in facilitating efficient statistical inference. A Bayesian network is formally defined as follows:

\begin{definition}[Bayesian Network]
\label{def:BN}
Let $G=(V,E)$ be a DAG on a set of random variables $V=\{X_1,...,X_p\}$, and $P_V$ be the joint distribution of $V$.\footnote{In the sequel, we will refer to variables and their corresponding vertices in the graph interchangeably.}
The pair $(G,P_V)$ is called a Bayesian network if each variable in $G$ is independent of its non-descendants given its parents according to $P_V$ (referred to as local Markov property).
\end{definition}
Based on Definition \ref{def:BN}, in a Bayesian network $(G,P_V)$, the joint distribution $P_V$ can be factorized as follows:
\[
P_V = \prod_{X_i\in V} P_{X_i |  \Pa(X_i)},
\]
where $\textit{Pa}(X)$ denotes the set of the parents of variable $X$ in $G$.
\begin{definition}[d-separation]
	%(\cite{pearl2009causality}) 
	In a DAG $G$ a quasi-path is said to be blocked by a subset of vertices $X_S$, $S\subseteq[p]$, if 
	\begin{enumerate}
	\item the quasi-path contains an induced subgraph of form $X_a\rightarrow X_c\rightarrow X_b$ or  $X_a\leftarrow X_c\rightarrow X_b$ such that $X_c$ is in $X_S$, or
	\item the quasi-path contains an induced subgraph of form  $X_a\rightarrow X_c\leftarrow X_b$ such that $X_c$ is not in $X_S$ and no descendant of $X_c$ is in $X_S$.
	\end{enumerate}
For any two variables $X_i$ and $X_j$ and a subset of variables $X_S$, 
%we say $X_i$ is d-separated from $X_j$ conditioned on $X_S$,
we say $X_S$ d-separates $X_i$ from $X_j$,
 denoted by $(X_i~\textit{d-sep}~ X_j|X_S)$, if $X_S$ blocks every quasi-path from $X_i$ to $X_j$ on $G$.		
\end{definition}

Consider Bayesian network $(G,P_V)$. Let $\mathcal{I}(P_V)$ represent the set of all conditional independence relationships in $P_V$, and $\mathcal{I}(G)$ represent the set of all d-separations in $G$.
By definition, distribution $P_V$ satisfies the local Markov property with respect to $G$. As shown in \cite{lauritzen1996graphical}, this implies that every conditional dependency in $P_V$ is reflected in  d-separations in $G$, referred to as \emph{Global Markov property}. However, there may be conditional independencies in $P_V$ which are not reflected in $G$. If there is a one-to-one correspondence between the element of $\mathcal{I}(G)$ and $\mathcal{I}(P_V)$, then 
%$P_V$ and $G$ are called faithful to each other. 
$G$ is called a perfect I-map for distribution $P_V$.
Therefore, the following extra condition is needed:

\begin{definition}[Faithfulness condition]
The distribution $P_V$ is \emph{faithful} to  structure $G$ if for any two variables $X_i$, $X_j$, and any subset of variables $X_S\subseteq V$, we have
\[
(X_i~\textit{d-sep}~ X_j|X_S)\in\mathcal{I}(G) \textit{ if }
(X_i\perp X_j|X_S)\in\mathcal{I}(P_V).
\]
\end{definition}
For the task of learning a Bayesian network representing a given distribution, it is common in the literature to assume the given distribution satisfies Markov and faithfulness conditions with respect to a DAG \citep{koller2009probabilistic}, as in this case, data can be used to learn a DAG reflecting precisely the conditional independencies in the data.

The directed edges in a perfect I-map does not necessarily imply causation. For instance, for a joint distribution $P_V$ on variables $V=\{X_1,X_2,X_3\}$, such that $\mathcal{I}(P_V)=\{(X_1\perp X_3|X_2)\}$, all three DAGs 
$G_1:X_1\rightarrow X_2\rightarrow X_3$,
$G_2:X_1\leftarrow X_2\rightarrow X_3$, and
$G_3:X_1\leftarrow X_2\leftarrow X_3$ are perfect I-maps.
Nevertheless, the ubiquity of DAG models in statistical applications stems primarily from their causal interpretation \citep{pearl2009causality}. The goal in the field of \emph{causal structure learning} (also known as \emph{causal discovery}) is to learn a directed graph over the variables in the system, $V$, in which a directed edge $X_i\rightarrow X_j$ implies that $X_i$ is a direct cause of $X_j$ with respect to the set $V$. We use the language of structural causal models proposed by \cite{pearl2009causality} to formalize this notion.

For a given set of \emph{endogenous} variables $V=\{X_1,...,X_p\}$, a structural causal model consists of a set of equations of the form 
\begin{equation}
\label{eq:SCM}
X_i=f_i(\Pa(X_i),N_i),\hspace{15mm}1\le i \le p,
\end{equation}
where $\Pa(X_i)\subseteq V\setminus\{X_i\}$ denotes the set of direct causes of variable $X_i$, and $N_i$ is the \emph{exogenous} variable corresponding to $X_i$, representing noise or disturbance. The equation in \eqref{eq:SCM} should be understood as a generating mechanism, and sometimes the notation $X_i\leftarrow f_i(\Pa(X_i),N_i)$ is used.
Consider the directed graph generated by drawing a directed edge from each element of $\Pa(X_i)$ to $X_i$, for all $i\in[p]$. The resulting directed graph $G$ is called the \emph{causal diagram}.
If the causal diagram is acyclic and the exogenous variables are jointly independent, then the model induces a distribution $P_V$ on the endogenous variables that satisfies the local Markov property with respect to $G$ \cite{pearl1995theory}. Therefore, the pair $(G,P_V)$ is a Bayesian network referred to as \emph{causal Bayesian network}. In this paper, we assume that the causal diagram is always a DAG.

%%%%%%%%%%%%% THESIS ALERT %%%%%%%%%%%%%
%For your thesis, explain the details of the last sentence from Pearl's book, page 30. and connect it with the following sentence to modularity: The advantages of working with with CBN instead of BN: modularity. prediction, distrivution shift. Therefore it is of high interes to learn the CBN.

%%%%%%%%%%%%%%%%%%%%%%%%%%%%%%%%%%%%%%%%

\begin{sloppypar}

Two DAGs $G_1$ and $G_2$ are called \emph{Markov equivalent} if $\mathcal{I}(G_1)=\mathcal{I}(G_2)$. 
\cite{verma1990equivalence} proposed a graphical test for Markov equivalence among DAGs:
 Define a v-structure of graph $G$ as a triple of vertices $(a,b,c)$, with induced subgraph $a\rightarrow c\leftarrow b$. Markov equivalence can be tested as follows:
 \end{sloppypar}
 
\begin{lemma}
\label{lem:verma}
\citep{verma1990equivalence} Two DAGs are Markov equivalent if and only if they have the same skeleton and v-structures.
\end{lemma}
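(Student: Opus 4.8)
The plan is to prove the two directions of the equivalence separately. Throughout, write $\mathcal{I}(G)$ for the set of d-separation statements of a DAG $G$; recall $(X_i \perp X_j \mid X_S) \in \mathcal{I}(G)$ holds exactly when $X_S$ blocks every quasi-path between $X_i$ and $X_j$ in $G$, and that a consecutive triple $(a,c,b)$ along such a quasi-path is a \emph{collider} when $a\rightarrow c\leftarrow b$ and a \emph{non-collider} otherwise; a v-structure is precisely a collider triple whose endpoints are moreover non-adjacent in the skeleton.

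For the ``only if'' direction, assume $\mathcal{I}(G_1)=\mathcal{I}(G_2)$. First, the skeletons must agree: if $a$ and $b$ are adjacent in $G_1$ then the single edge joining them is a quasi-path with no interior vertex, hence blocked by no $X_S$, so no set d-separates $a$ from $b$ in $G_1$; by hypothesis the same is true in $G_2$, and in a DAG two vertices that cannot be d-separated must be adjacent, so $a-b\in G_2$, and symmetrically. Given equal skeletons, suppose $a\rightarrow c\leftarrow b$ is a v-structure of $G_1$, so $a\not\sim b$. By acyclicity we may assume $b$ is a non-descendant of $a$ in $G_1$; then $X_S:=\Pa_{G_1}(a)$ d-separates $a$ from $b$ in $G_1$ (this is the d-separation form of the local Markov property at $a$), and since $c$ is a child of $a$ we have $c\notin X_S$. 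If $(a,c,b)$ were not a v-structure of $G_2$, then — as $a\not\sim b$ in $G_2$ as well — the vertex $c$ would be a non-collider on the skeleton path $a-c-b$ of $G_2$, and because $c\notin X_S$ that path would be active given $X_S$ in $G_2$, contradicting $(X_a\perp X_b\mid X_S)\in\mathcal{I}(G_1)=\mathcal{I}(G_2)$. Hence every v-structure of $G_1$ is one of $G_2$, and by symmetry the v-structures coincide.

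For the ``if'' direction, assume $G_1$ and $G_2$ share skeleton and v-structures; it suffices to prove $\mathcal{I}(G_1)\subseteq\mathcal{I}(G_2)$ and then appeal to the symmetry of the hypotheses. Arguing by contraposition, suppose some quasi-path between $X_i$ and $X_j$ is active given $X_S$ in $G_2$, and pick such a path $p$ of minimum length; the goal is to show $p$ is active given $X_S$ in $G_1$ too, contradicting $(X_i\perp X_j\mid X_S)\in\mathcal{I}(G_1)$. Since the skeletons agree, $p$ is a quasi-path in $G_1$, so I only need to check that each interior triple of $p$ that is a non-collider in $G_2$ remains a non-collider in $G_1$ and each active collider of $p$ in $G_2$ remains an active collider in $G_1$. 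When the two path-neighbours of an interior vertex are non-adjacent in the skeleton, the triple is a v-structure in $G_1$ iff in $G_2$, so its collider status is the same in both graphs, and the associated activation condition ($c\notin X_S$ for a non-collider) transfers verbatim; when the path-neighbours are adjacent, one shows that either $p$ can be shortened by rerouting through that chord — impossible by minimality of $p$ — or else the orientations of the two incident path-edges are forced by v-structures in which the neighbouring vertices participate, which again fixes the status. Finally, for each active collider $c$ of $p$ one must reproduce in $G_1$ a directed path witnessing that $c$ has a descendant in $X_S$; this is done by passing to a chordless directed witness and orienting it via the shared v-structures. Combining these observations shows $p$ is active given $X_S$ in $G_1$, completing the contrapositive.

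The hard part is this last direction, concretely the bookkeeping for active colliders: the relation ``$c$ has a descendant in $X_S$'' is not literally preserved between $G_1$ and $G_2$, so one must pass to minimal (chordless) directed witnesses and argue their orientations are pinned down by the common v-structures, and one must carefully enumerate the cases in which an interior vertex of $p$ has adjacent path-neighbours (where the naive rerouting can change the role of an endpoint of the rerouted segment). If that case analysis proves unwieldy, two cleaner alternatives are available: recast d-separation through the moralized-ancestral-graph criterion — $(X_i\perp X_j\mid X_S)\in\mathcal{I}(G)$ iff $X_i$ and $X_j$ are separated by $X_S$ in the moral graph of $G$ restricted to the smallest ancestral set containing $\{X_i,X_j\}\cup X_S$ — and show the relevant moralized graphs coincide for $G_1$ and $G_2$; or reduce to the normal form in which $G_1$ is reached from $G_2$ by a sequence of covered-edge reversals, each of which is known to preserve $\mathcal{I}(\cdot)$.
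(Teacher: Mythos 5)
The paper does not actually prove this lemma --- it is quoted from Verma and Pearl (1990) --- so your attempt has to be judged on its own. Your ``only if'' direction is correct and standard. The genuine gap is in the ``if'' direction: the central claim you set out to establish --- that a \emph{minimum-length} path between $X_i$ and $X_j$ that is active given $X_S$ in $G_2$ is itself active in $G_1$ --- is false, and the patch you sketch for colliders (pass to a chordless directed witness whose orientation is ``pinned down by the shared v-structures'') fails for exactly the same reason. Counterexample: vertices $\{x,y,c,d\}$ with skeleton $x-c$, $y-c$, $x-d$, $y-d$, $c-d$ and $x\not\sim y$; in both graphs orient $x\to c$, $y\to c$, $x\to d$, $y\to d$, and orient $c\to d$ in $G_2$ but $d\to c$ in $G_1$. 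Both are DAGs with the same skeleton and the same v-structures ($x\to c\leftarrow y$ and $x\to d\leftarrow y$); they differ by a covered-edge reversal. Take $X_S=\{d\}$. In $G_2$ the length-two path $(x,c,y)$ is active (the collider $c$ has descendant $d\in X_S$) and is of minimum possible length since $x\not\sim y$, yet in $G_1$ this same path is blocked, because in $G_1$ the vertex $c$ has no descendant in $X_S$: the one-edge witness $c\to d$ is not forced by any v-structure and simply reverses. The d-connection in $G_1$ is instead witnessed by a \emph{different} path, $(x,d,y)$. So no argument that keeps the same path, however minimal, can close this direction; one must either construct a new active path (e.g., rerouting through the conditioning set when a collider's descendant-witness changes) or abandon the path-by-path strategy altogether.

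The two ``cleaner alternatives'' you mention are not yet proofs either. The moralized-ancestral-graph route does not go through as stated: the smallest ancestral set containing $\{X_i,X_j\}\cup X_S$ can differ between two DAGs with equal skeletons and v-structures (already for a fully oriented triangle and its reversal), so ``the relevant moralized graphs coincide'' is false in general --- only the separation \emph{statements} coincide, which is precisely what needs proving. The covered-edge-reversal route (Chickering's transformational characterization) is a correct known proof, but the existence of such a sequence of reversals between any two DAGs with equal skeletons and v-structures is itself the substantive content, and you only cite it. As written, the ``if'' direction remains a sketch resting on a false key step; the ``only if'' half stands.
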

For a given DAG $G$, the \emph{Markov equivalence class} (MEC) of $G$ is defined as 
\[
\MEC(G)=\{G': G'\textit{ is DAG, and }\mathcal{I}(G')=\mathcal{I}(G)\}.
\]
That is, the set of all DAGs, which are Markov equivalent with $G$. 
$\MEC(G)$ can be uniquely represented by a graph $\tG=(V(\tG),E(\tG))$, called the \emph{essential graph} corresponding to $\MEC(G)$, for which $V(\tG)=V(G)$, and 
\[
E(\tG)=\bigcup_{G'\in\MEC(G)}E(G').
\]
In other words, an essential graph has the same vertices and skeleton as its members of the corresponding MEC, the directed edges are those that have the same direction in all members of the  class \citep{andersson1997characterization}.
See Figure \ref{fig:exMEC} for an example of all the elements of a MEC and the essential graph corresponding to the MEC.
With a slight abuse of notation, we denote the MEC corresponding to essential graph $\tG$ by $\MEC(\tG)$. Essential graphs are also referred to as completed partially directed acyclic graphs (CPDAGs) \citep{chickering2002optimal}, and maximally oriented graphs \citep{meek1995causal}.  \cite{andersson1997characterization} proposed a graphical criterion for characterizing an essential graph. They showed that an essential graph is a chain graph in which every chain component is chordal. 
As a corollary of Lemma \ref{lem:verma}, for an essential graph $G$, no DAG in $\MEC(G)$ can contain a v-structure in the subgraphs corresponding to chain components of $G$.
In order to obtain the essential graph from observational data, one can first learn the skeleton and v-structures of the underlying DAG using conditional independence tests, and then apply the Meek rules \citep{meek2013causal} to learn the direction of the rest of the directed edges of the essential graph in polynomial time. The Markov and faithfulness assumptions guarantee that the essential graph can be learned from the population dataset.

 \begin{figure}[t]
\begin{center}
\includegraphics[scale=0.25]{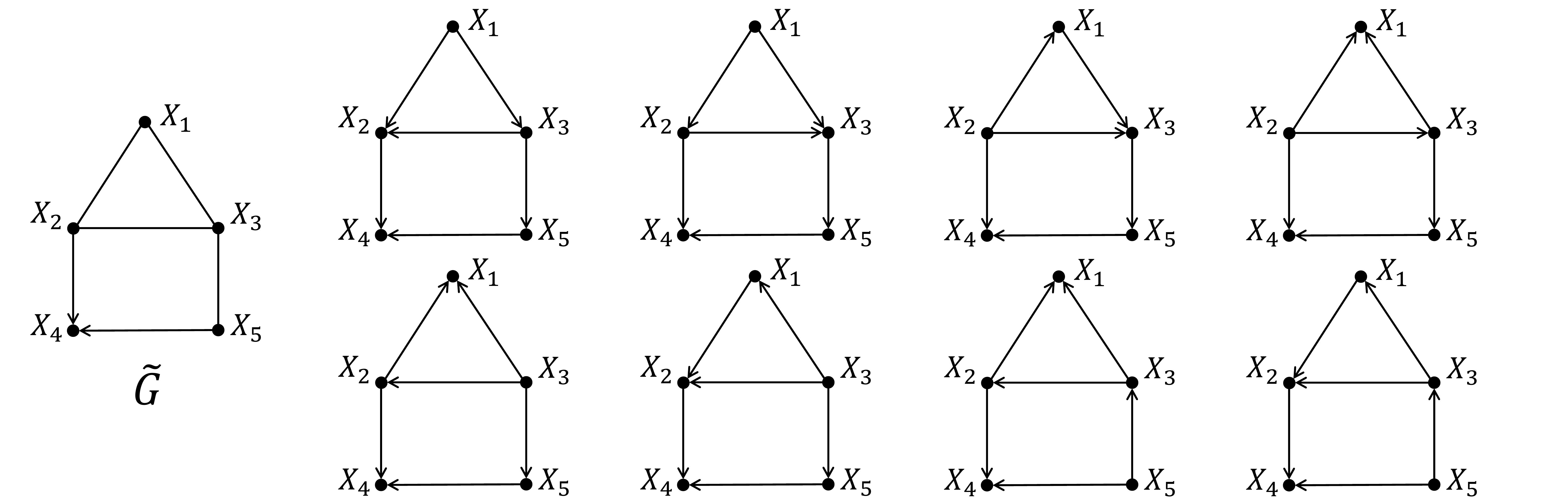}
\caption{Example of the members and the essential graph corresponding to a MEC.}
\label{fig:exMEC}
\end{center}
\end{figure}

\cite{he2008active} observed that the orientation for one chain component does not affect the orientations for other components. Therefore, each chain component can be considered as an essential graph independent of the other components. We call such an essential graph an undirected connected essential graph (UCEG). Note that a UCEG $\tG$ is  chordal and no DAGs in its corresponding equivalence class $\MEC(\tG)$ is allowed to have any v-structures. Each DAG in $\MEC(\tG)$ has exactly one root variable:
\begin{lemma}
\label{lem:root}
Any v-structure-free connected DAG has exactly one root variable.
\end{lemma}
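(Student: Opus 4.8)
The plan is to treat existence and uniqueness of a root separately; only the uniqueness part uses connectedness and the absence of v-structures. For existence, I would note that any (nonempty) finite DAG has at least one root: if every vertex had a parent, then starting from an arbitrary vertex and repeatedly passing to a parent would, by finiteness, revisit some vertex and hence exhibit a directed cycle, contradicting acyclicity. So the real content of the lemma is that there cannot be two distinct roots.

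For uniqueness I would argue by contradiction. Suppose the DAG $G$ is connected and v-structure-free but has two distinct roots $a$ and $b$. Connectedness gives a quasi-path between $a$ and $b$; among all such quasi-paths fix one of minimum length, say $a=v_0,v_1,\dots,v_m=b$. Minimality forces that $v_i$ and $v_j$ are non-adjacent in $G$ whenever $|i-j|\ge 2$, since otherwise the quasi-path could be shortened via a chord. Because $G$ is a directed graph, for each $1\le i\le m$ exactly one of the directed edges $v_{i-1}\to v_i$ or $v_i\to v_{i-1}$ is present; I will call the $i$-th step \emph{forward} in the first case and \emph{backward} in the second.

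Next I would pin down the two end steps using that $a$ and $b$ are roots. Since no edge points into $a=v_0$, the first step is forward; since no edge points into $b=v_m$, the last step is backward. If $m=1$ this is an immediate contradiction, as the single step would be both forward and backward. If $m\ge 2$, let $\ell$ be the largest index for which the $\ell$-th step is forward; then $1\le \ell\le m-1$, and by maximality the $(\ell+1)$-st step is backward, i.e. $v_{\ell+1}\to v_\ell$. Together with $v_{\ell-1}\to v_\ell$ this shows that the induced subgraph of $G$ on the three vertices $v_{\ell-1},v_\ell,v_{\ell+1}$ is exactly $v_{\ell-1}\to v_\ell\leftarrow v_{\ell+1}$ — it is genuinely induced precisely because $v_{\ell-1}$ and $v_{\ell+1}$ are non-adjacent by minimality of the quasi-path. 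Hence $(v_{\ell-1},v_\ell,v_{\ell+1})$ is a v-structure of $G$, contradicting the hypothesis, which completes the proof.

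The step I expect to require the most care is this last one: guaranteeing that the orientation reversal forced along the path is witnessed by a genuine (induced) v-structure, rather than merely a directed subpath with a colliding middle vertex. This is exactly what the shortest-quasi-path choice delivers through the no-chord property, so it deserves to be stated explicitly; the remainder is routine bookkeeping about the orientation of consecutive edges along the path, together with the trivial $m=1$ base case.
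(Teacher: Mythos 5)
Your proof is correct. Both halves are sound: existence of a root follows from finiteness and acyclicity exactly as you say, and your uniqueness argument is airtight — the shortest (hence chordless, i.e.\ induced) quasi-path between two supposed roots must begin with a forward step and end with a backward step, so at the last forward step you obtain $v_{\ell-1}\rightarrow v_\ell\leftarrow v_{\ell+1}$ with $v_{\ell-1}$ and $v_{\ell+1}$ non-adjacent, which is a genuine v-structure and contradicts the hypothesis. You are also right to flag the inducedness of the collider as the delicate point; the minimality of the quasi-path is precisely what rules out a chord that would make the triple non-induced.

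Note that the paper itself does not prove this lemma; it defers to the cited reference (Bernstein and Tetali's sampling paper), so there is no in-paper argument to compare against. Your write-up is a clean, self-contained replacement: it uses only the paper's own definitions (quasi-path, v-structure, root) plus acyclicity, and in particular does not need chordality of the skeleton, only connectedness and the absence of v-structures, which is exactly the stated hypothesis.
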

See \citep{bernstein2017sampling} for a proof.

%%%%%%%%%%%%% THESIS ALERT %%%%%%%%%%%%%
%For your thesis, talk about the fact that the elents do not need to be DAGs and say this is discussed in Chapter cyclic :)

%%%%%%%%%%%%%%%%%%%%%%%%%%%%%%%%%%%%%%%%

Suppose a joint distribution satisfying Markov and faithfulness conditions to the ground truth causal DAG $G^*$ is given.
Without any assumptions on the type of the functions or the distribution of the exogenous variables in the underlying structural causal model in \eqref{eq:SCM}, the ground truth causal DAG can be identified only up to its Markov equivalence \citep{spirtes2000causation,pearl2009causality}. Hence, the direction of all the edges in the chain components of the essential graph corresponding to $\MEC(G^*)$ will remain unresolved. 
Use interventional experiments is a common method to go beyond Markov equivalence and differentiate among the causal structures within a MEC.
We present a formal definition for an interventional experiment in the following subsection.

%A DAG $G$ is called causal if its vertices represent $n$ random variables and a directed edge $(u,v)$ indicates that variable corresponding to vertex $u$, denoted by $X_u$ is a direct cause of the variable corresponding to vertex $v$, denoted by $X_v$. We will refer to variables and their corresponding vertices interchangeably. 

%\begin{definition} 
%A complete conditional independence-based (CCI) algorithm is an observational structure learning algorithm resulting in learning the Markov equivalence class of the ground truth DAG.
%\end{definition}

\subsection{Interventional Structure Learning}

As mentioned earlier, in general, from a single joint distribution over a set of variables, the ground truth causal structure can be identified up to Markov equivalence. An interventional experiment is the process of perturbing the causal system to generate extra joint distributions over the variables to enable the experimenter to improve the identifiability either merely from the new interventional distributions, or from comparing the original and interventional distributions.

Interventions are generally divided into two types of hard interventions and soft interventions.
In a hard intervention on a variable $X$, all the influences on $X$ are removed and a new value or distribution is forced on $X$, while in a soft intervention on $X$, this variable will still be influenced by its original causes after the intervention. Below, we provide a formal definition of an intervention, in which we mainly follow \cite{eberhardt2007causation}. %and \cite{spirtes2000causation}. 

Consider a causal Bayesian network $(G,P_V)$ on a set of variables $V=\{X_1,...,X_p\}$ with observational joint distribution $P_V$. Let $X_T$ be the subset of $V$ that are subject to intervention, called the intervention target set, and for $i\in T$, let $W_i$ be the \emph{intervention variable} corresponding to $X_i$. Intervention variables are jointly independent, are not influenced by any of the variables in the system, and for all $i\in T$, $W_i$ directly influences only $X_i$.
A passive observation is considered to be an intervention with empty target set.

\begin{definition}[Hard Intervention]
\label{def:hard}
 A hard intervention $I=(X_T,W_T)$ on $X_T$, for all $i\in T$
 breaks the causal influence from $\Pa(X_i)$ to $X_i$, i.e., makes $X_i$ independent of $\Pa(X_i)$, and sets the intervention variable $W_i$ as the only direct cause of $X_i$.
For all $i\in T$, $W_i$ determines the distribution of $X_i$, that is, in the factorized joint distribution, replaces the term $P_{X_i|\Pa(X_i)}$ with $P^{(I)}_{X_i}$. 
In the language of structural causal model, $I=(X_T,W_T)$ replaces $X_i=f_i(\Pa(X_i),N_i)$ with $X_i=f^{(I)}_i(W_i,N_i)$, for all $i\in T$.
Graphically, for all $i\in T$, it removes the directed edges from $\Pa(X_i)$ to $X_i$, and sets the intervention variable $W_i$ as the only parent of $X_i$ to form the interventional graph $G^{(I)}$.
\end{definition}

Intervention $I$ changes the joint distribution of $X_T$ and all variables in the system for which an element of $X_T$ is a direct or indirect cause, and results in an interventional joint distribution $P^{(I)}_{V}$. The resulting interventional joint distribution can be factorized as follows:
\[
P^{(I)}_{V}=\prod_{X_i\in X_T}P^{(I)}_{X_i}\prod_{X_i\in V\setminus X_T}P_{X_i|\Pa(X_i)}.
\]
As a specific example of a hard intervention, one can choose $W_i$ to have the same support as the support of $X_i$,
 and forces random values of $W_i$ to $X_i$ via $X_i=W_i$.
 Hard intervention or its variations are also referred to as surgical interventions \citep{pearl2009causality}, ideal interventions \citep{spirtes2000causation}, independent interventions \citep{korb2004varieties}, and structural interventions \citep{eberhardt2007causation} in the literature.

\begin{definition}[Soft Intervention]
  A soft intervention $I=(X_T,W_T)$ on $X_T$, for all $i\in T$ adds the intervention variable $W_i$ as an extra direct cause to $X_i$.
For all $i\in T$, $W_i$ directly influences the distribution of $X_i$, that is, in the factorized joint distribution, replaces the term $P_{X_i|\Pa(X_i)}$ with $P^{(I)}_{X_i|\Pa(X_i)}$, where $P_{X_i|\Pa(X_i)}\neq P^{(I)}_{X_i|\Pa(X_i)}$. 
In the language of structural causal model, $I=(X_T,W_T)$ replaces $X_i=f_i(\Pa(X_i),N_i)$ with $X_i=f^{(I)}_i(\Pa(X_i),W_i,N_i)$, for all $i\in T$.
Graphically, for all $i\in T$, it adds the intervention variable $W_i$ as a parent of $X_i$ to form the interventional graph $G^{(I)}$.
\end{definition}
The resulting interventional joint distribution can be factorized as follows:
\[
P^{(I)}_{V}=\prod_{X_i\in X_T}P^{(I)}_{X_i|\Pa(X_i)}\prod_{X_i\in V\setminus X_T}P_{X_i|\Pa(X_i)}.
\]
Soft intervention or its variations are also referred to as dependent interventions \citep{korb2004varieties}, and parametric interventions \citep{eberhardt2007causation} in the literature.

\cite{eberhardt2007causation} provided a more general definition of intervention than what we presented here. Compared to Eberhardt's definition, we do not allow the intervention variables to be confounded by the variables in the system. Also, we do not allow one intervention variable to influence more than one variable of the system, i.e., in our setup simultaneous intervention on two variables require two independent intervention variables.

Neither hard nor soft intervention can be considered as the more general notion of intervention, and either of them can be more practical depending on the application. For instance, in a medical study on the effect of alcohol on blood pressure, if the target variable is the amount of alcohol consumption, it is often feasible to assign a certain value to this variable regardless of other factors which may influence it. However, if the target is the blood pressure, it is not feasible to remove all the other causes of this target variable, yet the value of one of the known causes can be perturbed. In fact, performing a soft intervention is often more challenging \citep{eberhardt2007causation}. This is due to the fact that any change in the system may lead to removing a subset of the other causes of the target variable. 

For an intervention $I$, the cardinality of the intervention target set, i.e., $|T|$, is referred to as the size of the intervention $I$. An intervention is called \emph{singleton} if it has size equal to one. We define an \emph{experiment} of size $k$ as a sequence of $k$ interventions $\mathcal{E}=\{I_1,...,I_k\}$. 
%An experiment is called \emph{deterministic} if the target set in each intervention is specified with probability 1, otherwise it is called \emph{non-deterministic}. 
An experiment is called \emph{adaptive} if in the sequence of interventions, the information obtained from the previous interventions is used to design the next one, otherwise it is called \emph{non-adaptive}, in which the intervention sequence is determined before any data is collected.
A non-adaptive experiment gives the experimenter the ability to perform the interventions in parallel without the need to wait for the result of one intervention to choose the next one. For example, in the study of gene regulatory networks (GRNs), when the GRN of all cells are the same, interventions can be performed simultaneously on different cells. Furthermore, as observed by \cite{eberhardt2005number}, in the worst case, no adaptive experiment design can reduce the number of interventions required for structure learning.

%\cite{hauser2012characterization, yang2018characterizing} extended the notion of Markov equivalence to the interventional case. For an intervention $I$, DAGs $G_1$ and $G_2$ are $I$-Markov equivalent if $G_1^{(I)}$ and $G_2^{(I)}$ are Markov equivalent. Based on this notion of equivalence, $I$-essential graph $\tG^{(I)}$ and $I$-Markov equivalence class $\MEC(\tG^{(I)})$ are defined similar to the observational case.

\cite{hauser2012characterization} and \cite{yang2018characterizing} extended the notion of Markov equivalence to the interventional case. For an experiment $\E$, DAGs $G_1$ and $G_2$ are interventional Markov equivalent if $G_1^{(I)}$ and $G_2^{(I)}$ are Markov equivalent for all $I\in\E$. Based on this notion of equivalence, interventional Markov equivalence class and interventional essential graph are defined similar to the observational case.

In the next section we formally define the problem setup and our assumptions for interventional causal structure learning.

% Problem Description
\section{Problem Description}
\label{sec:desc}

We study the problem of causal structure learning over a set of $p$ endogenous variables $V=\{X_1,...,X_p\}$, with ground truth causal structure $G^*$ using interventions. Similar to \cite{he2008active},  \cite{shanmugam2015learning}, and \cite{kocaoglu2017cost}, we consider the case that observational data is available and hence, the interventions can be designed based on the output of an initial passive observational stage. This implies that on the population dataset, we design the interventions with side information about the MEC of the ground truth causal structure.

We consider a setup in which we are given a budget of $k$ interventions, and we design the interventions with the goal of discovering the direction of as many edges as possible in the causal graph. Interventions are designed non-adaptively, that is, each intervention is performed regardless of the information gained from the other interventions.
Note that an adaptive experiment design is a special case of our problem: In an adaptive setup, given the information deduced from the collected data, the next intervention is designed. Therefore, this setup is equivalent to ours when $k=1$. Equivalently, our setup could be considered as an extension of adaptive experiment design when the interventions are design in batches of size $k$.

\begin{sloppy}

After performing each intervention $I_i$, data is collected from interventional joint distribution $P^{(I_i)}_V$. Eventually, the observational data and the data gathered from interventions is used for the final output of the procedure. We use the GIES algorithm \citep{hauser2012characterization} for this final step.

\end{sloppy}

We assume that all the interventions should be singleton, i.e., each intervention should have size equal to one. This is beneficial since in some applications, the experimenter may not be able to randomize certain variables simultaneously.
Note that most of the literature assume that the size of each intervention is larger than one, in some cases going as high as half of the number of variables \citep{eberhardt2005number, eberhardt2012almost, hauser2014two, kocaoglu2017cost}. 
Therefore, the set of $k$ variables $\I=\{X_{I_1},...,X_{I_k}\}$ contains all the information to describe the targets in the experiment, where $X_{I_i}$ is the single variable intervened on in intervention $I_i$. We call the set $\I$ the target set of the experiment.
We denote the interventional MEC containing DAG $G$ by $\I$-$\MEC(G)$. %, and the corresponding interventional essential graph by $\tG^{(\I)}$. 
Note that the passive observational experiment is contained in the experiment set, i.e., $\I$-$\MEC(G)$ contains all graphs $G'$, such that $G'$ is Markov equivalent to $G$ and $G'^{(I_i)}$ is Markov equivalent to ${G}^{(I_i)}$, for all singleton interventions $I_i$, $1\le i\le k$. 
%\noindent
We have the following assumptions in this work:

\begin{assumption}
\label{assumption:DAG}
The ground truth causal structure $G^*$ is a DAG and exogenous variables in the structural causal model are jointly independent.
\end{assumption}
\begin{assumption}
\label{assumption:faithful}
The observational and interventional joint distributions satisfy Markov and faithfulness conditions with respect to their corresponding observational and interventional DAGs.
\end{assumption}
\begin{assumption}
\label{assumption:oracle}
The correct essential graph $\tG^*$ can be learned from the initial observational dataset.
%The initial observational stage leads to identifying the  equivalence class $\MEC(G^*)$, and the experiment leads to identifying the equivalence class $\I$-$\MEC(G^*)$.
\end{assumption}

Under Assumptions \ref{assumption:DAG}-\ref{assumption:oracle}, we have the following result regarding the effect of a singleton intervention.

\begin{lemma}
\label{lem:neigh}
Having the observational essential graph $\tG^*$, a singleton intervention (hard or soft) on variable $X_i$  identifies the direction of all edges incident with $X_i$.
\end{lemma}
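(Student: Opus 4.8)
The plan is to recast the claim in terms of interventional Markov equivalence. Under Assumption~\ref{assumption:faithful}, an edge direction is identified by the experiment consisting of the passive observation together with $I$ exactly when all members of the corresponding interventional MEC agree on its orientation. Since $G^*$ itself belongs to this class and every member $H$ of it satisfies $H^{(I)}\equiv G^{*(I)}$ by the definition of interventional Markov equivalence, it is enough to prove the following: if $G_1,G_2\in\MEC(G^*)$ disagree on the orientation of an edge incident with $X_i$, then $G_1^{(I)}$ and $G_2^{(I)}$ are not Markov equivalent. Granting this, no two members of the interventional MEC can disagree on such an edge, so every edge incident with $X_i$ is directed in the interventional essential graph (edges already oriented in $\tG^*$ being trivially identified). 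So fix such a pair $G_1,G_2$ and, relabeling if necessary, assume $X_i\rightarrow X_j\in G_1$ and $X_j\rightarrow X_i\in G_2$; note that $X_i-X_j$ lies in the common skeleton shared by all members of $\MEC(G^*)$ (Lemma~\ref{lem:verma}).

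For a hard intervention on $X_i$, I would argue at the level of skeletons. By Definition~\ref{def:hard}, forming $G_2^{(I)}$ deletes every edge entering $X_i$, in particular the edge $X_j\rightarrow X_i$, so $X_i$ and $X_j$ are non-adjacent in $G_2^{(I)}$. On the other hand, $X_i\rightarrow X_j$ leaves $X_i$ rather than entering it, so it is untouched and $X_i-X_j$ still appears in the skeleton of $G_1^{(I)}$. Hence $G_1^{(I)}$ and $G_2^{(I)}$ have different skeletons, and by Lemma~\ref{lem:verma} they are not Markov equivalent.

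For a soft intervention on $X_i$, the two interventional graphs share a skeleton (each merely acquires the edge $W_i-X_i$), so I would compare v-structures instead. The intervention variable $W_i$ is adjacent only to $X_i$, hence non-adjacent to $X_j$. In $G_2^{(I)}$ the induced subgraph on $\{W_i,X_i,X_j\}$ is $W_i\rightarrow X_i\leftarrow X_j$, so the triple $(W_i,X_i,X_j)$ is a v-structure; in $G_1^{(I)}$ that induced subgraph is the chain $W_i\rightarrow X_i\rightarrow X_j$, which is not a v-structure. Thus $G_1^{(I)}$ and $G_2^{(I)}$ have different v-structures, and again Lemma~\ref{lem:verma} shows they are not Markov equivalent.

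In either case the assumed disagreement between $G_1$ and $G_2$ contradicts $G_1^{(I)}\equiv G_2^{(I)}$, so no such pair exists and the lemma follows. I do not anticipate a genuine obstacle here; the only points requiring care are clerical --- treating $W_i$ as a bona fide vertex of the interventional graph that is adjacent to $X_i$ and to nothing else (exactly as the definitions of hard and soft intervention stipulate), and recalling that ``the edge $X_i-X_j$'' is well defined across $\MEC(G^*)$ precisely because all DAGs in a Markov equivalence class share one skeleton.
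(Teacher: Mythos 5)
Your argument is correct, but it proves the lemma by a genuinely different route than the paper. You work entirely at the graphical level of interventional Markov equivalence: taking two DAGs $G_1,G_2\in\MEC(G^*)$ that disagree on an edge at $X_i$, you show their interventional graphs cannot be Markov equivalent --- for a hard intervention because Definition~\ref{def:hard} deletes the edge entering $X_i$ in one graph but not the outgoing edge in the other (different skeletons), and for a soft intervention because the triple $(W_i,X_i,X_j)$ is a v-structure in exactly one of them --- and then invoke Lemma~\ref{lem:verma} and the definition of the interventional essential graph to conclude that every edge at $X_i$ is oriented in $\tG^{(\I)}$. The paper instead gives a direct distributional argument: it exhibits the conditional-independence signature that distinguishes the two orientations in the interventional distribution, namely that $W_i\perp X_j\mid X_S$ for some set $X_S$ containing $X_i$ when $X_i\rightarrow X_j$, while $W_i\not\perp X_j\mid X_S$ for every such $X_S$ when $X_j\rightarrow X_i$, and notes this test works for both hard and soft interventions. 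The trade-off is that the paper's proof displays the concrete statistical test and rests only on Assumption~\ref{assumption:faithful}, whereas yours delegates the link between ``all members of the $\I$-MEC agree'' and ``identified'' to the interventional-equivalence framework of \cite{hauser2012characterization,yang2018characterizing}; since the paper itself defines identification through the interventional essential graph (this is exactly how $R(\I,G_i)$ is defined), that delegation is legitimate, and your purely graphical case analysis is arguably cleaner and makes explicit why the hard and soft cases succeed for different reasons (skeleton change versus new v-structure).
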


\cite{eberhardt2005number} and \cite{he2008active} provided the same result as in Lemma \ref{lem:neigh} with different proofs. Also, \cite{eberhardt2005number} observed that given the essential graph resulted from the passive observational stage, a hard intervention $I$ allows orientating the undirected edge $X_i- X_j$ if only one of $X_i$ and $X_j$ is in the target set of $I$. If both $X_i$ and $X_j$ are targeted in the intervention, this intervention is called a \emph{zero-information} intervention for the pair $\{X_i,X_j\}$. 
Our setup in which $|I_i|=1$, for all $i\in\{1,...,k\}$, avoids such zero-information experiments. Therefore, another advantage of forcing singleton interventions is that there will be no zero-information interventions in the experiment and hence, we gain the most from each randomization.
We note that a zero-information intervention does not happen for the case of soft interventions:

\begin{lemma}
\label{lem:soft}
A sequence of $k$ singleton soft interventions is equivalent to one soft intervention of size $k$ on the same targets.
\end{lemma}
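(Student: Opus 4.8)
The plan is to interpret ``equivalent'' in the sense of interventional Markov equivalence, i.e.\ to show that the two experiments induce the same interventional Markov equivalence class for every ground-truth DAG. Concretely, write $\E=\{I_1,\dots,I_k\}$ for the sequence of singleton soft interventions with targets $X_{I_1},\dots,X_{I_k}$, and $I^\star$ for the single soft intervention on the target set $\{X_{I_1},\dots,X_{I_k}\}$, identifying the $j$-th intervention variable $W_{I_j}$ of $I^\star$ with the intervention variable of $I_j$. Recall that a soft intervention on $X_T$ builds $G^{(I)}$ from $G$ by adjoining, for each $i\in T$, a fresh degree-one source vertex $W_i$ together with the single edge $W_i\to X_i$; so $G^{(I_j)}$ is $G$ with one such vertex attached to $X_{I_j}$, whereas $G^{(I^\star)}$ is $G$ with one such vertex attached to each $X_{I_j}$. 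Since interventional Markov equivalence is defined by requiring Markov equivalence of the interventional graphs for each intervention in the experiment (the passive observation, i.e.\ the empty target, included), by Lemma \ref{lem:verma} the whole claim reduces to comparing the skeletons and v-structures of these interventional graphs.

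Next I would record the effect of attaching the new source vertices. No edge is ever deleted, so the skeleton of $G^{(I)}$ is the skeleton of $G$ together with the edges $\{W_i-X_i:i\in T\}$, and every v-structure of $G$ survives in $G^{(I)}$. Since each $W_i$ is a source whose only neighbour is $X_i$, it can never be a collider, and one checks that the only v-structures of $G^{(I)}$ absent from $G$ are those of the form $W_i\to X_i\leftarrow X_m$ with $i\in T$ and $X_m$ a parent of $X_i$ in $G$ (non-adjacency of $W_i$ and $X_m$ being automatic). Denote this ``new'' collection at target $X_i$ by $\mathcal{V}_i(G)$; it is just a device recording the orientation of all edges incident with $X_i$ in $G$. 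Thus the v-structures of $G^{(I)}$ are the disjoint union of the v-structures of $G$ and $\bigcup_{i\in T}\mathcal{V}_i(G)$, and for $i\ne i'$ the sets $\mathcal{V}_i(G)$ and $\mathcal{V}_{i'}(G)$ involve different intervention vertices, hence are disjoint.

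Combining this with Lemma \ref{lem:verma}: for any target set $T$ and any DAGs $G_1,G_2$ on $V$, the graphs $G_1^{(I)}$ and $G_2^{(I)}$ for the soft intervention on $X_T$ are Markov equivalent if and only if $G_1$ and $G_2$ have the same skeleton, the same v-structures, and $\mathcal{V}_i(G_1)=\mathcal{V}_i(G_2)$ for every $i\in T$ --- the split of the union condition into one condition per $i$ using the disjointness just noted. Applying this with $T=\{I_j\}$ for each $j$ shows that $G_1$ and $G_2$ are interventional Markov equivalent with respect to $\E$ exactly when they share skeleton and v-structures and $\mathcal{V}_{I_j}(G_1)=\mathcal{V}_{I_j}(G_2)$ for all $j$; applying it with $T=\{X_{I_1},\dots,X_{I_k}\}$ shows that $G_1$ and $G_2$ are interventional Markov equivalent with respect to $\{I^\star\}$ under exactly the same three conditions. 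Hence the two experiments partition the DAGs into identical interventional Markov equivalence classes, which is the statement. I would add the remark that softness is essential: for a \emph{hard} intervention the new source at $X_i$ is attached only after removing all edges into $X_i$, so a size-$k$ hard intervention can alter the skeleton relative to the $k$ singletons --- precisely the zero-information phenomenon mentioned just above the lemma.

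The step I expect to require the most care is the v-structure bookkeeping of the second paragraph: checking that attaching the source vertices creates no v-structures beyond the listed ones and destroys none, and that the union $\bigcup_{i\in T}\mathcal{V}_i$ decomposes coordinate-wise, so that equality of the full v-structure sets of the two interventional graphs is equivalent to equality of the local orientation data $\mathcal{V}_i$ at each target separately. The remainder is a direct unwinding of the definitions of soft intervention and of interventional Markov equivalence.
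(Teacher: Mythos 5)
Your proof is correct, but it follows a genuinely different route from the paper: the paper does not argue the statement from first principles at all, it simply derives Lemma \ref{lem:soft} as a corollary of Theorem 2 of \cite{eberhardt2007interventions}, which asserts the informational equivalence of combining parametric (soft) interventions. You instead work entirely inside the paper's own formalism: since a soft intervention only adjoins degree-one source vertices $W_i\rightarrow X_i$ and deletes nothing, you compare the interventional graphs directly via Lemma \ref{lem:verma}, showing that for either experiment, interventional Markov equivalence of $G_1$ and $G_2$ amounts to equality of skeletons, of v-structures, and of the local orientation data $\mathcal{V}_{I_j}$ (equivalently, of the parent sets of each target), the key point being that the new v-structures at distinct targets involve distinct intervention vertices and hence the union condition splits coordinate-wise; this is exactly the bookkeeping you flag, and it is sound. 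What your approach buys is a self-contained proof that pins down the meaning of ``equivalent'' as identity of the induced interventional Markov equivalence classes (the sense in which the lemma is used in the text), together with a clean explanation of why the claim fails for hard interventions (the skeleton of the interventional graph changes, which is precisely the zero-information phenomenon). What the paper's citation buys is brevity and a statement phrased at the level of what is learnable from the interventional distributions; under Assumption \ref{assumption:faithful} the two formulations coincide, so it would be worth one sentence in your write-up noting that your graphical equivalence transfers to the distributional statement via the Markov and faithfulness assumptions on the interventional distributions.
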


Lemma \ref{lem:soft} is a corollary of Theorem 2 of \cite{eberhardt2007interventions}.
By Lemma \ref{lem:soft}, if the performed interventions are soft, they can be done simultaneously as one soft intervention of size $k$, i.e., we can have $|\E|=1$, and $|I_1|=k$. Nevertheless, as mentioned in Section \ref{sec:prelim}, soft interventions are in general more challenging to perform.

By Assumption \ref{assumption:oracle}, we assume that $\MEC(G^*)$, and hence, its corresponding essential graph $\tG^*$ is attainable from the observational data. 
Let $G_i\in\MEC(G^*)$, and for experiment with target set $\I$, denote the interventional Markov equivalence class containing $G_i$ and its corresponding interventional essential graph by $\I$-$\MEC(G_i)$ and $\tG^{(\I)}_i$, respectively.
Define $R(\I,G_i)$ as the set of edges directed in $\tG^{(\I)}_i$ but not directed in $\tG^*$, i.e., the set of edges whose directions can be learned due to the experiment with target set $\I$, if the ground truth DAG were $G_i$. Note that $R(\I,G)$ is the same for all $G\in\I$-$\MEC(G_i)$. 
$R(\I,G_i)$ can be obtained as follows: As seen in Lemma \ref{lem:neigh}, from an experiment with target set $\I$, one learns the direction of all the edges incident with the vertices in $\I$. 
Denote these directed edges by $A(\I,G_i)$. (Clearly, the orientation of these edges depends on the ground truth DAG $G_i$, and hence $G_i$ is an input argument.) 
Meek rules \citep{meek2013causal} can then be applied to $A(\I,G_i)$ to obtain extra edges oriented in $\tG^{(\I)}_i$ compared to $\tG^*$ in polynomial time.

Define the \emph{gain} of an experiment with target set $\I$ on ground truth structure $G_i$ as $D(\I,G_i)=|R(\I,G_i)|$, that is, the number of edges whose direction is discovered due to the experiment, if the ground truth DAG were $G_i$. Since the ground truth DAG is initially known only up to the elements of $\MEC(G^*)$, and since there is no preference between the members of $\MEC(G^*)$, $G^*$ is equally likely to be any of the DAGs in the class. Hence, the expected number of the edges recovered through the experiment with target set $\I$ is
\begin{equation}
\label{eq:summ}
\begin{aligned}
\mathcal{D}(\I)\coloneqq
%\mathbb{E}_{G_i}[D(\mathcal{I},G_i)]=
\frac{1}{|\MEC(G^*)|}\sum_{G_i\in\MEC(G^*)}D(\mathcal{I},G_i).
\end{aligned}
\end{equation} 
We refer to $\mathcal{D}(\I)$ as the average gain of the experiment with target set $\I$. Thus, our problem of interest can be formulated as finding intervention target set $\mathcal{I}\subseteq V$ of cardinality $k$ that maximizes $\mathcal{D}(\mathcal{I})$:
\begin{equation}
\label{eq:bayes}
\max_{\mathcal{I}:\mathcal{I}\subseteq V} \mathcal{D}(\mathcal{I})~~~\text{s.t.}~~~|\mathcal{I}|= k.
\end{equation} 
We refer to \eqref{eq:bayes} as the \emph{average gain} optimization problem.
Optimization problem \eqref{eq:bayes} is challenging for two reasons: First, finding an optimal $\I$ requires a combinatorial search. Second, even for a given set $\I$, computing $\mathcal{D}(\I)$ when the value of $k$ or the cardinality of the Markov equivalence class is large, can be computationally intractable. Note that the cardinality of a MEC can be super-exponential in the number of vertices \citep{he2015counting}.

Alternatively, one can consider a minimax setup, and design the experiment for the worst-case member of the equivalence class:
\begin{equation}
\label{eq:mM}
\max_{\mathcal{I}:\mathcal{I}\subseteq V} \min_{G_i\in\MEC(G^*)}D(\mathcal{I},G_i)~~~\text{s.t.}~~~|\mathcal{I}|= k.
\end{equation} 
We refer to \eqref{eq:mM} as the \emph{worst-case gain} optimization problem.
Optimization problem \eqref{eq:mM} is studied by \cite{hauser2014two} for the case of $k=1$. Here, we consider the challenges raised when $k$ is larger than 1 and a brute force search over all subsets of $V$ of size $k$ is not computationally feasible. 
\cite{he2008active} have also considered a similar setup with singleton interventions with $k=1$. But their objective functions are different and they perform a brute force search to find the optimum target.

In Section \ref{sec:tree} we study optimization problems \eqref{eq:bayes} and \eqref{eq:mM} for the case that the underlying causal structure is a tree, and we consider the general case in Section \ref{sec:general}.

% Experiment Design for Tree Structures
\section{Experiment Design for Tree Structures}
\label{sec:tree}

We start the investigation of optimization problems \eqref{eq:bayes} and \eqref{eq:mM} by considering the case that the underlying causal structure is a tree. 
For the obtained essential graph from the observational stage, Let $\tilde{T}_1, ..., \tilde{T}_R$ denote the induced subgraphs of the essential graph on the non-trivial chain components. Note that by definition, each $\tilde{T}_r$ is a UCEG. As mentioned in Section \ref{sec:prelim}, orientations for one chain component of an essential graph does not affect the orientations for the other components. Thus, for a given number of interventions assigned to one UCEG, the task of experiment design in that UCEG becomes independent from other UCEGs.  

%Therefore, without loss of generality, we consider the case that the obtained essential graph from the observational stage is a UCEG. We denote this essential graph by $\tT^*$, and denote the ground-truth v-structure free causal tree by $T^*$. 

Recall from Lemma \ref{lem:root} that for a given UCEG $\tG$, each DAG in $\MEC(\tG)$ has a unique root variable. Here, since the DAG is a tree and should be v-structure-free, knowing the root variable identifies the orientation of all the edges:
\begin{lemma}
	\label{lem:rootree}
	For a tree UCEG $\tT$, no two DAGs in $\MEC(\tT)$ have the same root variable, that is, the location of the root variable identifies the direction of all the edges.
\end{lemma}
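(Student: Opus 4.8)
The plan is to show that every DAG $G\in\MEC(\tT)$ is exactly the orientation of the tree $\tT$ obtained by directing every edge away from a single designated vertex (its root), so that $G$ is completely determined by the location of that root. Once this is established, two DAGs in $\MEC(\tT)$ with the same root must be identical, which is precisely the claim, and the final clause ("the location of the root variable identifies the direction of all the edges") is an immediate restatement.

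First I would recall the basic facts already available: since $\tT$ is a UCEG, every $G\in\MEC(\tT)$ has the same skeleton as $\tT$ (by Lemma~\ref{lem:verma}), which in the present setting is a tree, and no such $G$ contains a v-structure. The key structural observation is that, in a tree, being v-structure-free is equivalent to every vertex having at most one parent: if some vertex $c$ had two distinct parents $a$ and $b$ in an orientation of $\tT$, then $a$ and $b$ would have to be non-adjacent in the skeleton, because an edge $a-b$ together with the edges $a-c$ and $b-c$ would form a $3$-cycle, which is impossible in a tree. Hence $(a,c,b)$ would be a genuine v-structure in $G$, contradicting that $G\in\MEC(\tT)$.

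Next, by Lemma~\ref{lem:root}, the v-structure-free connected DAG $G$ has exactly one root $r$, and by the previous paragraph every non-root vertex has exactly one parent. I would then argue by induction along the unique path $r=u_0-u_1-\cdots-u_m$ in $\tT$ from $r$ to an arbitrary vertex $u_m$ that each edge on it is directed as $u_i\to u_{i+1}$: the edge $u_0-u_1$ cannot be oriented $u_1\to u_0$ since $r$ has no parent, so it is $u_0\to u_1$; and once $u_{i-1}\to u_i$ holds, $u_i$ already has its unique parent, so the edge $u_i-u_{i+1}$ must be $u_i\to u_{i+1}$. Since $\tT$ is connected, every edge of $\tT$ lies on such a root-path (take the endpoint farther from $r$), so the entire orientation of $G$ is forced to be "all edges pointing away from $r$." Therefore $G$ depends only on $r$, which gives the lemma.

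The argument is elementary; the only step needing a little care is the justification that two parents of a common child in an orientation of a tree are necessarily non-adjacent — so that they really do constitute a v-structure — which is exactly where the acyclicity (tree-ness) of the skeleton is used. I do not anticipate a genuine obstacle beyond making this point explicit.
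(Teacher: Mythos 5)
Your proof is correct and follows essentially the same route as the paper's: fix the root and propagate orientations away from it by induction along the unique tree paths, using the v-structure-free requirement to force each successive edge. The only difference is cosmetic — you make explicit (via the ``at most one parent'' observation and the non-adjacency of two would-be parents in a tree) the point that the paper leaves implicit in its phrase ``induced subgraph $X\rightarrow X_i - X_j$.''
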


For a tree UCEG $\tT_r, 1\leq r\leq R,$ and any variable $X\in V(\tT_r)$, let $T_r^{X}$ be the unique directed tree in $\MEC(\tT_r)$ with root variable $X$. Based on Lemmas \ref{lem:root} and \ref{lem:rootree}, $\MEC(\tT_r)=\{T_r^{X}:X\in V(\tT_r)\}$. Therefore, optimization problem \eqref{eq:bayes} can be written as
\begin{equation}
\label{eq:treebayes}
\begin{aligned}
\max_{\mathcal{I}:\mathcal{I}\subseteq V} \frac{1}{p_u}\sum_{r=1}^R\sum_{X\in V(\tT_r)}D(\I_r,T_r^{X}),~~~\text{s.t.}~~~\sum_{r=1}^R|\mathcal{I}_r|= k,
\end{aligned}
\end{equation}
%TODO: correct p, I_r, Xi up, r sup, pU
%TODO: Xi->X
%TODO: sum Ir=I
where $p_u\coloneqq\sum_{r=1}^R |V(\tT_r)|$, and $\I_r$ is the set of intervened variables in chain component $\tT_r$, i.e., $\I_r\coloneqq \I\cap V(\tT_r)$. Furthermore, the optimization problem \eqref{eq:mM} can be written as
\begin{equation}
\label{eq:treemM}
\begin{split}
&\max_{\mathcal{I}:\mathcal{I}\subseteq V} \min_{\{X_{i_1},\cdots,X_{i_R}\}\subseteq V}\sum_{r=1}^R D(\I_r,T_r^{X_{i_r}})~~~\text{s.t.}~~~\sum_{r=1}^R|\mathcal{I}_r|= k\\
&\equiv\max_{\mathcal{I}:\mathcal{I}\subseteq V} \sum_{r=1}^R \min_{X\in V(\tT_r)}D(\I_r,T_r^{X})~~~\text{s.t.}~~~\sum_{r=1}^R|\mathcal{I}_r|= k,
\end{split}
\end{equation} 
where the two optimization problems are equivalent due to the fact that orienting edges in one UCEG does not affect orientations of the edges in other UCEGs, and hence, minimization on the root of UCEGs can be done separately.
%TODO: min out, next in, it is hard to represent it!
%We define $\I_r$ as the intervened variables in  $\tT_r$, i.e., $\I_r:= \I\cap \tT_r$. 

Let $\{C_1(\I_r),...,C_{J(\I_r)}(\I_r)\}$ be the set of components of $\tT_r\setminus\I_r$, i.e., the components resulting from removing vertices $\I_r$ and edges incident to them from $\tT_r$, where $J(\I_r)$ is the number of the resulted components.
We have the following result regarding the calculation of the gain $D(\I_r,T_r^{X})$.
\begin{lemma}
	\label{lem:gain}
	For any $X\in V(\tT_r)$ and experiment target set $\I_r\subseteq V(\tT_r)$, the gain $D(\I_r,T_r^{X})$ can be calculated as follows:	
	\[
	D(\I_r,T_r^{X})=
	\begin{cases}
	|\tT_r|-1 &\quad X\in\I_r,\\
	|\tT_r|-|C_j(\I_r)| &\quad X\in C_j(\I_r),
	\end{cases}
	\]
	where $|G|$ denotes the order (number of vertices) of $G$.
\end{lemma}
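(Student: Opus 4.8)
The plan is to understand exactly which edges get oriented when we intervene on the set $\I_r$ inside the tree UCEG $\tT_r$, given that the true DAG is $T_r^X$, and then just count. By Lemma \ref{lem:neigh}, intervening on any $Y\in\I_r$ orients all edges incident to $Y$; call this directed edge set $A(\I_r,T_r^X)$. The key observation is structural: since $\tT_r$ is a tree, removing the vertices $\I_r$ (and their incident edges) splits $\tT_r$ into the components $C_1(\I_r),\dots,C_{J(\I_r)}(\I_r)$, and every edge of $\tT_r$ either is incident to some vertex of $\I_r$ (hence already in $A(\I_r,T_r^X)$) or lies entirely inside a unique component $C_j(\I_r)$. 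So the only edges whose direction could still be in question after the intervention are the internal edges of the components $C_j(\I_r)$. The task reduces to deciding, for each component $C_j(\I_r)$, whether Meek rules propagated to $A(\I_r,T_r^X)$ resolve all, some, or none of its internal edges.

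First I would handle the case $X\in\I_r$. Then all $|V(\tT_r)|$ vertices have their status relative to the root pinned down: the root is known, so by Lemma \ref{lem:rootree} the entire tree is oriented, giving gain $|\tT_r|-1$ (the number of edges of a tree on $|\tT_r|$ vertices). More carefully, one should argue this via Meek rules rather than Lemma \ref{lem:rootree} directly, but the cleanest route is: knowing the orientation of edges at $X$ plus the fact that the interventional essential graph is still a chain graph with chordal, v-structure-free chain components forces, component by component, each $C_j$ to be a rooted tree oriented away from its unique attachment point to $X$ — I would make this precise by induction on distance from $\I_r$, using Meek rule R1 ($a\to b - c$ with $a,c$ nonadjacent forces $b\to c$), which applies here because $\tT_r$ being a tree guarantees the required non-adjacency.

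Next, the case $X\in C_j(\I_r)$ for some $j$. For every component $C_\ell(\I_r)$ with $\ell\neq j$: the single edge connecting $C_\ell$ to its neighbor in $\I_r$ points \emph{into} $C_\ell$ (since the root $X$ is not in $C_\ell$, and in a tree the unique path from $X$ to any vertex of $C_\ell$ passes through that bridge edge), and then R1 propagates this orientation throughout $C_\ell$, orienting all its internal edges away from the attachment point. For the component $C_j$ containing the root: every bridge from $C_j$ to $\I_r$ points \emph{out of} $C_j$ (the path from $X$ to any vertex of $\I_r$ exits through such a bridge), so the incident orientations at the boundary of $C_j$ are all directed outward and R1 cannot fire inward; since $C_j$ itself contains no v-structure and the root of $C_j$ is unknown (it could be any vertex of $C_j$ consistent with the outward bridges — here I'd note $C_j$ with the removed incident structure behaves like its own UCEG), none of the $|C_j(\I_r)|-1$ internal edges of $C_j$ are oriented. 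Counting: total edges of $\tT_r$ is $|\tT_r|-1$; the unoriented ones are exactly the internal edges of $C_j$, of which there are $|C_j(\I_r)|-1$; hence the gain is $(|\tT_r|-1)-(|C_j(\I_r)|-1)=|\tT_r|-|C_j(\I_r)|$, as claimed.

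The main obstacle is the rigor of the Meek-rule propagation argument: I need to show both that R1 fully orients the "away" components (easy induction, using that $\tT_r$ is a tree so the non-adjacency side-condition of R1 always holds) and — more delicately — that \emph{no} rule orients any edge inside the root-containing component $C_j$. For the latter I would exhibit two DAGs in $\MEC(\tT_r)$ agreeing on all intervened-edge orientations but disagreeing on a given internal edge of $C_j$ (e.g. pick two distinct valid roots inside $C_j$), which shows that edge is genuinely unidentifiable and hence not oriented by any sound rule; this sidesteps having to check each Meek rule individually. Assembling these pieces with the edge count finishes the proof.
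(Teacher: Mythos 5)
Your proposal is correct and takes essentially the same route as the paper: the paper's appendix proof likewise shows, via an auxiliary lemma, that the experiment orients exactly the edges incident to descendants of intervened vertices --- i.e., everything outside the component of $\tT_r\setminus\I_r$ containing the root --- using v-structure-free (R1-type) propagation for the positive part and an ``either orientation is consistent'' argument for the negative part, then counts exactly as you do. Your two-root witness inside $C_j(\I_r)$ is merely a slightly more explicit packaging of the paper's non-identifiability step (and has the minor virtue of handling the multi-target experiment directly rather than per single intervention).
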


Using Lemma \ref{lem:gain}, the average gain of an experiment target set $\I$ can be calculated by the following proposition:
\begin{proposition}
	\label{prop:avggain}
	The average gain of an experiment target set $\I\subseteq V$ is given as follows:
	\begin{equation}
	\label{eq:treegain}
	\mathcal{D}(\I)= \frac{1}{p_u}\sum_{r=1}^R |\tT_r|^2-\frac{k}{p_u}-\frac{1}{p_u}\sum_{r=1}^R \sum_{j=1}^{J(\I_r)} |C_j(\I_r)|^2.
	\end{equation}
\end{proposition}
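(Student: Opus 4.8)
The plan is to start from the expression for $\mathcal{D}(\I)$ already derived in \eqref{eq:treebayes}, namely $\mathcal{D}(\I) = \frac{1}{p_u}\sum_{r=1}^R\sum_{X\in V(\tT_r)}D(\I_r,T_r^X)$, and substitute the closed form for the gain from Lemma \ref{lem:gain}. I would fix a chain component $\tT_r$ and note that its vertex set partitions into the $|\I_r|$ intervened vertices and the vertices of the components $C_1(\I_r),\dots,C_{J(\I_r)}(\I_r)$ of $\tT_r\setminus\I_r$. By Lemma \ref{lem:gain} each intervened vertex contributes $|\tT_r|-1$ to the inner sum, while each of the $|C_j(\I_r)|$ vertices of $C_j(\I_r)$ contributes $|\tT_r|-|C_j(\I_r)|$; hence
\[
\sum_{X\in V(\tT_r)}D(\I_r,T_r^X) = |\I_r|\bigl(|\tT_r|-1\bigr) + \sum_{j=1}^{J(\I_r)} |C_j(\I_r)|\bigl(|\tT_r|-|C_j(\I_r)|\bigr).
\]

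Next I would invoke the elementary counting identity $\sum_{j=1}^{J(\I_r)}|C_j(\I_r)| = |\tT_r|-|\I_r|$, valid because the components of $\tT_r\setminus\I_r$ partition $V(\tT_r)\setminus\I_r$, and expand the right-hand side. The terms linear in $|\tT_r|$ sum to $|\I_r||\tT_r| + |\tT_r|\bigl(|\tT_r|-|\I_r|\bigr) = |\tT_r|^2$, so the cross terms cancel and one is left with $\sum_{X\in V(\tT_r)}D(\I_r,T_r^X) = |\tT_r|^2 - |\I_r| - \sum_{j=1}^{J(\I_r)}|C_j(\I_r)|^2$. Summing over $r=1,\dots,R$, using $\sum_{r=1}^R|\I_r| = |\I| = k$ (the budget is spent entirely on the non-trivial chain components, per the constraint in \eqref{eq:treebayes}), and dividing by $p_u$ then yields \eqref{eq:treegain} exactly.

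There is no genuinely hard step here; the argument is pure bookkeeping on top of Lemma \ref{lem:gain}. The only points that require care are: (i) correctly describing how $V(\tT_r)$ splits into $\I_r$ and the $C_j(\I_r)$'s so that the two cases of Lemma \ref{lem:gain} are applied to the right vertices; (ii) the partition identity $\sum_j |C_j(\I_r)| = |\tT_r|-|\I_r|$, which is precisely what makes the quadratic cross terms collapse to $|\tT_r|^2$; and (iii) aggregating the per-component linear terms $-|\I_r|$ into the single $-k/p_u$ term via $\sum_r |\I_r| = k$. If one wishes to dispense with the implicit assumption that no intervention is wasted on a trivial chain component or an already-oriented edge, one observes that such an intervention contributes $0$ to the gain and lies in no $\I_r$ with $\tT_r$ non-trivial, so the formula is unchanged provided $k$ is read as the number of interventions allocated to non-trivial chain components.
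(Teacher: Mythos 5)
Your proof is correct and follows essentially the same route as the paper's: substitute the two cases of Lemma \ref{lem:gain} into the per-component sum, use the partition identity $\sum_j |C_j(\I_r)| = |\tT_r|-|\I_r|$ to collapse the cross terms to $|\tT_r|^2$, and aggregate $\sum_r|\I_r|=k$. Your closing remark about interventions placed outside the non-trivial chain components is a reasonable extra caveat but does not change the argument.
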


%Moreover, the objective function of optimization problem in \eqref{eq:treemM} can be written as follows:
%\begin{equation*}
%\begin{split}
%&\max_{\mathcal{I}:\mathcal{I}\subseteq V} \sum_{r=1}^R \min_{X\in \tT^r}D(\I,T^r_{X})\\
%&\equiv\max_{\mathcal{I}:\mathcal{I}\subseteq V} \sum_{r=1}^R \min_{1\leq j\leq J_{\mathcal{I}_r}} |\tilde{T}^r|-|C_j^{\mathcal{I}_r}|\\
%&\equiv\max_{\mathcal{I}:\mathcal{I}\subseteq V} \sum_{r=1}^R |\tT^r|-\min_{1\leq j\leq J_{\mathcal{I}_r}} |C_j^{\mathcal{I}_r}|\\
%&\equiv\max_{\mathcal{I}:\mathcal{I}\subseteq V} \sum_{r=1}^R -\min_{1\leq j\leq J_{\mathcal{I}_r}} |C_j^{\mathcal{I}_r}|\\
%\end{split}
%\end{equation*}

Based on Lemma \ref{lem:gain} and Proposition \ref{prop:avggain}, the optimizer of the optimization problem \eqref{eq:treebayes} can be found by solving
\begin{equation}
\label{eq:fitreebayes}
\begin{aligned}
\min_{\I:\I\subseteq V} \sum_{r=1}^R \sum_{j=1}^{J(\I_r)} |C_j(I_r)|^2,~~~\text{s.t.}~~~\sum_{r=1}^R|\mathcal{I}_r|= k.
\end{aligned}
\end{equation}
Also, we have
\begin{equation*}
\label{eq:fitreemM}
\begin{split}
&\arg\max_{\mathcal{I}:\mathcal{I}\subseteq V} \sum_{r=1}^R \min_{X\in V(\tT_r)}D(\I,T_r^{X}),~~~\text{s.t.}~~~\sum_{r=1}^R|\mathcal{I}_r|= k\\
&=\arg\max_{\mathcal{I}:\mathcal{I}\subseteq V} \sum_{r=1}^R \min_{1\leq j\leq J(\mathcal{I}_r)} |\tilde{T}_r|-|C_j(\mathcal{I}_r)|,~~~\text{s.t.}~~~\sum_{r=1}^R|\mathcal{I}_r|= k\\
&=\arg\max_{\mathcal{I}:\mathcal{I}\subseteq V} \sum_{r=1}^R |\tT_r|-\max_{1\leq j\leq J(\mathcal{I}_r)} |C_j(\mathcal{I}_r)|,~~~\text{s.t.}~~~\sum_{r=1}^R|\mathcal{I}_r|= k\\
&=\arg\max_{\mathcal{I}:\mathcal{I}\subseteq V} \sum_{r=1}^R -\max_{1\leq j\leq J(\mathcal{I}_r)} |C_j(\mathcal{I}_r)|,~~~\text{s.t.}~~~\sum_{r=1}^R|\mathcal{I}_r|= k\\
&=\arg\min_{\mathcal{I}:\mathcal{I}\subseteq V} \sum_{r=1}^R\max_{1\leq j \leq J(\mathcal{I}_r)}|C_j(\mathcal{I}_r)|,~~~\text{s.t.}~~~\sum_{r=1}^R|\mathcal{I}_r|= k.
\end{split}
\end{equation*}
Hence, the optimizer of the optimization problem \eqref{eq:treemM} can be found by solving
\begin{equation}
\label{eq:fitreemM}
\min_{\mathcal{I}:\mathcal{I}\subseteq V} \sum_{r=1}^R\max_{1\leq j \leq J(\mathcal{I}_r)}|C_j(\mathcal{I}_r)|,~~~\text{s.t.}~~~\sum_{r=1}^R|\mathcal{I}_r|= k.
\end{equation}

%TODO: 2->3 omit, argmax

%\begin{equation}
%\label{eq:fitreemM}
%\min_{\mathcal{I}:\mathcal{I}\subseteq V} \sum_{r=1}^R\max_{1\leq j \leq J_r}|C^r_j|~~~\text{s.t.}~~~|\mathcal{I}|= k.
%\end{equation} 

Clearly, the optimization problems in \eqref{eq:fitreebayes} and \eqref{eq:fitreemM} can be solved via a brute-force search over all $\binom{p}{k}$ target sets, which can be  computationally intensive. In Subsections \ref{sec:algmM} and \ref{sec:alggreedy}, we will introduce efficient algorithms to address these optimization problems.

\subsection{Optimizing the Worst-Case Gain in Tree Structures}
\label{sec:algmM}

We start with the optimization problem in \eqref{eq:fitreemM}. As mentioned before, for a fixed number of intervention in UCEG $\tT_r$, the task of experiment design in that UCEG becomes independent of other UCEGs. Thus, we can formulate the optimization problem in \eqref{eq:fitreemM} as follows:
\begin{equation}
\begin{split}
&\min_{(\I_1,...,\I_R):\sum_{r=1}^R |\I_r|=k} \sum_{r=1}^R\max_{1\leq j \leq J(\mathcal{I}_r)}|C_j(\mathcal{I}_r)|\\
&\equiv\min_{(\I_1,...,\I_R): |\I_r|=k_r,\sum_{r=1}^R k_r=k} \sum_{r=1}^R\max_{1\leq j \leq J(\mathcal{I}_r)}|C_j(\mathcal{I}_r)|\\
&\equiv\min_{(k_1,...,k_R): \sum_{r=1}^R k_r=k} \sum_{r=1}^R \min_{\I_r:|\I_r|=k_r} \max_{1\leq j \leq J(\I_r)} |C_j(\I_r)|.
\end{split}
\label{eq:problem_minmax}
\end{equation}
%where $k_r$ is equal to the size of $\I_r$. The reason for equivalency of the last two optimization problems is due to the fact that solving the problem of experiment design in each UCEG for a given budget is independent of other UCEGs.
Herein, we first propose Algorithm \ref{algTrED} that solves for the minimax problem in the summation in expression \eqref{eq:problem_minmax} for each given UCEG $\tT_r$. That is, Algorithm \ref{algTrED} finds a set $\I_r$ in  $\tT_r$ of size $k_r$ such that after removing the variables in $\I_r$, the maximum size of the remaining components is minimized. Next, we will show that how Algorithm \ref{algTrED} can be utilized to obtain an optimum solution of the problem in \eqref{eq:problem_minmax}.

%In this subsection we present an efficient algorithm for experiment design on tree structures. 
%We introduce the TrED (Tree Experiment Design) algorithm, which is based on 
%We interpret the problem as a vertex removal problem, in which by removing a set of vertices, we attain some components. 
%The main idea is to bound the average loss by confining the probability of existence of the root in all components.
%We show that the proposed algorithm is an approximate algorithm for the optimization problems \eqref{eq:fitreebayes} and is an exact algorithm for optimization problem \eqref{eq:fitreemM}.

%The pseudo-code of the TrED algorithm is presented in 
Algorithm \ref{algTrED} takes a UCEG $\tT_r$ and budget of intervention $k_r$ as inputs and returns the set $\hat{\I}_{r}$ that is a solution of the following minimax problem:
\begin{equation}
\label{eq:subminimax}
\min_{\I_r:\I_r\subseteq V(\tT_r)} \max_{1\leq j \leq J(\I_r)} |C_j(\I_r)|,~~~\text{s.t.}~~~|\mathcal{I}_r|= k_r.
\end{equation}

In the main loop of Algorithm \ref{algTrED}, each variable $X_i\in V(\tT_r)$ is set as the starting point for performing Depth-First Search (DFS) on $\tT_r$. For a given threshold value $mid$, $1\le mid\le |\tT_r|$, the algorithm does the following. On the traversal of DFS, whenever all the descendants of a variable $X_j$ are visited, it decides to remove $X_j$ and adds it to the set $\mathcal{I}$ (which is the set of variables on which we will intervene), if not doing so, results in having a component with size larger than $mid$ in the subtree rooted at $X_j$ (lines 8-9). Note that after removing $X_j$, for the rest of variables in the traversal, we do not consider the disconnected vertices anymore. After checking all the variables in DFS, we see if our budget of intervention, i.e., $k_r$ is enough for performing $|\mathcal{I}|$ interventions (line 13). We update the value for $mid$ in each loop using a binary search to find the minimum threshold that can be satisfied by the budget. More specifically, if the number of interventions is less than the budget $k_r$ for a value of $mid$, we narrow down our search space to $[L,mid]$ (lines 13-15). Otherwise, we consider the region $[mid,H]$ (line 17). This procedure will be repeated for all possible choices of the starting point of DFS and we choose the best $\mathcal{I}(X)$ as the output of the algorithm (line 21).

%TODO: binary search, omit l

\begin{algorithm}[t]
	\begin{algorithmic}[1]
		\STATE {\bf input:} $\tT_r$, $k_r$.
		\FOR {$X_i\in V(\tT_r)$}
		\STATE $L=1$, $H=|\tT_r|$, $T=\tT_r$
		\WHILE {$\lfloor H\rfloor\neq\lfloor L\rfloor$}
		\STATE $\I=\emptyset$,  $\textit{mid}=(L+H)/2$
		\STATE Perform DFS on $T$ starting from $X_i$.
		\FOR { $X_j\in V$, when all variables in $\textit{Desc}(X_j)$ w.r.t. $T_r^{X_i}$ are visited in DFS traversal,}
		\IF {$|\textit{Desc}(X_j)|>\textit{mid}$}
		\STATE $\I=\I\cup\{X_j\}$
		\STATE $T=T\setminus\textit{Desc}(X_j)$
		%\STATE $\ell = \ell+\sum_{X_c\in\Ch(X_k)}|\textit{Desc}(X_c)|^2$
		\ENDIF
		\ENDFOR
		
		\IF {$|\I|\le k_r$}
		\STATE  $\textit{mid}(X_i)=\textit{mid}$,  $\I(X_i)=\I$
		\STATE $H=\textit{mid}$			
		\ELSE 
		\STATE $L=\textit{mid}$
		\ENDIF
		\ENDWHILE
		\ENDFOR
		\STATE $\hat{\I}_{r}=\I(\arg\min_{X_i}\textit{mid}(X_i))$
		%\STATE $\hat{\I}_{\textit{average}}=\I(\arg\min_{X_i}\ell(X_i))$
		\STATE {\bf output:} $\hat{\I}_{r}$ %and $\hat{\I}_{\textit{average}}$
		\caption{Minimax Experiment Design for a UCEG}
		\label{algTrED}
	\end{algorithmic}
\end{algorithm}

%\begin{theorem}
%For $\tT_r$ of order $p$, TrED algorithm runs in time $O(p^2\log(p))$.
%\end{theorem}
%\begin{proof}
%For a tree of order $p$, DFS runs in time $O(p)$ and for a fixed value of $H$, the while loop will run for $\log_2(H)$ times which can be at most $\log(p)$.
%\end{proof}
%TODO;move theorem

\begin{theorem}
\label{thm:algTrED}
	Algorithm \ref{algTrED} returns the optimal solution of the optimization problem in \eqref{eq:subminimax}.
\end{theorem}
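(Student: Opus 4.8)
The plan is to establish two things: first, that for a fixed threshold $\textit{mid}$, the greedy DFS-based removal procedure (lines 6--12) uses the minimum possible number of interventions to guarantee that every remaining component has size at most $\textit{mid}$, \emph{provided} the DFS starts from a correct root; and second, that ranging the DFS start over all vertices of $\tT_r$ and taking the best, together with the binary search on $\textit{mid}$, yields the global optimum of \eqref{eq:subminimax}. For the first part I would fix a root $X_i$, orient $\tT_r$ as the rooted tree $T_r^{X_i}$, and argue by a bottom-up exchange argument: process vertices in a post-order (children before parents, which is exactly when ``all variables in $\textit{Desc}(X_j)$ are visited''). Call a vertex \emph{forced} if, at the moment it is processed, the subtree still attached below it (after earlier removals) has more than $\textit{mid}$ vertices. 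I would show that (a) the greedy removes exactly the forced vertices, and (b) in \emph{any} feasible solution that respects this rooting, each forced vertex must have at least one removed vertex in its own still-attached subtree at processing time — because otherwise that subtree is a connected component of size $> \textit{mid}$. A charging/induction argument then matches each greedy removal to a distinct removal of an arbitrary feasible solution, giving $|\I_{\text{greedy}}| \le |\I_{\text{opt, rooted}}|$.

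Next I would handle the rooting. The subtlety is that \eqref{eq:subminimax} is an \emph{unrooted} problem: the quantity $\max_j |C_j(\I_r)|$ depends only on the undirected tree $\tT_r$ and the removed set, not on any orientation. The claim is that the optimal removal set $\I_r^*$, when we root $\tT_r$ at any vertex $X_{i^*}$ that is \emph{not} removed (such a vertex exists unless $\I_r^* = V(\tT_r)$, a degenerate case one checks separately), is exactly reproduced by the greedy run with DFS starting at $X_{i^*}$ and $\textit{mid} = \max_j |C_j(\I_r^*)|$. Here one uses that rooting at a surviving vertex makes each component $C_j(\I_r^*)$ ``hang'' from a unique removed vertex above it, so the bottom-up forced-vertex characterization lines up with $\I_r^*$. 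Since the algorithm tries every start vertex, it in particular tries $X_{i^*}$, so the best $\textit{mid}(X_i)$ over all starts is at most the optimal value; conversely feasibility of the greedy output (it always produces components of size $\le \textit{mid}$ with $\le k_r$ removals) shows it is at least the optimal value, hence equal.

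Finally I would verify the binary search converges correctly: the predicate ``$\textit{mid}$ is achievable with $\le k_r$ interventions from start $X_i$'' is monotone in $\textit{mid}$, so the standard invariant $[L,H]$ brackets the least feasible threshold for each start, and the loop terminates when $\lfloor L\rfloor = \lfloor H\rfloor$ with $\I(X_i)$, $\textit{mid}(X_i)$ recording the best achievable value for that start. Taking $\arg\min_{X_i}\textit{mid}(X_i)$ (line 21) then returns a global optimizer. I expect the main obstacle to be the rooting argument in the second paragraph — carefully proving that an optimal \emph{unrooted} removal set is recovered by the \emph{rooted} greedy for the right choice of start and threshold, including making the exchange/charging argument respect the specific tie-breaking in lines 8--9 (remove only when the subtree \emph{strictly} exceeds $\textit{mid}$). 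One must be careful that the greedy's local rule ``remove $X_j$ iff $|\textit{Desc}(X_j)| > \textit{mid}$ in the current $T$'' does not remove a vertex prematurely in a way that a cleverer feasible solution avoids; the exchange argument handles this, but writing it to cover all cases (e.g. when a feasible solution removes an ancestor where greedy removes a descendant) is where the real work lies.
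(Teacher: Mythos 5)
Your overall route is the same as the paper's: the paper's key step (its Lemma \ref{lem:leastremoval}) is precisely your first-paragraph claim that, for a fixed threshold $\textit{mid}$, the bottom-up ``forced-vertex'' greedy uses the fewest possible removals, and the rest is the same two-direction argument (feasibility of the greedy output gives one inequality, removal-optimality plus the binary search gives the other, and line 21 takes the min over starts). The paper proves the lemma by a delay-removals induction on subtrees; your disjoint-subtree charging argument is, if anything, cleaner: the ``still-attached'' subtrees at the moments of the greedy's forced removals are connected and pairwise disjoint, and any feasible set must contain a vertex of each one, else that subtree sits inside a component of size exceeding $\textit{mid}$.

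The one genuine defect is the second paragraph. First, it is unnecessary: your charging argument never uses any property of the competing feasible solution relative to the chosen root --- it only uses that the forced subtrees are connected and disjoint --- so the restriction to solutions ``that respect this rooting'' can simply be dropped, and the rooted greedy is removal-optimal against \emph{every} feasible set, for \emph{every} choice of DFS start. Second, the specific claim you lean on there is false as stated: the optimal set $\I_r^*$ need not be ``exactly reproduced'' by the greedy rooted at a surviving vertex. Take the path $a-b-c-d$ with $\textit{mid}=2$ and $\I_r^*=\{c\}$ (components $\{a,b\}$ and $\{d\}$); rooting at the surviving vertex $a$, the greedy removes $b$, not $c$. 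Both sets are optimal, which is all the theorem needs, but ``exact reproduction'' is the wrong invariant --- only the cardinality comparison matters, and your paragraph-one charging already delivers it. With that detour deleted (and the binary-search monotonicity noted, as you do), your proof is correct and essentially coincides with the paper's.
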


Establishing an algorithm for solving the minimax problem in \eqref{eq:subminimax}, we can utilize it to solve the main optimization problem in \eqref{eq:problem_minmax}. To this end, we show that the main problem can be formulated as a multi-choice knapsack problem \citep{dudzinski1987exact}, and hence, it can be solved efficiently by existing algorithms \citep{dudzinski1987exact} proposed for the  multi-choice knapsack problem.

In order to find an optimal solution of \eqref{eq:problem_minmax}, we first using Algorithm \ref{algTrED} obtain the optimal value of objective function in \eqref{eq:subminimax} for every UCEG $\tT_r$ and any assigned budget $k_r=j$, where $0\leq j \leq k$, and denote the optimum value by $D_{r,j}$. 
Also, for each UCEG $\tT_r$ and budget $j$, we define binary indicator variable $x_{r,j}$, where $x_{r,j}=1$ if the budget assigned to $\tT_r$ is equal to $j$, otherwise, $x_{r,j}=0$.
%For each UCEG $\tT_r$, we consider a set of binary variables $\{x_{r,j}\}_{j=0}^k$ where $x_{r,j}=1$ if the budget of interventions in $\tT_r$ is equal to $j$. Otherwise, it is equal to zero. 
Hence, optimization problem \eqref{eq:problem_minmax} can be reformulated as follows:
\begin{equation}
\begin{split}
\qquad\qquad\min  \sum_{r=1}^R \sum_{j=0}^k & D_{r,j}  x_{r,j} \\
\text{s.t.}\quad &\sum_{r=1}^R \sum_{j=0}^k jx_{r,j}\le k,\\
& \sum_{j=0}^k x_{r,j}=1,\\
& x_{r,j}\in \{0,1\}, \quad \mbox{for all }  1\le r\le R, \mbox{for all } 0\leq j \leq k.
\end{split}
\label{eq:knapsack}
\end{equation}
%TODO: explain objective optim
%TODO: example
The first condition ensures that the total number of interventions performed in all UCEGs is less than or equal to budget $k$ and the second condition specifies the number of interventions assigned to each UCEG $\tT_r$. Moreover, the sum $ \sum_{j=0}^k D_{r,j} x_{r,j}$ in the objective function is equal to $D_{r,j}$ if $x_{r,j}=1$. In other words, this sum is equal to the optimal value of objective function in \eqref{eq:subminimax} if $k_r=j$. Thus, the objective function in \eqref{eq:knapsack} is equal to the one in \eqref{eq:problem_minmax}.

Regarding the time complexity of the propose approach, we first run Algorithm \ref{algTrED} on each UCEG for any budget in the range $\{0,...,k\}$. The time complexity of Algorithm \ref{algTrED} is in the order of $\mathcal{O}(p^2\log p)$. This is due to the fact that   DFS runs in time $\mathcal{O}(p)$ for a tree of order $p$ and for a fixed value of parameter $H$, the while loop in Algorithm \ref{algTrED}  will run for $\log_2(H)$ times, which can be at most $\log p$.  
Therefore, the time complexity of obtaining the optimal value of objective function in \eqref{eq:subminimax} for all $1\leq r\leq R$ and $1\leq k_r\leq k$,  is in the order of $\mathcal{O}(p^3k\log p)$. 
Moreover, the time complexity of solving the multi-choice knapsack problem is in the order of $\mathcal{O}(pk^2)$. Hence, the total time complexity of the proposed approach would be in the order of $\mathcal{O}(p^3k\log p)$.

\subsection{Optimizing the Average Gain in Tree Structures}
\label{sec:alggreedy}

We now move to the problem of experiment design on tree structures for maximizing the average gain presented in expression \eqref{eq:treegain}. Unlike the minimax case, in the case of maximizing the average gain, the objective function depends on both the maximum order of the components, as well as how uniform the order of the components are. This fact makes the design of the experiment target set more challenging in the average case. Unfortunately, we do not have an efficient exact algorithm for this case; however, we show that due to submodularity of the objective function, an efficient approximation algorithm for this case can be obtained. We start by reviewing monotonicity and submodularity properties for a set function.
\begin{definition}
A set function $f: 2^V \to \mathbb{R}$ is monotonically increasing if for all sets $\mathcal{I}_1\subseteq \mathcal{I}_2\subseteq V$, we have
\[
f(\mathcal{I}_1)\le f(\mathcal{I}_2).
\]
\end{definition}
\begin{definition}
A set function $f: 2^V \to \mathbb{R}$ is submodular if for all subsets $\mathcal{I}_1\subseteq \mathcal{I}_2\subseteq V$ and all $X \in V \setminus \mathcal{I}_2$,\footnote{If $f$ is monotonically increasing, $X \in V \setminus \mathcal{I}_2$ relaxes to $X\in V$.}
\[
f(\I_1 \cup \{ X \}) - f(\I_1) \ge f(\I_2 \cup \{ X \}) - f(\I_2).
\]
\end{definition}

\cite{nemhauser1978analysis} showed that if $f$ is a submodular and monotonically increasing set function with $f(\emptyset)=0$, then the set $\hat{\mathcal{I}}$ with $|\hat{\mathcal{I}}|=k$ found by a greedy algorithm satisfies 
\[
f(\hat{\mathcal{I}})\ge(1-\frac{1}{e})\max_{\mathcal{I}:|\mathcal{I}|=k}f(\mathcal{I}),
\]
that is, the greedy algorithm is a $(1-\frac{1}{e})$-approximation algorithm. In the following, we show that the set function $\mathcal{D}$ defined in \eqref{eq:treegain} is monotonically increasing and submodular, and hence, since $\mathcal{D}(\emptyset)=0$, the greedy algorithm is a $(1-\frac{1}{e})$-approximation algorithm for the maximization problem \eqref{eq:bayes}.

\begin{proposition}
	\label{prop:treeavg}
For tree structures, the set function $\mathcal{D}$ defined in \eqref{eq:treegain} is monotonically increasing and submodular. % i.e., for sets $\mathcal{I}_1\subseteq \mathcal{I}_2$, we have
%\[
%\mathcal{D}(\mathcal{I}_2)\le \mathcal{D}(\mathcal{I}_1).
%\]
\end{proposition}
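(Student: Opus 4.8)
The plan is to work with the expression for $\mathcal{D}(\I)$ given in \eqref{eq:treegain}. Since the first two terms, $\frac{1}{p_u}\sum_r |\tT_r|^2$ and $\frac{k}{p_u} = \frac{|\I|}{p_u}$, are (respectively) a constant and a linear function of $|\I|$, and since the design decouples across chain components $\tT_r$, it suffices to analyze, for a single tree UCEG $\tT$ on vertex set $V(\tT)$, the set function
\[
g(\I) \coloneqq |\tT|^2 - \sum_{j=1}^{J(\I)} |C_j(\I)|^2,
\]
and show that $g$ is monotonically increasing and submodular in $\I \subseteq V(\tT)$; adding back $-\frac{|\I|}{p_u}$ (linear, hence modular) and summing over components preserves both properties. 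Monotonicity follows from the combinatorial fact that removing one more vertex $X$ from a forest can only break components into smaller pieces: if $X$ lies in component $C$ of $\tT\setminus\I$ with $|C|=c$, then $\I\cup\{X\}$ replaces the term $c^2$ by $\sum_i c_i^2$ where the $c_i$ are the sizes of the (sub)components of $C\setminus\{X\}$, with $\sum_i c_i = c-1 < c$, so $\sum_i c_i^2 \le (c-1)^2 < c^2$; hence $g$ strictly increases (or stays equal when $X$ was isolated). The same computation gives a clean formula for the marginal gain: writing $C_{\I}(X)$ for the component of $\tT\setminus\I$ containing $X$ (of size $c$) and $c_1,\dots,c_t$ for the sizes of the components of $C_\I(X)\setminus\{X\}$,
\[
g(\I\cup\{X\}) - g(\I) = c^2 - \sum_{i=1}^t c_i^2.
\]

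For submodularity I would fix $\I_1 \subseteq \I_2 \subseteq V(\tT)$ and $X \notin \I_2$, and compare the marginal gains $g(\I_1\cup\{X\})-g(\I_1)$ and $g(\I_2\cup\{X\})-g(\I_2)$. The key structural observation is that, because $\tT$ is a tree (in fact it suffices that it is a forest), the component $C_{\I_2}(X)$ of $\tT\setminus\I_2$ containing $X$ is a connected subgraph of the component $C_{\I_1}(X)$ of $\tT\setminus\I_1$ containing $X$ — intervening on more vertices only shrinks the piece $X$ lives in — and moreover, since removing $X$ from a tree separates it exactly along the edges incident to $X$ within that component, each neighbour-branch of $X$ in $C_{\I_2}(X)$ is contained in the corresponding neighbour-branch of $X$ in $C_{\I_1}(X)$. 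Concretely, let $X$ have neighbours $u_1,\dots,u_d$ in $\tT$; for each $\ell$, let $a_\ell$ (resp. $b_\ell$) be the size of the branch hanging off $X$ through $u_\ell$ inside $C_{\I_1}(X)$ (resp. inside $C_{\I_2}(X)$), with $a_\ell = 0$ or $b_\ell = 0$ if that branch is empty because it has been cut off by $\I_1$ or $\I_2$. Then $b_\ell \le a_\ell$ for every $\ell$ (this is the place the tree/forest structure is used — in a tree the branch through $u_\ell$ is a well-defined subtree, and adding interventions can only delete vertices from it), and the two marginal gains become
\[
g(\I_1\cup\{X\})-g(\I_1) = \Bigl(1+\sum_\ell a_\ell\Bigr)^2 - 1 - \sum_\ell a_\ell^2 = 2\sum_\ell a_\ell + 2\sum_{\ell<m} a_\ell a_m,
\]
and similarly $g(\I_2\cup\{X\})-g(\I_2) = 2\sum_\ell b_\ell + 2\sum_{\ell<m} b_\ell b_m$. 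Since $0\le b_\ell \le a_\ell$ for each $\ell$, every term on the right-hand side is nonincreasing when passing from the $a$'s to the $b$'s, so the marginal gain at $\I_2$ is at most that at $\I_1$, which is exactly submodularity.

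Finally I would assemble the pieces: $\mathcal{D}(\I) = \frac{1}{p_u}\bigl(\sum_r |\tT_r|^2 - |\I| - \sum_r (|\tT_r|^2 - g(\I_r))\bigr) = \frac{1}{p_u}\bigl(\sum_r g(\I_r) - |\I|\bigr)$, a nonnegative combination of the $g(\I_r)$ (each monotone and submodular as a function of $\I$, since $\I_r = \I\cap V(\tT_r)$ and intersecting with a fixed set preserves both properties) minus the modular function $|\I|/p_u$. A monotone submodular function minus a modular one is still submodular, and monotonicity of the total follows because each unit increase in $|\I|$ is accompanied by a marginal gain in the relevant $g(\I_r)$ of at least $1$ whenever $X$ is not already isolated — and if $X$ is isolated in $\tT_r\setminus\I_r$ the whole chain component is resolved anyway, so w.l.o.g.\ such $X$ are never chosen; a short argument handles this edge case, or one simply notes $g(\I\cup\{X\}) - g(\I) \ge 1 > 1/p_u$ whenever $C_{\I_r}(X)$ has more than one vertex. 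Since $\mathcal{D}(\emptyset) = 0$ as well, the hypotheses of the Nemhauser--Wolsey--Fisher theorem are met.

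I expect the main obstacle to be making the branch-containment claim $b_\ell \le a_\ell$ fully rigorous: one must argue that the subtree of $C_{\I_1}(X)$ hanging off the edge $X$--$u_\ell$ contains, as a subset, the subtree of $C_{\I_2}(X)$ hanging off the same edge, which requires being careful about the case $u_\ell \in \I_2 \setminus \I_1$ (then $b_\ell = 0$ and the inequality is trivial) versus $u_\ell \notin \I_2$ (then both branches are nonempty and one uses that every vertex of the $\I_2$-branch is connected to $X$ in $\tT\setminus\I_2 \subseteq \tT\setminus\I_1$ via a path through $u_\ell$, hence lies in the $\I_1$-branch). Everything else is the elementary inequality "replacing a positive integer $c$ by a partition of $c-1$ decreases the sum of squares," which I would not belabor.
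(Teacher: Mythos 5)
Your proof is correct, but it takes a genuinely different route from the paper's. The paper never expands the squared-component formula for the submodularity step: it instead shows that for each fixed rooted tree $T_r^{X}$ the per-DAG gain $D(\I,T_r^{X})$ is submodular (using Lemma \ref{lem:gain}: the gain depends only on the component containing the root, and $\I^1\subseteq\I^2$ forces the root components to be nested, $C_{\I^2}\subseteq C_{\I^1}$), and then invokes $\mathcal{D}(\I)=\frac{1}{p_u}\sum_r\sum_X D(\I_r,T_r^X)$ together with closure of submodularity under nonnegative combinations; this mirrors the later general-structure argument (Theorem \ref{thm:submodular}) and sidesteps the $-k/p_u$ term entirely. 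You work directly with the closed form \eqref{eq:treegain}, reduce to $g(\I)=|\tT|^2-\sum_j|C_j(\I)|^2$ per component, and prove submodularity by an explicit branch-size computation, then absorb the modular $-|\I|/p_u$ term; this buys explicit marginal-gain formulas and, as a byproduct, a cleaner treatment of monotonicity than the paper's appendix (which glosses over the $-k/p_u$ term). One small slip: the displayed marginal gain should be $\bigl(1+\sum_\ell a_\ell\bigr)^2-\sum_\ell a_\ell^2 = 1+2\sum_\ell a_\ell+2\sum_{\ell<m}a_\ell a_m$; your extra ``$-1$'' is spurious. It is harmless for submodularity (the same constant appears on both sides), and with the corrected formula the marginal gain of $g$ is always at least $1$ (even when $X$ is isolated in $\tT_r\setminus\I_r$), so the edge case you flag at the end of your monotonicity argument disappears.
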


Our general greedy algorithm is presented in Algorithm \ref{algorithm:GG}. We define the \emph{marginal gain} of variable $X$ when the previous chosen set is $\I$ as 
\begin{equation}
\label{eq:marginalgain}
\Delta_X(\mathcal{I})=\mathcal{D}(\I\cup \{X\})-\mathcal{D}(\mathcal{I}). 
\end{equation}
The greedy algorithm iteratively adds a variable which has the largest marginal gain to the target set until it runs out of budget. For any input set $\I$, in order to calculate the value of $\mathcal{D}(\I)$, we use the equation in \eqref{eq:treegain}. 
Note that $\mathcal{D}(\I)$ can be  computed efficiently from \eqref{eq:treegain} as it is just needed to obtain the size of resulted components after removing variables in $\I$. To do so, we can run DFS algorithm on each component. In each DFS call, the size of a component is obtained by visiting the variables in it. Then, we will call DFS  on the next unvisited component until there is no unvisited variable in the essential graph. Therefore,   $\mathcal{D}(\I)$ can be computed in $\mathcal{O}(p)$ since the total number of edges in all components is in the order of $\mathcal{O}(p)$.
%\begin{proposition}
%
%For tree causal structures, the general greedy algorithm 
%the computational complexity of general greedy approach is 
%
%	For essential graph of order $p$, the XYZ approach runs in time $\mathcal{O}(?!)$.
%\end{proposition}

\begin{algorithm}[t]
\begin{algorithmic}
 \STATE {\bf input:} Essential graph from the observational stage, budget $k$.
 \STATE {\bf initialize:}  $\mathcal{I}_0=\emptyset$
\FOR{$i=1$ to $k$}
\STATE $X_i=\arg\max_{X\in V\setminus \I_{i-1}}\mathcal{D}(\I_{i-1}\cup \{X\})-\mathcal{D}(\I_{i-1})$
\STATE $\I_i=\I_{i-1}\cup\{X_i\}$
\ENDFOR
\STATE {\bf output:} $\hat{\I}=\I_k$
 \caption{General Greedy Algorithm}
 \label{algorithm:GG}
\end{algorithmic}
\end{algorithm}

% Experiment Design for General Structures
\section{Experiment Design for General Structures}
\label{sec:general}

In this section we consider experiment design for the case of general structures, formulated in optimization problem \eqref{eq:bayes}.
We first generalize Proposition \ref{prop:treeavg} by showing that the function $\mathcal{D}$ defined in \eqref{eq:summ} is monotonically increasing and submodular. 
\begin{proposition}
	\label{prop:mono}
The set function $\mathcal{D}$ defined in \eqref{eq:summ} is monotonically increasing.
\end{proposition}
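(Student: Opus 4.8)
The plan is to show that adding a variable to the intervention target set can never decrease the gain on any fixed ground truth DAG, and then average. Concretely, I would fix an arbitrary $G_i \in \MEC(G^*)$ and two target sets $\I_1 \subseteq \I_2 \subseteq V$, and argue that $D(\I_1, G_i) \le D(\I_2, G_i)$. Since $\mathcal{D}(\I)$ is just the uniform average of $D(\I, G_i)$ over $G_i \in \MEC(G^*)$ (equation \eqref{eq:summ}), monotonicity of each $D(\cdot, G_i)$ immediately gives monotonicity of $\mathcal{D}$.

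The key step is the per-DAG monotonicity claim. Recall that $R(\I, G_i)$ is constructed in two stages: first one takes $A(\I, G_i)$, the set of all edges incident to vertices in $\I$ oriented according to $G_i$ (by Lemma \ref{lem:neigh}); then one closes under the Meek rules to obtain all edges of $\tG^{(\I)}_i$ that are directed but were undirected in $\tG^*$. I would note that $\I_1 \subseteq \I_2$ implies $A(\I_1, G_i) \subseteq A(\I_2, G_i)$, since every vertex of $\I_1$ is a vertex of $\I_2$ and the orientations agree (they are both read off from $G_i$). Then I would invoke the fact that the Meek-rule closure operator is monotone: if $B \subseteq B'$ are two sets of oriented edges (consistent with $\tG^*$ and with the true DAG $G_i$), then the closure of $B$ is contained in the closure of $B'$. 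This is because each Meek rule only ever adds orientations, never removes or reverses them, so running the closure starting from the larger set $B'$ can replicate every orientation step taken from $B$ and possibly more. Consequently $R(\I_1, G_i) \subseteq R(\I_2, G_i)$, hence $D(\I_1, G_i) = |R(\I_1, G_i)| \le |R(\I_2, G_i)| = D(\I_2, G_i)$.

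Averaging over $G_i \in \MEC(G^*)$ with uniform weights $1/|\MEC(G^*)|$ preserves the inequality, so $\mathcal{D}(\I_1) \le \mathcal{D}(\I_2)$, which is the claim. I would also remark that $\mathcal{D}(\emptyset) = 0$ since an empty experiment orients no new edges for any $G_i$; this is needed later for the Nemhauser et al.\ approximation guarantee but is trivial here.

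The main obstacle is making the monotonicity of the Meek-rule closure rigorous: one must be careful that the Meek rules, as applied in $\tG^{(\I)}_i$, take as input not only $A(\I, G_i)$ but also the pre-existing directed edges of $\tG^*$ (the v-structure orientations and their consequences), and that this common ``background'' is the same for $\I_1$ and $\I_2$. The cleanest framing is to treat the closure as a function of the starting set of oriented edges relative to a fixed skeleton and fixed v-structures, and observe that the closure operator is a monotone operator on the lattice of partial orientations — this is essentially folklore about the Meek rules, and a short argument (each rule is an implication that can only fire ``more often'' on a superset) suffices. Everything else is bookkeeping.
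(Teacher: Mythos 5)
Your proposal is correct and follows essentially the same route as the paper's proof: fix $G_i\in\MEC(G^*)$, note $\I_1\subseteq\I_2$ gives $A(\I_1,G_i)\subseteq A(\I_2,G_i)$, deduce $R(\I_1,G_i)\subseteq R(\I_2,G_i)$ from the monotonicity of the Meek-rule closure (which the paper invokes via soundness and order-independence of the Meek algorithm), and then average over the class. Your closure-monotonicity discussion is in fact slightly more explicit than the paper's one-line appeal, so nothing is missing.
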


We use the following lemma in the proof of submodularity of the function $\mathcal{D}$. 
\begin{lemma}   
\label{lem:nofusion}
Let $\I_1$ and $\I_2$ be arbitrary subsets of variables of a DAG $G$. We have
\[
R(\I_1\cup \I_2,G)=R(\I_1,G)\cup R(\I_2,G).
\]
\end{lemma}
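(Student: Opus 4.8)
The plan is to prove the two inclusions separately. First I would establish the easy inclusion $R(\I_1,G)\cup R(\I_2,G)\subseteq R(\I_1\cup\I_2,G)$. This follows from monotonicity of the set of oriented edges: performing a superset of interventions can only orient more edges. Concretely, $A(\I_1,G)\subseteq A(\I_1\cup\I_2,G)$ since every edge incident with a vertex of $\I_1$ is also incident with a vertex of $\I_1\cup\I_2$, and the Meek rules are themselves monotone (applying them to a larger oriented set yields a larger closure). Since $R(\cdot,G)$ is the Meek-closure of $A(\cdot,G)$ minus the edges already oriented in $\tG^*$, we get $R(\I_1,G)\subseteq R(\I_1\cup\I_2,G)$, and symmetrically for $\I_2$; taking the union gives the inclusion.

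The substantive direction is $R(\I_1\cup\I_2,G)\subseteq R(\I_1,G)\cup R(\I_2,G)$. Here I would argue that the interventional essential graph for $\I_1\cup\I_2$ cannot orient any edge beyond those already orientable by $\I_1$ alone or by $\I_2$ alone. The cleanest route is via the interventional Markov equivalence characterization: an edge is oriented in $\tG^{(\I)}$ exactly when it has the same direction in every DAG of $\I$-$\MEC(G)$. Suppose some edge $e=\{X_a,X_b\}$ lies in $R(\I_1\cup\I_2,G)$ but not in $R(\I_1,G)\cup R(\I_2,G)$. Then there exist $G_1\in\I_1$-$\MEC(G)$ and $G_2\in\I_2$-$\MEC(G)$ in which $e$ has the two opposite orientations. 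The goal is to derive a contradiction by exhibiting a DAG in $(\I_1\cup\I_2)$-$\MEC(G)$ that disagrees with $G$ on $e$ — which would contradict $e\in R(\I_1\cup\I_2,G)$ — or alternatively to show directly that the Meek-rule closure is distributive in the sense that $\text{Meek}(A_1\cup A_2)=\text{Meek}(A_1)\cup\text{Meek}(A_2)$ on the orientations forced here. The first approach seems more robust: one should show that the orientations forced by $\I_1\cup\I_2$ are exactly the union of those forced by each, since an intervention on a singleton $X_i$ only directly orients edges incident to $X_i$, and the essential-graph/Meek machinery propagates from there; combining two target sets just takes the union of the two seed sets before running Meek, and Meek closure of a union of consistent orientations that each individually avoid creating new v-structures equals the union of the closures when the original graph is a (chordal) chain component.

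The main obstacle I anticipate is rigorously handling the Meek-rule propagation: it is not true in general that $\text{Meek}(A\cup B)=\text{Meek}(A)\cup\text{Meek}(B)$ for arbitrary orientation sets, so the proof must use the special structure here — namely that $A(\I,G)$ always consists of \emph{all} edges incident to the intervened vertices, oriented consistently with the fixed ground-truth DAG $G$. I would exploit the fact (from Lemma \ref{lem:neigh} and the structure of UCEGs) that knowing the orientation of all edges at a vertex set, together with chordality and v-structure-freeness within each chain component, means the recovered orientations in $R(\I,G)$ admit a clean combinatorial description (e.g., in the tree case, Lemma \ref{lem:gain} already gives it explicitly via connected components). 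Using such a description, the identity $R(\I_1\cup\I_2,G)=R(\I_1,G)\cup R(\I_2,G)$ should reduce to a purely combinatorial statement about how removing/seeding vertex sets interacts with the union operation, which can then be verified directly.
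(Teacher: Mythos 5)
Your first inclusion, $R(\I_1,G)\cup R(\I_2,G)\subseteq R(\I_1\cup\I_2,G)$, is fine and matches the paper: it follows from $A(\I_1,G)\subseteq A(\I_1\cup\I_2,G)$ and monotonicity/soundness of the Meek closure, exactly as in the proof of Proposition \ref{prop:mono}. The problem is the other inclusion, which is the whole content of the lemma, and your proposal does not actually prove it. You correctly identify the crux --- that $\text{Meek}(A_1\cup A_2)=\text{Meek}(A_1)\cup\text{Meek}(A_2)$ fails for arbitrary orientation sets and one must exploit the special form of $A(\I,G)$ --- but then both of your suggested routes stop at the point where the real work begins. The first route (producing a DAG in $(\I_1\cup\I_2)$-$\MEC(G)$ that flips $e$, starting from $G_1\in\I_1$-$\MEC(G)$ and $G_2\in\I_2$-$\MEC(G)$ with opposite orientations of $e$) requires merging two DAGs that each agree with $G$ only on their own intervention neighborhoods into a single DAG consistent with both; that merging step is essentially equivalent to the lemma and is not carried out. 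The second route, distributivity of the Meek closure for seed sets of the form $A(\I,G)$, is what the paper actually proves, and it is not something that ``can be verified directly'': one must show that the set $A(\tG^*)\cup R(\I_1,G)\cup R(\I_2,G)$ is already closed under all four Meek rules, i.e., that no rule can orient a new edge $e\notin R(\I_1,G)\cup R(\I_2,G)$.

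In the paper's proof, Rules 1, 3 and 4 are dispatched quickly because their directed premises can be attributed to a single $\I_j$, but Rule 2 is genuinely hard: its two directed premises $A\rightarrow C$ and $C\rightarrow B$ may come one from $R(\I_1,G)$ and one from $R(\I_2,G)$, and ruling this out requires showing that no experiment can orient $A\rightarrow C$ inside a triangle while leaving both $A-B$ and $C-B$ unoriented (the structure $S_0$). The paper does this by a case analysis over which Meek rule produced $A\rightarrow C$, culminating in an induction that forces an ever-growing sequence of complete subgraphs ($S_{10}$, $K_5$, $K_i$, \dots), which is impossible in a finite graph. Nothing in your proposal anticipates or replaces this argument; your appeal to the tree case (Lemma \ref{lem:gain}) does not help, since the lemma is needed for general chordal chain components with triangles, which is exactly where the Rule 2 difficulty lives. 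So the proposal is a reasonable plan with the correct easy half, but it has a genuine gap at the decisive step.
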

As mentioned in Section \ref{sec:desc}, from an experiment with target set $\I$, one learns the direction of all the edges incident with the vertices in $\I$, denoted by $A(\I,G)$, and then  the extra edges in the interventional essential graph can be obtained by, say, using the Meek rules starting from  $A(\I,G)$.
Lemma \ref{lem:nofusion} implies that the set of resolved edges in the essential graph starting from $A(\I_1\cup \I_2,G)$ is the same as the set of edges whose direction is resolved either in the essential graph starting from $A(\I_1,G)$ or in the essential graph starting from $A(\I_2,G)$.

\begin{theorem}
\label{thm:submodular}
The set function $\mathcal{D}$ defined in \eqref{eq:summ} is a submodular function.
\end{theorem}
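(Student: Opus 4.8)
The plan is to reduce submodularity of $\mathcal{D}$, which is an average over $\MEC(G^*)$ of the gains $D(\I, G_i)$, to submodularity of each individual gain function $\I \mapsto D(\I, G_i)$; since a nonnegative linear combination of submodular functions is submodular, and $\mathcal{D}(\I) = \frac{1}{|\MEC(G^*)|}\sum_{G_i} D(\I, G_i)$, it suffices to fix an arbitrary $G_i \in \MEC(G^*)$ and show $\I \mapsto D(\I, G_i) = |R(\I, G_i)|$ is submodular. So the whole theorem rests on the set-valued map $\I \mapsto R(\I, G_i)$, and here is where Lemma \ref{lem:nofusion} does the heavy lifting: it tells us that $R(\cdot, G_i)$ is a \emph{union-closed} (modular-in-the-union sense) set map, i.e. $R(\I_1 \cup \I_2, G_i) = R(\I_1, G_i) \cup R(\I_2, G_i)$.

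First I would record the standard fact that if $g: 2^V \to 2^E$ is any set-valued function satisfying $g(\I_1 \cup \I_2) = g(\I_1) \cup g(\I_2)$ for all $\I_1, \I_2$ (in particular $g$ is monotone: $\I_1 \subseteq \I_2 \Rightarrow g(\I_1) \subseteq g(\I_2)$), then $\I \mapsto |g(\I)|$ is monotone and submodular. To see submodularity directly, fix $\I_1 \subseteq \I_2 \subseteq V$ and $X \in V$. Using the union identity,
\[
|g(\I_1 \cup \{X\})| - |g(\I_1)| = |g(\{X\}) \setminus g(\I_1)|,
\]
because $g(\I_1 \cup \{X\}) = g(\I_1) \cup g(\{X\})$, and similarly
\[
|g(\I_2 \cup \{X\})| - |g(\I_2)| = |g(\{X\}) \setminus g(\I_2)|.
\]
Since $\I_1 \subseteq \I_2$ implies $g(\I_1) \subseteq g(\I_2)$, we get $g(\{X\}) \setminus g(\I_2) \subseteq g(\{X\}) \setminus g(\I_1)$, hence $|g(\{X\}) \setminus g(\I_2)| \le |g(\{X\}) \setminus g(\I_1)|$, which is exactly the submodularity inequality $|g(\I_1 \cup \{X\})| - |g(\I_1)| \ge |g(\I_2 \cup \{X\})| - |g(\I_2)|$.

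Then I would apply this with $g(\I) = R(\I, G_i)$: Lemma \ref{lem:nofusion} gives the union identity, monotonicity of $R(\cdot, G_i)$ follows from it (take $\I_2 = \I_1 \cup \I_2$), so $D(\cdot, G_i) = |R(\cdot, G_i)|$ is submodular for every $G_i \in \MEC(G^*)$. Averaging over the (finitely many, equally weighted) members of $\MEC(G^*)$ preserves submodularity, so $\mathcal{D}$ is submodular. The only genuinely non-routine ingredient is Lemma \ref{lem:nofusion} itself — the claim that Meek-rule closure of edge orientations commutes with union of intervention targets — but that lemma is already granted by the excerpt, so the remaining argument is the short set-counting manipulation above; I expect no real obstacle beyond being careful that the union identity is used in the form $g(\I \cup \{X\}) = g(\I) \cup g(\{X\})$ and that the singleton $X$ is allowed to lie in $\I_2$ (which is fine since $\mathcal{D}$ is monotone, per the footnote to the definition of submodular).
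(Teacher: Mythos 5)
Your proposal is correct and follows essentially the same route as the paper's proof: reduce to submodularity of each $D(\cdot,G_i)$ via the union identity of Lemma \ref{lem:nofusion}, note that the marginal gain equals $|R(\{X\},G_i)|$ minus its overlap with $R(\I,G_i)$ (your $|g(\{X\})\setminus g(\I)|$ is the same quantity), use monotonicity of $R(\cdot,G_i)$ to compare, and then average over the equivalence class. The only difference is presentational — you phrase it as a general fact about union-closed set-valued maps — which adds nothing substantive beyond the paper's argument.
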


Equipped with Proposition \ref{prop:mono} and Theorem \ref{thm:submodular}, we can again use Algorithm \ref{algorithm:GG}, to obtain an $(1-\frac{1}{e})$-approximation of the optimal solution of optimization problem \eqref{eq:bayes}.
However, as mentioned in Section \ref{sec:desc}, another challenge regarding solving the optimization problem \eqref{eq:bayes} is the computational aspect of calculating $\mathcal{D}(\mathcal{I})$ for a given experiment target set $\mathcal{I}$.
In Section \ref{sec:tree}, for the case of tree structures, for a given set $\I$, we calculated the value of $\mathcal{D}(\I)$ efficiently by applying DFS algorithm; yet this approach cannot be extended to the case of general structures. In the following subsections, we propose efficient methods for exact calculation and estimation of $\mathcal{D}(\I)$ for general structures.%\footnote{These results were first presented in \citep{ghassami2018budgeted} and \citep{ghassami2018counting}}

\begin{remark}
As seen in the proof of Theorem \ref{thm:submodular}, for DAG $G$ and $\I\subseteq V(G)$, the set function $D(\I,G)$ is submodular. However, the minimum of submodular functions is not necessarily submodular. Hence, Algorithm \ref{algorithm:GG}, is not necessarily a $(1-\frac{1}{e})$-approximation algorithm for the case of worst-case gain in optimization problem \eqref{eq:mM}. Nevertheless, our experiment results in Section \ref{sec:exp} suggest that Algorithm \ref{algorithm:GG} leads to high performance in the case of worst-case gain as well.
\end{remark}

\subsection{Exact Calculation of $\mathcal{D}(\I)$}
\label{sec:D(I)calc}

In this section, we show that a method for counting the number of elements in a MEC can be used for calculating $\mathcal{D}(\I)$.
For an essential graph $\tG$, we define the size of its corresponding MEC as the number of DAGs in the class and denote it by $\Size(\tG)$. 
Let $\{\tG_1,...,\tG_R\}$ be the chain components of $\tG$. $\Size(\tG)$ can be calculated from the size of chain components using the following equation \citep{gillispie2002size,he2008active}:
\begin{equation}
\label{eq:prod}
\textit{Size}(\tG)=\prod_{r=1}^R\textit{Size}(\tG_r).
\end{equation}
%Therefore, counting can be done separately in each chain component. Hence, without loss of generality, we can focus on the problem of finding the size of a MEC for which the essential graph is a chain component, i.e., an undirected connected chordal graph (UCCG), in which none of the members of the MEC are allowed to contain any v-structures.
%In order to solve this problem, the authors of \cite{he2015counting} showed that there are five types of MECs whose sizes can be calculated with five formulas, and that for any other MEC, it can be partitioned recursively into smaller subclasses until the sizes of all subclasses can be calculated from the five formulas. In the following we explain the partitioning method as it is relevant to this work as well.
Therefore, it suffices to calculate the size of UCEGs $\tG_1,...,\tG_R$.
%Let $\tG_i$ be a UCEG and $G$ be a DAG in $\MEC(\tG_i)$. 
\begin{definition}
	Let $\tG_r$ be a UCEG. The $X$-rooted subclass of $\MEC(\tG_r)$ is the set of all $X$-rooted DAGs in $\MEC(\tG_r)$. This subclass can be represented by the $X$-rooted graph $\tG_r^{X}=(V(\tG_r^X),E(\tG_r^{X}))$, called the $X$-rooted essential graph, where $V(\tG_r^X)=V(\tG_r)$, and $E(\tG_r^X)=\bigcup\{E(G):G\in X\textit{-rooted subclass of }\MEC(\tG_r)\}$.
\end{definition}
For instance, for UCEG $\tG$ in Figure \ref{fig:excount}$(a)$, $\tG^{X_1}$ and $\tG^{X_2}$ are depicted in Figures \ref{fig:excount}$(b)$ and \ref{fig:excount}$(d)$, respectively.

\begin{figure}[t]
\begin{center}
\includegraphics[scale=0.55]{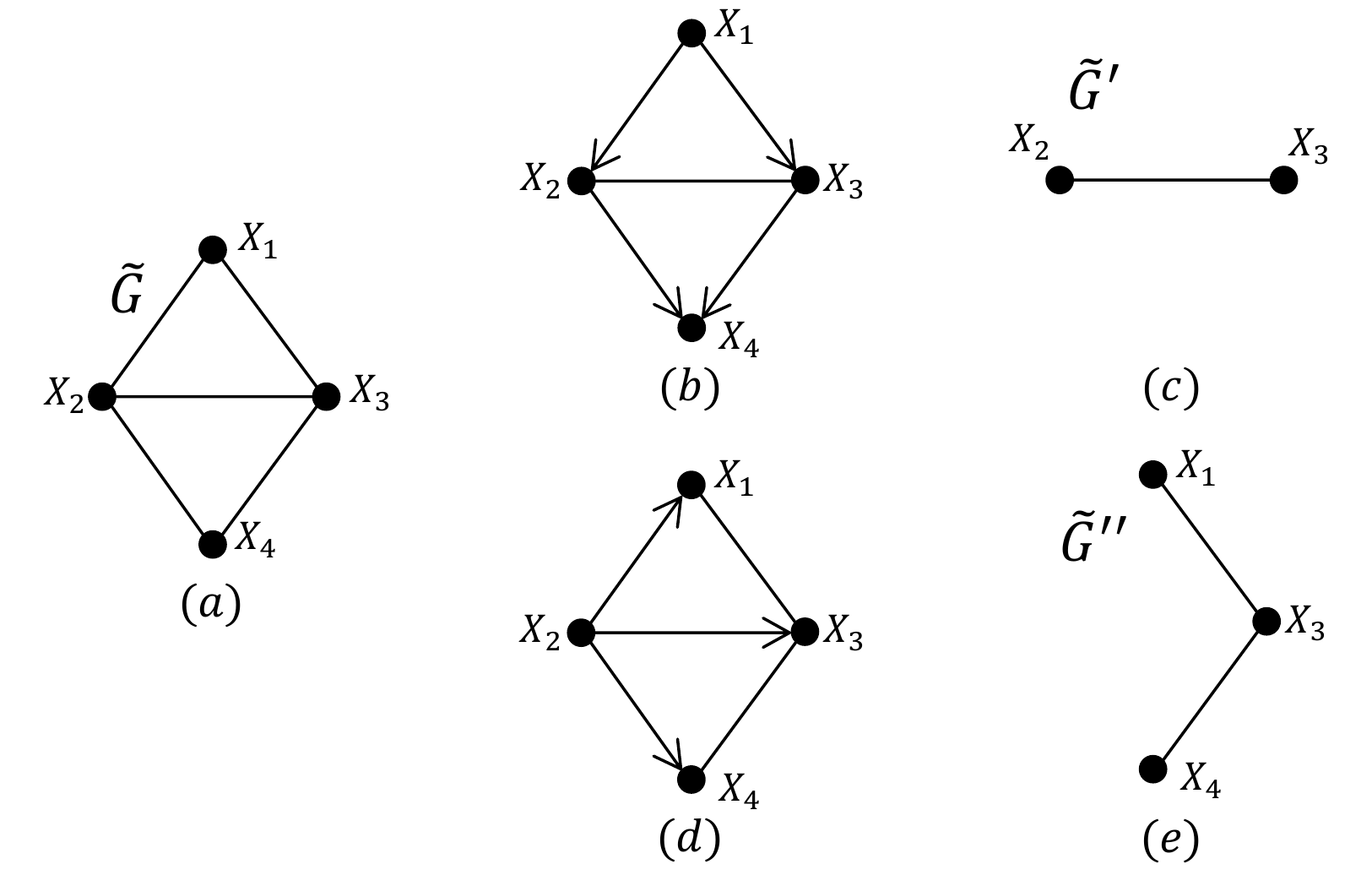}
\caption{Example of the counting and sampling approach.}
\label{fig:excount}
\end{center}
\end{figure}

\begin{lemma}
\label{lem:partition}
	\citep{he2015counting} Let $\tG_r$ be a UCEG. For any $X\in V(\tG_r)$, the $X$-rooted subclass is not empty and the set of all $X$-rooted subclasses partitions $\MEC(\tG_r)$.
\end{lemma}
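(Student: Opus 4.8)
The plan is to treat the two assertions separately, with the partition part being essentially immediate from earlier results and the non-emptiness part carrying all the work. For the partition: since $\tG_r$ is a UCEG, every $G\in\MEC(\tG_r)$ is a connected, v-structure-free DAG, so by Lemma \ref{lem:root} it has exactly one root variable. Assigning to each such $G$ its unique root therefore gives a well-defined map $\rho:\MEC(\tG_r)\to V(\tG_r)$, and the fiber $\rho^{-1}(X)$ is by definition the $X$-rooted subclass. The fibers of any map are pairwise disjoint and cover the domain, so $\{\rho^{-1}(X):X\in V(\tG_r)\}$ is automatically a partition of $\MEC(\tG_r)$ into the $X$-rooted subclasses, \emph{provided} no fiber is empty. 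Thus the whole lemma reduces to showing: for every $X\in V(\tG_r)$ there exists an $X$-rooted DAG in $\MEC(\tG_r)$.

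For this I would use that $\tG_r$, being a UCEG, is a connected chordal undirected graph, and that $\MEC(\tG_r)$ consists precisely of the DAGs whose skeleton is $\tG_r$ and that contain no v-structure. (One inclusion is the fact noted in the text that members of $\MEC(\tG)$ of a UCEG have no v-structures; the converse follows from Lemma \ref{lem:verma}, since a DAG with skeleton $\tG_r$ and no v-structure has the same skeleton and v-structures as any member of $\MEC(\tG_r)$, hence is Markov equivalent to it.) Now fix $X$ and run Maximum Cardinality Search on $\tG_r$ starting from $X$, obtaining a vertex ordering $X=v_1,v_2,\ldots,v_n$. By the Tarjan--Yannakakis characterization of chordal graphs, the reverse of this ordering is a perfect elimination ordering; equivalently, for every $i$ the set $N(v_i)\cap\{v_1,\ldots,v_{i-1}\}$ of earlier neighbors of $v_i$ is a clique of $\tG_r$. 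Orient every edge of $\tG_r$ from its earlier endpoint to its later endpoint in this ordering, producing a directed graph $D_X$.

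It then remains to check that $D_X$ is an $X$-rooted member of $\MEC(\tG_r)$. Acyclicity is immediate because the orientation respects the linear order $v_1<\cdots<v_n$. Since $\tG_r$ is connected, for every $i\ge 2$ the vertex $v_i$ has at least one neighbor among $v_1,\ldots,v_{i-1}$ (MCS never has to restart in a new component), so $v_i$ has an incoming edge; hence $X=v_1$ is the unique source, i.e.\ the unique root of $D_X$. Finally $D_X$ has no v-structure: if $v_a\to v_c\leftarrow v_b$ were one, then $v_a$ and $v_b$ would both lie in $N(v_c)\cap\{v_1,\ldots,v_{c-1}\}$, hence be adjacent, contradicting the definition of a v-structure. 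Therefore $D_X$ has skeleton $\tG_r$ and no v-structure, so $D_X\in\MEC(\tG_r)$, and its root is $X$; this establishes non-emptiness of every fiber and completes the argument.

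The main obstacle I anticipate is the ordering step: justifying cleanly that an enumeration starting at a prescribed vertex $X$ exists in which every vertex's set of earlier neighbors forms a clique. The MCS route above packages this into a standard theorem, but an equivalent self-contained alternative is an induction on $|V(\tG_r)|$ using Dirac's theorem (a non-complete chordal graph has two non-adjacent simplicial vertices): repeatedly peel off a simplicial vertex different from $X$ to build a perfect elimination ordering ending at $X$, then reverse it. Either way, once the ordering is in hand the remaining verifications (acyclicity, unique source at $X$, absence of v-structures) are routine.
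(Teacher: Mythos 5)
Your argument is correct, but note that the paper itself does not prove this lemma: it is imported directly from \citep{he2015counting}, so there is no in-paper proof to compare against and your self-contained derivation is necessarily a different route. Your reduction of the partition claim to Lemma \ref{lem:root} is exactly right: every member of $\MEC(\tG_r)$ is a connected, v-structure-free DAG, so the root map is well defined and its fibers are precisely the $X$-rooted subclasses, leaving only non-emptiness to establish. Your construction for non-emptiness is also sound: since $\tG_r$ is connected and chordal, an MCS ordering started at $X$ (equivalently, the reverse of a perfect elimination ordering obtained by repeatedly peeling a simplicial vertex different from $X$, via Dirac's theorem) has the property that each vertex's set of earlier neighbors is a clique and, by connectivity, is non-empty for every vertex after $X$; orienting each edge from earlier to later endpoint thus yields an acyclic orientation whose unique source is $X$ and which contains no v-structures, and the characterization of $\MEC(\tG_r)$ as the v-structure-free DAGs with skeleton $\tG_r$ (one inclusion being the corollary of Lemma \ref{lem:verma} noted in Section \ref{sec:prelim}, the other a direct application of Lemma \ref{lem:verma}, given that the class is non-empty) places this DAG in the class with root $X$. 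This is essentially the same chordality-based mechanism that underlies the counting results of \citep{he2015counting}; what your write-up buys is that the lemma becomes self-contained within the present paper's toolkit (Lemmas \ref{lem:verma} and \ref{lem:root} plus standard chordal-graph facts), at the cost of invoking the Tarjan--Yannakakis/Dirac machinery that the paper sidesteps by citation.
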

From Lemma \ref{lem:partition} we have
\begin{equation}
\label{eq:sum}
\Size(\tG_r)=\sum_{X\in V(\tG_r)}\textit{Size}(\tG_r^{X}).
\end{equation}
Therefore, using equations \eqref{eq:prod} and \eqref{eq:sum}, we have
\begin{equation}
\label{eq:reccount}
\Size(\tG)=\prod_{r=1}^R\sum_{X\in V(\tG_r)}\textit{Size}(\tG_r^{X}).
\end{equation}

%%%%%%%%%%%%%%%%%%%%%
% This is obvious because the interventional essential graph is noting but an essential graph when interventional variables are also considered. the rooted one is also an essential when we add one 'interventional vertex' to the root
%%%%%%%%%%%%%%%%%%%%%

$\tG_r^X$ could be viewed as an essential graph, as it can be considered as an interventional essential graph with target variable $X$, for which in the underlying DAG all of edges incident to $X$ are outgoing edges.
%$\tG_r^X$ could be viewed as an essential graph, where in the underlying DAG all of edges incident to $X$ are outgoing edges, and there is one extra auxiliary variable, which is the parent of $X$ (and hence, determines the direction of the outgoing edges). 
Hence, the number of DAGs in its corresponding $X$-rooted subclass can be calculated via equation \eqref{eq:prod}. Therefore, using equation \eqref{eq:reccount}, $\Size(\tG)$ can be obtained recursively: In each chain component, each variable is set as the root variable, and in each chain component of each of the resulting rooted essential graphs, each variables is set as the root, and this procedure is repeated until the resulting essential graph is a directed graph and has no chain components. 

Note that in this procedure, after setting each variable as the root, we observe the directions that the edges in the rooted essential graph acquire. That is, it has the property that we explicitly monitor the performed orientations in the given essential graph. The approach that we present in the following for calculating and estimating the value of $\mathcal{D}(\I)$ requires this property. 
Therefore, methods for calculation of the size of the MEC which are based on explicit functions of the parameters of the structure cannot be used in our approach. For instance, \cite{he2015counting} showed that there are five types of MECs whose sizes can be formulated as functions of the number of vertices; e.g., for a tree UCEG of order $p$, the size of the MEC is $p-1$.
An efficient counting approach with our desired property of monitoring the performed orientations is proposed by \cite{ghassami2018counting}, where the counting is performed based on the clique tree representation of the essential graph.

\begin{example}
\label{ex:excount}	
Assume the UCEG in Figure \ref{fig:excount}$(a)$ is the given essential graph.

Setting vertex $X_1$ as the root of $\tG$ (by symmetry, $X_4$ is similar), in the rooted essential graph $\tG^{X_1}$, the directed edges are $X_1\rightarrow X_2$, $X_1\rightarrow X_3$, $X_2\rightarrow X_4$, and $X_3\rightarrow X_4$. This rooted essential graph is shown in Figure \ref{fig:excount}$(b)$, which has a single chain component $\tG'$, (Figure \ref{fig:excount}$(c)$). Setting vertex $X_2$ as the root of $\tG'$ (by symmetry, $X_3$ is similar), in the rooted essential graph $\tG'^{X_2}$, the directed edge is $X_2\rightarrow X_3$. This results in a directed graph, thus, $\Size(\tG'^{X_2})=1$. Similarly, $\Size(\tG'^{X_3})=1$. Therefore, using equation \eqref{eq:sum}, we have $\Size(\tG^{X_1})=\Size(\tG'^{X_2})+\Size(\tG'^{X_3})=2$. Similarly, we have $Size(\tG^{X_4})=2$.

Setting vertex $X_2$ as the root of $\tG$ (by symmetry, $X_3$ is similar), in the rooted essential graph $\tG^{X_2}$, the directed edges are $X_2\rightarrow X_1$, $X_2\rightarrow X_3$, and $X_2\rightarrow X_4$. This rooted essential graph is shown in Figure \ref{fig:excount}$(d)$, which has a single chain component $\tG''$, (Figure \ref{fig:excount}$(e)$). Setting vertex $X_1$ as the root of $\tG''$, in the rooted essential graph $\tG''^{X_1}$, the directed edges are $X_1\rightarrow X_3$ and $X_3\rightarrow X_4$. This results in a directed graph, thus, $\Size(\tG''^{X_1})=1$. Similarly, $\Size(\tG''^{X_3})=1$ and $\Size(\tG''^{X_4})=1$. Therefore, using equation \eqref{eq:sum}, we have $\Size(\tG^{X_2})=\Size(\tG''^{X_1})+\Size(\tG''^{X_3})+\Size(\tG''^{X_4})=3$. Similarly, we have $\Size(\tG^{X_3})=3$.
 
Finally, using equation \eqref{eq:sum}, we obtain that $\Size(\tG)=\sum_i \Size(\tG^{X_i})=10$.
\end{example}

Now consider the task of counting the number of elements of a $\MEC(\tG)$ in the presence of prior knowledge regarding the direction of a subset of the undirected edges of the essential graph. We present the available prior knowledge in the form of a hypothesis graph $H=(V(H),E(H))$, which is the same as $\tG$, yet the orientation of the edges corresponding to the prior knowledge are determined as well.
%Consider an essential graph $\tG$. For $\tG$, we call a hypothesis realizable if there exists a member of $\MEC(G^*)$ with directed edges consistent with the hypothesis. In other words, a hypothesis is realizable if the rest of the edges in $G^*$ can be oriented without creating any v-structures or cycles. More formally:
%\begin{definition} 
%For an essential graph $G^*$, a hypothesis graph $H=(V(H),E(H))$ is called realizable if there exists a DAG $D$ in $\MEC(G^*)$, for which $E(D)\subseteq E(H)$.
%\end{definition}
For essential graph $\tG$, let $\Size_H(\tG)$ denote the number of the elements of $\MEC(\tG)$, which are consistent with hypothesis $H$, i.e., $\Size_H(\tG)=|\{G:G\in \MEC(\tG),E(G)\subseteq E(H)\}|$. 
%Hypothesis $H$ is realizable if $\Size_H(G^*)\neq 0$.
%As seen in Lemma \ref{lem:XXX}, each chain component $\tG_i$ contains exactly one root variable. We utilize this property to calculate $\Size_H(\tG)$ for a hypothesis graph $H$.
%Consider essential graph $\tG$ with chain components $\{\tG_1,..., \tG_R\}$. 
Similar to equation \eqref{eq:prod}, we have
$\textit{Size}_H(\tG)=\prod_{r=1}^R\Size_H(\tG_r)$.
Also, akin to equation \eqref{eq:sum}, for chain component of $\tG$, we have
$\Size_H(\tG_r)=\sum_{X\in V(\tG_r)}\Size_H(\tG_r^X)$.
Therefore, in order to extend the counting approach to the case of having prior knowledge, every time that a variable is chosen as the root of a UCEG, we check if the resulting oriented edges belong to $E(H)$. If this is not the case, for $X$-rooted essential graph $\tG^X$, we return $\Size(\tG^X)=0$. This guarantees that any DAG considered in the counting will be consistent with the hypothesis $H$. See Algorithm \ref{algorithm:counting} for a pseudo-code of the proposed counting approach with prior knowledge.
If $H=\tG$ it implies that we have no prior knowledge, and the algorithm outputs $\Size(\tG)$.
 Note that the ability of checking the consistency of the oriented edges with the hypothesis is the reason that we stated earlier that the property of monitoring the performed orientations in the given essential graph is required in our approach.

\begin{algorithm}[t]
\begin{algorithmic}
 \STATE {\bf input} Essential graph $\tG$, Hypothesis graph $H$.
 \STATE {\bf output} $\Counter(\tG,H)$\\
  \hrulefill
 \STATE {\bf function} $\Counter(\tG,H)$:
 \IF{$\tG$ is a directed graph} 
 \RETURN $1$.
 \ELSE
\FOR{each chain component $\tG_r$ of $\tG$}
\FOR{$X\in V(\tG_r)$}
\STATE {\bf if} $E(\tG^X_r)\subseteq E(H)$
{\bf then} $\Size(\tG^X_r)=\Counter(\tG^X_r,H)$
{\bf else} $\Size(\tG^X_r)=0$
{\bf end if}
%\IF {$E(\tG_i)\subseteq E(H)$}
%\STATE $\Size(\tG^X_i)=f(\tG^X_i,H)$
%\ELSE
%\STATE $\Size(\tG^X_i)=0$
%\ENDIF
\ENDFOR
\STATE $\Size(\tG_r)=\sum_X\Size(\tG^X_r)$
\ENDFOR
\RETURN $\prod_r\Size(\tG_r)$
\ENDIF
 \caption{Counting with Prior Knowledge}
 \label{algorithm:counting}
\end{algorithmic}
\end{algorithm}

\begin{proposition}
\label{prop:countcomp}
For a given essential graph with maximum vertex degree $\Delta$, the computational complexity of Algorithm \ref{algorithm:counting} is $\mathcal{O}(p^{\Delta+2})$.
\end{proposition}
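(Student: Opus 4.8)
The plan is to bound the running time of \Counter\ by counting the total number of recursive calls and multiplying by the per-call cost. The recursion, starting from $\tG$, repeatedly picks a variable $X$ as the root of a chain component and passes the $X$-rooted essential graph $\tG^X_r$ to a recursive call; each such call strictly increases the number of oriented edges (orienting at least the edges incident to $X$ within that component), so the recursion depth is at most $p-1$, and more precisely it is bounded by the length of the longest chain of nested chain components. The key quantity to control is the \emph{branching}: at a node whose argument has chain components $\tG_1,\dots,\tG_R$, we branch over $\sum_r |V(\tG_r)| \le p$ choices of root. So a naive bound on the number of nodes in the recursion tree is $p^{\text{depth}} \le p^{p-1}$, which is far too weak; the whole point is that $\Delta$ enters.

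The main obstacle — and the crux of the argument — is to show that the recursion tree has at most $\mathcal{O}(p^{\Delta+1})$ nodes, i.e. that after a few levels the choices are severely constrained. The idea I would pursue: once a root $X$ has been fixed in a component, every subsequent recursive call works on a graph in which all edges incident to $X$ (and, by Meek propagation, more) are already oriented; an undirected edge can survive in $\tG^X_r$ only if it lies on an appropriate undirected neighborhood. More usefully, think of it this way: any DAG counted by the algorithm is $v$-structure-free on each undirected component, hence (Lemma \ref{lem:root}) has a unique root there, and the sequence of roots chosen down any root-to-leaf path of the recursion corresponds to a nested sequence of chain components $\tG \supseteq \tG_{i_1} \supseteq \tG_{i_2} \supseteq \cdots$ where each $\tG_{i_{t+1}}$ is a chain component of the $X_{i_t}$-rooted essential graph of $\tG_{i_t}$. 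The claim I would prove is that two consecutive roots $X_{i_t}, X_{i_{t+1}}$ must be adjacent in $\tG$ (because orienting the edges at $X_{i_t}$ and applying Meek rules leaves as an undirected chain component only a set of vertices forming a clique-like "reachable" region anchored at the neighbors of $X_{i_t}$); consequently, once $X_{i_1}$ is fixed, each further root lies in the neighborhood of the previous root, giving at most $\Delta$ choices at every level below the first. That yields at most $p \cdot \Delta^{\text{depth}-1}$ leaves — still not quite the stated bound, so I would sharpen this: the correct statement is that \emph{the depth itself is bounded in terms of $\Delta$} is false, but the number of \emph{distinct} rooted-essential-graph arguments that ever appear is $\mathcal{O}(p^{\Delta+1})$, because each such graph is determined by a connected vertex subset of $\tG$ of a form controlled by $\Delta$ together with its anchor. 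I would make the latter precise via the clique-tree / reachability characterization underlying \cite{ghassami2018counting}, showing that a chain component appearing at depth $t$ is the induced subgraph on a set obtained by deleting a "downward-closed" region, and that the number of such regions reachable by length-$t$ root sequences grows by a factor of at most $\Delta$ per level while the number of levels with genuine branching is $O(1)$ relative to how the $p$ factor already accounts for the first choice — assembling these gives the polynomial exponent $\Delta+1$.

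For the per-call cost: at each node of the recursion we iterate over the (at most $p$) candidate roots in the current chain components; for each candidate root $X$ we must (i) form the $X$-rooted essential graph $\tG^X_r$ — orient the edges at $X$ and run the Meek rules, which is polynomial in $p$ (in fact $\mathcal{O}(p \cdot \Delta)$ or so per application, dominated by the number of edges $\mathcal{O}(p\Delta)$), and (ii) check the inclusion $E(\tG^X_r) \subseteq E(H)$, again $\mathcal{O}(p\Delta)$. So the work done \emph{at} a node, excluding the recursive calls it spawns, is $\mathcal{O}(p^2)$ after absorbing $\Delta \le p$. Multiplying the $\mathcal{O}(p^{\Delta+1})$ bound on the number of recursive calls by an $\mathcal{O}(p)$ per-call overhead (charging the root-iteration and the graph construction to the child it produces) gives the claimed $\mathcal{O}(p^{\Delta+2})$. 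I expect the delicate point to be the bookkeeping in the node-count argument — in particular arguing rigorously that the distinct subproblems are indexed by $\mathcal{O}(p^{\Delta+1})$ combinatorial objects rather than merely bounding a branching tree — and I would lean on the structural description of rooted essential graphs (every $\tG^X_r$ is itself an essential graph with no $v$-structures in its undirected part) together with the degree bound $\Delta$ to pin down that count.
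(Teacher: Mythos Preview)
Your proposal has a genuine gap: you miss the key structural lemma that drives the paper's argument, and in fact you explicitly assert its negation. The paper's proof rests on the observation that when a UCEG $\tG_r$ is rooted at any vertex $X$, \emph{every} vertex $X_j$ in the resulting rooted essential graph $\tG_r^X$ acquires at least one incoming directed edge (namely, the last edge on a shortest path from $X$ to $X_j$ must be oriented toward $X_j$). Consequently the degree of every vertex in the undirected part of $\tG_r^X$ is strictly smaller than its degree in $\tG_r$, so the maximum degree of any chain component drops by at least one at each level of recursion. This immediately bounds the recursion depth by $\Delta$, and with branching at most $p$ per level one gets the recurrence $t(\Delta)\le p\,t(\Delta-1)+Cp^3$, which solves to $\mathcal{O}(p^{\Delta+1})$; a further factor of $p$ for the outer chain components gives $\mathcal{O}(p^{\Delta+2})$.

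You, by contrast, write that ``the depth itself is bounded in terms of $\Delta$ is false'' and then pivot to a vague plan of counting distinct subproblems via clique-tree reachability. That pivot is both unnecessary and unsubstantiated: you never identify a concrete combinatorial object indexed by $\mathcal{O}(p^{\Delta+1})$ labels, and your intermediate claim that consecutive roots along a recursion path must be adjacent in $\tG$ is not correct in general (the next root lies in some chain component of $\tG_r^X$, which need not touch $X$). The missing idea is precisely the degree-drop lemma above; once you have it, the whole argument collapses to a two-line recurrence and none of the speculative machinery is needed.
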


We now demonstrate how the approach of counting with prior knowledge can be utilized for the task of calculating $\textit{D}(\I)$.
Recall that for an experiment target set $\I$ and DAG $G_i\in\MEC(G^*)$, the set $R(\I,G_i)$, i.e., the set of edges directed in $\tG^{(\I)}_i$ but not directed in $\tG^*$, only depends on the $\I$-MEC that $G_i$ belongs to. Also, recall that the $\I$-MEC that $G_i$ belongs to only depends on $A(\I,G_i)$, which is the directed edges in $G_i$ incident to vertices in $\I$.
Therefore, all DAGs $G\in\MEC(G^*)$ that have the same set $A(\I,G)$ lead to the same value for $D(\I,G)$. Therefore,  one can partition the members of $\MEC(G^*)$ with respect to their set $A(\I,G)$, and then, consider the set $A(\I,G)$ as prior knowledge and use the aforementioned counting approach to count the number of DAGs in each partition of $\MEC(G^*)$. 

Formally, let $\mathcal{H}$ be the set of hypothesis graphs, in which each element $H$ has a distinct configuration for $A(\I,G)$. If the maximum degree of the graph is $\Delta$, cardinality of $\mathcal{H}$ is at most $2^{k\Delta}$, and hence, it does not grow with $p$.
For a given hypothesis graph $H$, let $\tG_H=\{G: G\in MEC(G^*), E(G)\subseteq E(H)\}$ denote the set of members of the $\MEC(G^*)$, which are consistent with hypothesis $H$. Note that this set is in fact an interventional MEC. Using the set $\mathcal{H}$, we can write the expression of $\mathcal{D}(\I)$ as follows.
\begin{equation}
\label{eq:exact}	
\begin{aligned}	
\mathcal{D}(\I)
	&=\frac{1}{\Size(\tG^*)}\sum_{G_i\in\MEC(G^*)}D(\I,G_i)\\
	&=\frac{1}{\Size(\tG^*)}\sum_{H\in\mathcal{H}}\sum_{G_i\in \tG_H}D(\I,G_i)\\
	&=\sum_{H\in\mathcal{H}}\frac{\Size_H(\tG^*)}{\Size(\tG^*)}D(\I,G_i),%\hspace{10mm}\textit{for any }G_i\in\tG_H.
\end{aligned}
\end{equation}
where in the last summation, $G_i\in\tG_H$.
Therefore, we only need to calculate at most $2^{k\Delta}$ values instead of considering all elements of $\MEC(G^*)$, which reduces the complexity from super-exponential to constant in $p$.

Eventually, in order to design the experiment, we use the proposed calculator of $\mathcal{D}$ in a greedy algorithm. We term this approach the \textit{Greedy Intervention Design} (GrID).

\subsection{Unbiased $\mathcal{D}(\I)$ Estimator}
\label{sec:D(I)est}

The computational complexity of the approach presented in Subsection \ref{sec:D(I)calc} for exact calculation of $\mathcal{D}(\I)$ is exponential in the intervention budget $k$. Hence, it may not be computationally tractable for large values of $k$. For this scenario, we propose running Monte-Carlo simulations of the intervention model for sufficiently large number of times to obtain an accurate estimation of $\mathcal{D}(\I)$. To this end, we need a uniform sampler for generating random DAGs from $\MEC(G^*)$. We present such a sampler, which is based on the counting method presented in Subsection \ref{sec:D(I)calc}. The main idea is that in a UCEG, we choose a vertex as the root according to the portion of members of the corresponding MEC which have that vertex as the root, i.e., in UCEG $\tG$, vertex $X$ should be picked as the root with probability $\Size(\tG^X)/ \Size(\tG)$. The pseudo-code of the proposed sampler is presented in function $\textsc{UnifSamp}$ in Algorithm \ref{algorithm:unifsamp}, in which we use function $\Counter$ from Algorithm \ref{algorithm:counting}.
 
\begin{algorithm}[t]
\begin{algorithmic}
\STATE {\bf input:} Essential graph $\tG$ with chain components $\{\tG_1,..., \tG_R\}$, target set $\mathcal{I}$, and $N$.
\STATE {\bf initialize:}  $\widehat{\MEC}=\emptyset$
\FOR{$i=1$ to $N$}
\STATE Generate sample DAG $G_i=\textsc{UnifSamp}(\tG)$
\STATE $\widehat{\MEC}=\widehat{\MEC}\uplus G_i$
\ENDFOR
\STATE {\bf output:} $\hat{\mathcal{D}}(\I)=\frac{1}{N}\sum_{G_i\in\widehat{\MEC}}D(\I,G_i)$\\
\hrulefill
\STATE {\bf function} $\textsc{UnifSamp}(\tG)$

\STATE {\bf initialize:} $\mathcal{G}=\{\tG_1,..., \tG_R\}$
\WHILE{$\mathcal{G}\neq \emptyset$}
\STATE Pick an element $\tG_r\in\mathcal{G}$, and update $\mathcal{G}=\mathcal{G}\setminus \tG_r$.
\STATE Set $X\in V(\tG_r)$ as the root with probability $\frac{\Counter(\tG^X_r,\tG^X_r)}{\Counter(\tG_r,\tG_r)}$.
\STATE Add the directed edges of $\tG^X_r$ to $\tG$
\STATE $\mathcal{G}=\mathcal{G}\cup\{\text{chain components of }\tG^X_r\}$
\ENDWHILE
\RETURN $\tG$.
\caption{Unbiased $\mathcal{D}(\I)$ Estimator}
\label{algorithm:unifsamp}
\end{algorithmic}
\end{algorithm}

%\begin{algorithm}[t]
%\begin{algorithmic}
%\STATE {\bf Input:} Essential graph $\tG$ with chain components $\{\tG_1,..., \tG_R\}$
%\STATE {\bf Initialize:} $\mathcal{G}=\{\tG_1,..., \tG_R\}$
%\WHILE{$\mathcal{G}\neq \emptyset$}
%\STATE Pick an element $\tG_i\in\mathcal{G}$, and update $\mathcal{G}=\mathcal{G}\setminus \tG_i$.
%\STATE Run $\textsc{Rooted}(\tG,\tG_i)$.
%\ENDWHILE
%\RETURN $\tG$.\\
%\hrulefill
%\STATE {\bf function} $\textsc{Rooted}(\tG,\tG_i)$
%\STATE Set $X\in V(\tG_i)$ as the root with probability $\frac{\Counter(\tG^X_i,\tG^X_i)}{\Counter(\tG_i,\tG_i)}$.
%\STATE Add the directed edges of $\tG^X_i$ to $\tG$
%\STATE $\mathcal{G}=\mathcal{G}\cup\{\text{chain components of }\tG^X_i\}$
%\caption{Uniform Sampler}
%\label{algorithm:unifsamp}
%\end{algorithmic}
%\end{algorithm} 

\begin{sloppy}
\begin{example}
For the UCEG in Figure \ref{fig:excount}$(a)$, as observed in Example \ref{ex:excount}, $\Size(\tG^{X_1})=\Size(\tG^{X_4})=2$,  $\Size(\tG^{X_2})=\Size(\tG^{X_3})=3$, and hence, $\Size(\tG)=10$. Therefore, we set vertices $X_1$, $X_2$, $X_3$, and $X_4$ as the root with probabilities $2/10$, $3/10$, $3/10$, and $2/10$, respectively. Suppose $X_2$ is chosen as the root. Then as seen in Example \ref{ex:excount}, $\Size(G''^{X_1})=\Size(G''^{X_3})=\Size(G''^{X_4})=1$. Therefore, in $G''$, we set either of the vertices as the root with equal probability to obtain the final DAG.
\end{example}
\end{sloppy}
\begin{theorem}
\label{thm:unif}
The sampler in Algorithm \ref{algorithm:unifsamp} is uniform. 
\end{theorem}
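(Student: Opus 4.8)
The plan is to show that \textsc{UnifSamp} produces each DAG $G \in \MEC(\tG)$ with probability exactly $1/\Size(\tG)$, by induction on the number of iterations of the while loop (equivalently, on the number of chain components processed, or on the total number of undirected edges in the input essential graph). The base case is when $\tG$ is already a directed graph: then $\mathcal{G} = \emptyset$ at the start, the loop body never executes, and the unique DAG is returned with probability $1 = 1/\Size(\tG)$ since $\Size(\tG) = 1$.

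For the inductive step, fix a target DAG $G \in \MEC(\tG)$. First I would use the product structure: the algorithm processes the chain components $\tG_1, \ldots, \tG_R$ independently, and by Lemma \ref{lem:partition} together with equation \eqref{eq:prod}, it suffices to show the claim for a single UCEG $\tG_r$, since the overall probability factorizes as $\prod_r \Pr[\text{correct orientation in } \tG_r]$ and $\Size(\tG) = \prod_r \Size(\tG_r)$. So restrict to a single UCEG $\tG_r$ and let $X$ be the root of $G$ restricted to $\tG_r$ — well-defined and unique by Lemma \ref{lem:root}. The algorithm picks $X$ as root with probability $\Counter(\tG_r^X, \tG_r^X)/\Counter(\tG_r, \tG_r) = \Size(\tG_r^X)/\Size(\tG_r)$. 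Conditioned on this choice, it adds the directed edges of $\tG_r^X$ and recurses on the chain components of $\tG_r^X$; since $G$ (restricted to $\tG_r$) is an $X$-rooted DAG, it lies in $\MEC(\tG_r^X)$, and $\tG_r^X$ has strictly fewer undirected edges than $\tG_r$ (at least the edges incident to $X$ are now oriented), so the induction hypothesis applies and gives conditional probability $1/\Size(\tG_r^X)$ of producing $G$ on $\tG_r^X$. Multiplying, the probability of generating $G$ within $\tG_r$ is $\frac{\Size(\tG_r^X)}{\Size(\tG_r)} \cdot \frac{1}{\Size(\tG_r^X)} = \frac{1}{\Size(\tG_r)}$, as required. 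Taking the product over $r$ and invoking \eqref{eq:prod} closes the induction.

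The main obstacle — really the only substantive point — is verifying that the recursive structure is sound: that $\tG_r^X$ is a legitimate essential graph (so that "$\MEC(\tG_r^X)$" and the recursion make sense), that its chain components are themselves UCEGs on which the induction hypothesis is valid, and that the $X$-rooted DAGs of $\MEC(\tG_r)$ are in bijection with the union over chain components of their respective MECs. The first of these is exactly the observation already made in the text that $\tG_r^X$ "could be viewed as an essential graph," which in turn rests on the fact that orienting all edges incident to $X$ outward from $X$ and applying Meek rules cannot create a v-structure at $X$ and yields a valid CPDAG; I would cite this together with Lemma \ref{lem:partition}. The bijection between an $X$-rooted DAG and its per-chain-component restrictions is precisely the content behind equations \eqref{eq:prod} and \eqref{eq:sum}, so once those are invoked the combinatorial bookkeeping is routine. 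One should also note the termination/well-definedness of the recursion: each call strictly decreases the number of undirected edges, so the depth is at most $p-1$ and the probabilities computed by $\Counter$ are the true subclass sizes by the correctness of Algorithm \ref{algorithm:counting} (with $H = \tG$, i.e., no prior knowledge).

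Finally, I would remark that uniformity of $\hat{\mathcal{D}}(\I)$ as an \emph{estimator} is an immediate corollary: since each $G_i$ is an i.i.d.\ uniform draw from $\MEC(G^*)$, linearity of expectation gives $\mathbb{E}[\hat{\mathcal{D}}(\I)] = \mathbb{E}[D(\I, G_1)] = \frac{1}{\Size(\tG^*)}\sum_{G_i \in \MEC(G^*)} D(\I, G_i) = \mathcal{D}(\I)$ by \eqref{eq:summ}, so the estimator is unbiased. This last step needs no new ideas and can be stated in a sentence.
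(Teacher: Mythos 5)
Your proposal is correct and follows essentially the same argument as the paper: an induction (yours on the number of undirected edges, the paper's on the depth of recursive calls) whose key step is the telescoping product $\frac{\Size(\tG_r^X)}{\Size(\tG_r)}\cdot\frac{1}{\Size(\tG_r^X)}=\frac{1}{\Size(\tG_r)}$ within each chain component, glued across components via equation \eqref{eq:prod}. The supporting observations you flag (validity of $\tG_r^X$ as an essential graph, the partition from Lemma \ref{lem:partition}, correctness of $\Counter$) are the same ingredients the paper relies on implicitly.
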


As a corollary of Proposition \ref{prop:countcomp}, for bounded degree graphs, the proposed sampler runs in polynomial time.
\begin{corollary}
\label{cor:sampcomp}
For a given essential graph with maximum vertex degree $\Delta$, the computational complexity of the uniform sampler in Algorithm \ref{algorithm:unifsamp} is $\mathcal{O}(p^{\Delta+2})$.
\end{corollary}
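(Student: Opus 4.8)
The plan is to charge the cost of \textsc{UnifSamp} to a single execution of the counting routine of Algorithm~\ref{algorithm:counting} (cost $\mathcal{O}(p^{\Delta+2})$ by Proposition~\ref{prop:countcomp}), plus low-order bookkeeping. First I would bound the number of iterations of the \textbf{while} loop: each iteration removes one chain component from $\mathcal{G}$ and commits one vertex $X$ as a root; once $X$ is the root of $\tG_r$, all edges incident to $X$ are directed in $\tG_r^{X}$ (its edges leaving $\tG_r$ were already directed, as $\tG_r$ is a chain component), so $X$ never reappears inside a chain component and is never chosen again. Hence there are at most $p$ iterations, and in each of them picking an element of $\mathcal{G}$, drawing a root from a distribution on at most $p$ atoms, adjoining the directed edges of $\tG_r^{X}$, and updating $\mathcal{G}$ all cost $\mathcal{O}(p)$ (with the edge-adjunctions summing to $\mathcal{O}(|E(\tG)|)$ overall), so the non-counting overhead is $\mathcal{O}(p^{2})$, which is dominated by $\mathcal{O}(p^{\Delta+2})$ since $\Delta\ge 1$.

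The crux is that all $\Size$ values the sampler ever queries are produced by one top-level call $\Counter(\tG,\tG)$. That call loops over the chain components $\tG_r$ of $\tG$ and, for each, sets $\Size(\tG_r^{X})=\Counter(\tG_r^{X},\tG)$ for every $X\in V(\tG_r)$ and then $\Size(\tG_r)=\sum_{X}\Size(\tG_r^{X})$; thus a single such call yields both the denominator $\Size(\tG_r)$ and all numerators $\Size(\tG_r^{X})$ needed in the root-selection step for $\tG_r$, and it recurses on exactly the chain components of each $\tG_r^{X}$, so by induction it also produces the sizes of every chain component (and all of its rootings) that the sampler can reach under any sequence of root choices. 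Two easy facts make the recursion of $\Counter(\cdot,\tG)$ coincide with that of $\Counter(\cdot,\cdot)$ used in the sampler's probability formula: every DAG in $\MEC(\tG_r^{X})$ already carries the orientations of $\tG_r^{X}$, and once an edge is directed it is never reoriented at a deeper recursion level (the recursion only reorients edges inside undirected chain components), so the test $E(\cdot)\subseteq E(H)$ is always satisfied and $\Counter(\tG_r^{X},\tG)=\Counter(\tG_r^{X},\tG_r^{X})=\Size(\tG_r^{X})$, and likewise at every deeper level. Memoizing the $\Size$ values computed during one invocation $\Counter(\tG,\tG)$ — equivalently, running Algorithm~\ref{algorithm:counting} on $\tG$ once at the start of Algorithm~\ref{algorithm:unifsamp} and then only looking up cached values — therefore makes the total counting cost $\mathcal{O}(p^{\Delta+2})$ by Proposition~\ref{prop:countcomp} (every subgraph met in the recursion has at most $p$ vertices and maximum degree at most $\Delta$, since orientation leaves the skeleton unchanged). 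Adding the $\mathcal{O}(p^{2})$ overhead yields the claimed $\mathcal{O}(p^{\Delta+2})$ bound.

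The main obstacle I anticipate is exactly this subproblem-containment step: pinning down that each $\Size$ query the sampler issues along its random branch is among the quantities computed by the single $\Counter(\tG,\tG)$ recursion, including the bookkeeping that coarsening the hypothesis graph from $\tG_r^{X}$ to $\tG$ changes neither the returned counts nor the structure of the recursion. Everything else — the $\le p$ iteration bound, the per-iteration $\mathcal{O}(p)$ overhead, and the appeal to Proposition~\ref{prop:countcomp} — is routine.
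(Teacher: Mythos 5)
Your proposal is correct and takes essentially the same route as the paper: both charge the dominant cost to a single run of $\Counter(\tG,\tG)$, which (by Proposition~\ref{prop:countcomp}, in $\mathcal{O}(p^{\Delta+2})$) already produces every rooted-subclass size the sampler can query along any branch, so the sampler only adds traversal overhead. The only difference is bookkeeping of that overhead: the paper explicitly charges $\mathcal{O}(p^3)$ per rooting for recovering the directed edges of $\tG_r^{X}$ via Meek rules and folds this into a recursion, whereas your $\mathcal{O}(p)$-per-iteration estimate implicitly assumes those orientations are also cached from the counting run (or else costs $\mathcal{O}(p^3)$ per step, which is still harmless) — either accounting gives the claimed $\mathcal{O}(p^{\Delta+2})$.
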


Equipped with the uniform sampler in Algorithm \ref{algorithm:unifsamp}, in order to estimate the value of $\mathcal{D}(\I)$, we generate $N$ DAGs from $\MEC(G^*)$. The generated DAGs are kept in a multiset $\widehat{\MEC}$, in which repetition is allowed. Finally, we calculate the estimated value $\hat{\mathcal{D}}(\mathcal{I})$ on $\widehat{\MEC}$ instead of $\MEC(G^*)$ as follows.
\[
\hat{\mathcal{D}}(\I)=\frac{1}{|\widehat{\MEC}~|}\sum_{G_i\in\widehat{\MEC}}D(\I,G_i).
\] 
The pseudo-code of our estimator is presented in Algorithm \ref{algorithm:unifsamp}. In the pseudo-code, operator $\uplus$ indicates the multiset addition.

The estimation obtained from the aforementioned approach is an unbiased estimation of $\mathcal{D}(\I)$, i.e., $\mathbb{E}[\hat{\mathcal{D}}(\mathcal{I})]=\mathcal{D}(\mathcal{I})$.
To show the unbiasedness, suppose $G_i$ is a random generated DAG in the uniform sampler. We have
\begin{align*}
\mathbb{E}[\hat{\mathcal{D}}(\I)]&=\frac{1}{N}\sum_{G_i\in\widehat{\MEC}}\mathbb{E}[D(\I,G_i)]\\
&=\frac{1}{N}\cdot N\sum_{G_i'\in\MEC(G^*)}P(G_i=G'_i)D(\I,G'_i)\\
&=\frac{1}{|\MEC(G^*)|}\sum_{G_i'\in\MEC(G^*)}D(\I,G'_i)=\mathcal{D}(\mathcal{I}).
\end{align*}

Eventually, in order to design the experiment, we use the estimator $\hat{\mathcal{D}}$ in a greedy algorithm. We term this approach the \textit{Random Greedy Intervention Design} (Ran-GrID).

We generated 100 random UCEGs of order $p\in\{10,20,30\}$, with $r\times{p \choose 2}$ edges, where parameter $0\le r\le 1$ controls the graph density. For this experiment, we picked $r=0.2$. In each graph, we selected two variables randomly to intervene on. We obtained the exact $\mathcal{D}(\I)$ using equation \eqref{eq:exact}. Furthermore, for a given sample size $N$, we estimated $\mathcal{D}(\I)$ using Algorithm \ref{algorithm:unifsamp} and obtained empirical standard deviation of the normalized error (SDNE) over all graphs with the same size, defined as $SD(|\mathcal{D}(\I)-\hat{\mathcal{D}}(\I)|/\mathcal{D}(\I))$. Figure \ref{fig:sim} depicts SDNE versus the number of samples. As can be seen, SDNE becomes fairly low for sample sizes greater than $40$.
Next, we formalize our observation regarding convergence and consider the required cardinality of the set $\widehat{\MEC}$ to obtain a desired accuracy in estimating $\mathcal{D}(\mathcal{I})$. We use Chernoff bound for this purpose. 

\begin{figure}[t]
\begin{center}
\includegraphics[scale=0.4]{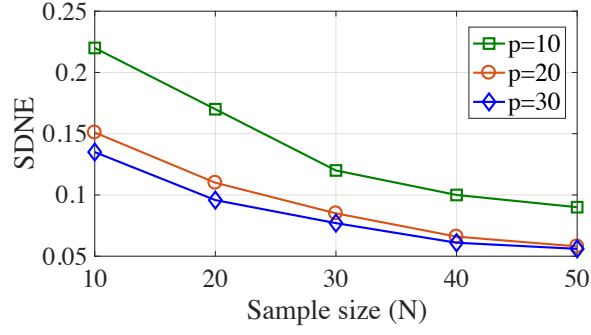}
\caption{Standard deviation of the normalized error versus the sample size.}
\label{fig:sim}
\end{center}
\end{figure}

\begin{theorem}
\label{thm:uconv}
Let $\bar{A}(\tG)$ denote the set of undirected edges of $\tG$.
For the estimator in Algorithm \ref{algorithm:unifsamp}, given experiment target set $\I$ and $\epsilon,\delta>0$, if $N=|\widehat{\MEC}~|>\frac{|\bar{A}(\tG)|(2+\epsilon)}{\epsilon^2}\ln(\frac{2}{\delta})$, then
\[
\mathcal{D}(\mathcal{I})(1-\epsilon)<\hat{\mathcal{D}}(\mathcal{I})<\mathcal{D}(\mathcal{I})(1+\epsilon),
\]
with probability larger than $1-\delta$.
\end{theorem}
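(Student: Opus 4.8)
The plan is to apply a Chernoff-type concentration bound to the random variable $\hat{\mathcal{D}}(\I)$, which is an average of $N$ i.i.d.\ copies of $D(\I,G_i)$ where $G_i$ is drawn uniformly from $\MEC(G^*)$ (uniformity by Theorem~\ref{thm:unif}, and unbiasedness $\mathbb{E}[\hat{\mathcal{D}}(\I)]=\mathcal{D}(\I)$ as shown above). The key structural observation is that each summand $D(\I,G_i)=|R(\I,G_i)|$ counts a subset of the undirected edges of $\tG^*$, so $0\le D(\I,G_i)\le|\bar{A}(\tG^*)|$. Thus $\hat{\mathcal{D}}(\I)$ is, after scaling by $1/|\bar{A}(\tG^*)|$, an average of i.i.d.\ random variables taking values in $[0,1]$ with mean $\mathcal{D}(\I)/|\bar{A}(\tG^*)|=:\mu$. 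First I would set $Y_i := D(\I,G_i)/|\bar{A}(\tG^*)|\in[0,1]$ and $Y:=\sum_{i=1}^N Y_i$, so that $\mathbb{E}[Y]=N\mu$ and $\hat{\mathcal{D}}(\I)=\frac{|\bar{A}(\tG^*)|}{N}Y$.

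Next I would invoke the standard multiplicative Chernoff bounds for sums of independent $[0,1]$-valued random variables: for $0<\epsilon<1$,
\[
\Pr[Y>(1+\epsilon)N\mu]<\exp\!\left(-\frac{\epsilon^2 N\mu}{3}\right),\qquad
\Pr[Y<(1-\epsilon)N\mu]<\exp\!\left(-\frac{\epsilon^2 N\mu}{2}\right),
\]
and combine them via a union bound to get $\Pr[\,|\hat{\mathcal{D}}(\I)-\mathcal{D}(\I)|\ge\epsilon\mathcal{D}(\I)\,]<2\exp(-\epsilon^2 N\mu/3)$. Setting the right-hand side to $\delta$ and solving for $N$ gives $N>\frac{3}{\epsilon^2\mu}\ln(2/\delta)=\frac{3|\bar{A}(\tG^*)|}{\epsilon^2\mathcal{D}(\I)}\ln(2/\delta)$. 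To match the clean form stated in the theorem, I would then lower-bound $\mathcal{D}(\I)$: since a single intervention already orients all edges incident to its target and there is at least one undirected edge there, $\mathcal{D}(\I)\ge 1$; more carefully one can get the factor $(2+\epsilon)/\epsilon^2$ rather than $3/\epsilon^2$ by using the sharper two-sided form $\exp(-\epsilon^2 N\mu/(2+\epsilon))$ valid for the lower tail and a slightly refined upper tail, or simply by noting $\mathcal{D}(\I)\ge 1$ kills the $\mu$ in the denominator while the $|\bar A(\tG^*)|$ survives, yielding $N>\frac{|\bar{A}(\tG^*)|(2+\epsilon)}{\epsilon^2}\ln(2/\delta)$.

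The main obstacle — really a bookkeeping subtlety rather than a deep difficulty — is getting exactly the constant $(2+\epsilon)$ in the numerator rather than a plain $2$ or $3$; this requires being careful about which version of the Chernoff bound is applied to each tail and about where the crude bound $\mathcal{D}(\I)\ge 1$ (equivalently $\mu\ge 1/|\bar{A}(\tG^*)|$) is inserted. One has to make sure the chosen bound is valid for the \emph{sum} of bounded-but-not-Bernoulli variables (the standard bounds do extend to $[0,1]$-valued summands, by e.g.\ comparing moment generating functions to the Bernoulli case, so this is fine). A secondary point worth a sentence in the write-up is independence: the $N$ sampled DAGs are generated by independent runs of \textsc{UnifSamp}, so the $D(\I,G_i)$ are genuinely i.i.d., which is what licenses the product form of the moment generating function underlying the Chernoff argument.
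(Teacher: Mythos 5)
Your proposal is correct and follows essentially the same route as the paper: rescale $D(\I,G_i)$ by $|\bar{A}(\tG)|$ to get i.i.d.\ $[0,1]$-valued summands, apply the two-sided multiplicative Chernoff bound with exponent $\epsilon^2 N\mu/(2+\epsilon)$, and absorb the mean via the crude bound $\mathcal{D}(\I)\ge 1$ (i.e.\ $\mu\ge 1/|\bar{A}(\tG)|$) before solving for $N$. In fact you make explicit the step the paper performs silently — dropping the factor $\mathcal{D}(\I)$ from the exponent — so nothing is missing.
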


For any $\epsilon'>0$, for sufficiently large sample size, the Ran-GrID method provides us with a $(1-\frac{1}{e}-\epsilon')$-approximation of the optimal value with high probability, as formalized in the following theorem.

\begin{theorem}
\label{thm:app}
For any $\epsilon',\delta'>0$, 
let $\epsilon=\frac{\epsilon'}{4k}$ and $\delta=\frac{\delta'}{4k^2}$.
If for any experiment target set $\I$, $\mathcal{D}(\I)(1-\epsilon)<\hat{\mathcal{D}}(\I)<\mathcal{D}(\I)(1+\epsilon)$ with probability larger than $1-\delta$, then Algorithm \ref{algorithm:GG} is a $(1-\frac{1}{e}-\epsilon')$-approximation algorithm with probability larger than $1-\delta'$.
\end{theorem}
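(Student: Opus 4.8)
The plan is to marry the classical $(1-1/e)$ analysis of the greedy algorithm for monotone submodular maximization \citep{nemhauser1978analysis} with a per-step robustness argument that absorbs the error incurred by running Algorithm \ref{algorithm:GG} on the $\epsilon$-accurate surrogate $\hat{\mathcal{D}}$ rather than on $\mathcal{D}$ itself. Write $\I^*$ for an optimal target set of size $k$ and $\mathrm{OPT}=\mathcal{D}(\I^*)$, and let $\emptyset=\I_0\subset\I_1\subset\cdots\subset\I_k=\hat{\I}$ be the iterates of Algorithm \ref{algorithm:GG}, with $\I_i=\I_{i-1}\cup\{X_i\}$. By Proposition \ref{prop:mono}, Theorem \ref{thm:submodular} and $\mathcal{D}(\emptyset)=0$, the usual telescoping argument gives, for each $i$, $\mathrm{OPT}-\mathcal{D}(\I_{i-1})\le k\,\max_{X}\bigl(\mathcal{D}(\I_{i-1}\cup\{X\})-\mathcal{D}(\I_{i-1})\bigr)$. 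If the $i$-th greedy pick has \emph{true} marginal gain within an additive $\eta_i$ of the best true marginal gain, then $\mathcal{D}(\I_i)-\mathcal{D}(\I_{i-1})\ge\frac1k(\mathrm{OPT}-\mathcal{D}(\I_{i-1}))-\eta_i$, so, writing $a_i=\mathrm{OPT}-\mathcal{D}(\I_i)$, one gets $a_i\le(1-1/k)a_{i-1}+\eta_i$, which unrolls to $a_k\le(1-1/k)^k\mathrm{OPT}+\sum_{i=1}^k\eta_i\le e^{-1}\mathrm{OPT}+\sum_{i=1}^k\eta_i$, i.e. $\mathcal{D}(\hat{\I})\ge(1-1/e)\mathrm{OPT}-\sum_{i=1}^k\eta_i$. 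Everything then reduces to showing $\sum_{i=1}^k\eta_i\le\epsilon'\,\mathrm{OPT}$ on a high-probability event.

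To bound a single $\eta_i$, fix $i$ and let $X^*=\arg\max_X\mathcal{D}(\I_{i-1}\cup\{X\})$, so $\eta_i=\mathcal{D}(\I_{i-1}\cup\{X^*\})-\mathcal{D}(\I_i)\ge 0$. Since $\hat{\mathcal{D}}(\I_{i-1})$ is constant inside the maximization, Algorithm \ref{algorithm:GG} in fact chooses $X_i=\arg\max_X\hat{\mathcal{D}}(\I_{i-1}\cup\{X\})$, hence $\hat{\mathcal{D}}(\I_i)\ge\hat{\mathcal{D}}(\I_{i-1}\cup\{X^*\})$. Applying the two-sided $\epsilon$-guarantee to the sets $\I_i$ and $\I_{i-1}\cup\{X^*\}$ yields $\mathcal{D}(\I_i)>\hat{\mathcal{D}}(\I_i)/(1+\epsilon)\ge\hat{\mathcal{D}}(\I_{i-1}\cup\{X^*\})/(1+\epsilon)>\tfrac{1-\epsilon}{1+\epsilon}\mathcal{D}(\I_{i-1}\cup\{X^*\})$, so $\eta_i<\tfrac{2\epsilon}{1+\epsilon}\mathcal{D}(\I_{i-1}\cup\{X^*\})<2\epsilon\,\mathcal{D}(\I_{i-1}\cup\{X^*\})$. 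Because $|\I_{i-1}\cup\{X^*\}|\le k$, monotonicity (Proposition \ref{prop:mono}) gives $\mathcal{D}(\I_{i-1}\cup\{X^*\})\le\mathrm{OPT}$, so $\eta_i<2\epsilon\,\mathrm{OPT}$. Summing over $i$ and substituting $\epsilon=\epsilon'/(4k)$ gives $\sum_{i=1}^k\eta_i<2k\epsilon\,\mathrm{OPT}=\tfrac{\epsilon'}{2}\mathrm{OPT}<\epsilon'\,\mathrm{OPT}$, hence $\mathcal{D}(\hat{\I})>(1-1/e-\epsilon')\mathrm{OPT}$.

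For the probabilistic part, observe that the chain of inequalities above invokes the $\epsilon$-accuracy of $\hat{\mathcal{D}}$ on only $2k$ sets: the iterates $\I_1,\dots,\I_k$ and the would-be-optimal extensions $\I_0\cup\{X^*_1\},\dots,\I_{k-1}\cup\{X^*_k\}$, each of which is a determined target set once the preceding iterates are fixed. Conditioning iteration by iteration and using a fresh batch of Monte-Carlo samples for each $\hat{\mathcal{D}}$-value (so the accuracy event for a set is independent of the greedy selection producing it), the hypothesis gives each of these $2k$ events probability at least $1-\delta$, so a union bound with $\delta=\delta'/(4k^2)$ bounds the overall failure probability by $2k\delta=\delta'/(2k)\le\delta'$. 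On the complementary event the deterministic estimates of the first two paragraphs hold, giving the stated $(1-1/e-\epsilon')$-approximation with probability larger than $1-\delta'$.

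The step I expect to be the main obstacle is this last one: making the probabilistic accounting rigorous despite the fact that the sets queried by Algorithm \ref{algorithm:GG} are themselves random, since they depend on the estimates. One must ensure that the accuracy event conditioned on at the chosen set $\I_i$ is not skewed upward by the very selection that produced $X_i$ (a winner's-curse effect); the clean fix is to re-estimate $\hat{\mathcal{D}}(\I_i)$ with independent samples after the argmax is taken, or, at the cost of a worse constant, to take a union bound over all candidate extensions at each step. Pinning this down is precisely what fixes the relationship between $\delta$, $\delta'$ and $k$, and the $4k^2$ in the statement leaves ample slack for either route. By contrast, the submodular recursion and the conversion of the multiplicative error into the additive slacks $\eta_i$ are routine once this setup is in place.
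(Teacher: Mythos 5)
Your proof is correct (it even reaches the target constants with room to spare), but it takes a genuinely different route from the paper's. The paper argues entirely through estimated \emph{marginal gains}: at round $i$ it compares, for every element $X_j^*$ of the optimal set, the pair of estimator calls $\hat{\mathcal{D}}_{i,X_j^*,1}(\I_i\cup\{X_j^*\})-\hat{\mathcal{D}}_{i,X_j^*,2}(\I_i)$ with the corresponding pair for the selected vertex, converting between true and estimated values four times per comparison; this costs $4\epsilon\,\mathcal{D}(\I^*)$ of slack per round and a union bound over $4k^2$ accuracy events, yielding $\mathcal{D}(\I_k)>(1-\tfrac{1}{e}-4\epsilon k)\mathcal{D}(\I^*)$ with probability $>1-4k^2\delta$, which is precisely why the theorem sets $\epsilon=\epsilon'/(4k)$ and $\delta=\delta'/(4k^2)$. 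You instead note that the baseline $\hat{\mathcal{D}}(\I_{i-1})$ cancels in the argmax, compare the greedy pick with the single best true extension $X^*$ through the set-function values themselves, and convert the multiplicative error into additive slacks $\eta_i<2\epsilon\,\mathrm{OPT}$ in the standard Nemhauser--Wolsey recursion; this needs only two accuracy events per round ($2k$ total) and $2k\epsilon=\epsilon'/2$ total slack, so your constants are tighter than the paper's. Two caveats. First, your cancellation step assumes the same estimate of $\hat{\mathcal{D}}(\I_{i-1})$ is shared by all candidates within a round; the paper's bookkeeping (the second call $\hat{\mathcal{D}}_{i,X,2}$ is indexed by the candidate $X$) does not assume this, and if the baseline is re-estimated per candidate you simply spend two more accuracy events per round, still within budget. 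Second, the selection-bias issue you flag at the upper-side bound $\hat{\mathcal{D}}(\I_i)<(1+\epsilon)\mathcal{D}(\I_i)$ is real, but the paper's own proof applies the per-set guarantee to the selected set in exactly the same way without comment, so your treatment is no less rigorous than the published one; note only that your claim that the constants leave ``ample slack for either route'' is optimistic: a union bound over all candidates inflates the failure probability to order $kp\delta$, which exceeds $\delta'$ for large $p$ under $\delta=\delta'/(4k^2)$, and re-estimating $\hat{\mathcal{D}}(\I_i)$ with fresh samples after the argmax invalidates the comparison $\hat{\mathcal{D}}(\I_i)\ge\hat{\mathcal{D}}(\I_{i-1}\cup\{X^*\})$, so neither suggestion is a drop-in repair.
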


\subsection{Fast $\mathcal{D}(\I)$ Estimator}
\label{subsec:fast}

\begin{algorithm}[t]
\begin{algorithmic}
\STATE {\bf input:} Essential graph $\tG$ with chain components $\{\tG_1,..., \tG_R\}$, target set $\mathcal{I}$, and $N$.
\STATE {\bf initialize:}  $\widehat{\MEC}=\emptyset$
\FOR{$i=1$ to $N$}
\STATE Generate sample DAG $G_i=\textsc{FastSamp}(\tG)$
\STATE $\widehat{\MEC}=\widehat{\MEC}\uplus G_i$
\ENDFOR
\STATE {\bf output:} $\hat{\mathcal{D}}(\I)=\frac{1}{N}\sum_{G_i\in\widehat{\MEC}}D(\I,G_i)$\\
\hrulefill
\STATE {\bf function} $\textsc{FastSamp}(\tG)$

\STATE Uniformly shuffle the order of the elements of $V(\tG)$.
\WHILE{the induced subgraph on any subset of size 3 of the variables is not directed, or a directed cycle, or a v-structure which was not in $\tG$}
\FOR{all $\{X_i,X_j,X_k\}\subseteq V(\tG)$}
\STATE Orient the undirected edges among $\{X_i,X_j,X_k\}$ independently according to $\textit{Bern}(\frac{1}{2})$ until it becomes a directed structure which is not a directed cycle or a v-structure which was not in $\tG$.
\ENDFOR
\ENDWHILE
\RETURN $\tG$.
\caption{Fast $\mathcal{D}(\I)$ Estimator}
\label{algorithm:fastsamp}
\end{algorithmic}
\end{algorithm}

Recall that the computational complexity of the unifrom sampler in Algorithm \ref{algorithm:unifsamp} is $\mathcal{O}(p^{\Delta+2})$, which will be intractable when the input graph has many vertices with large degrees. In this subsection, we propose another sampler, which is more suitable for graphs with large maximum degree. Although this sampler is not uniform, our extensive experimental results confirm that its sampling distribution is very close to uniform. We use this sampler in an estimator for $\mathcal{D}(\I)$ similar to the one in Algorithm \ref{algorithm:unifsamp}.

The pseudo-code of the proposed estimator is presented in Algorithm \ref{algorithm:fastsamp}. In this estimator, for the given essential graph $\tG$, we generate $N$ DAGs  from the MEC of $G^*$ as follows:
We consider all subsets of size 3 from $V(\tG)$ in a uniformly random order (achieved by uniformly shuffling the labels of elements of $V$). For each subset $\{X_i, X_j, X_k\}$, we orient the undirected edges among $\{X_i, X_j, X_k\}$ independently according to a Bernoulli$({1}/{2})$ distribution. If the resulting orientation on the induced subgraph on $\{X_i, X_j, X_k\}$ is a directed cycle or a new v-structure, which was not in $\tG$, we redo the orienting. We keep checking all the subsets of size 3 until the induced subgraph on all of them are directed and none of them is a new v-structure, which did not exist in $\tG$, or a directed cycle. 

\begin{proposition}
\label{prop:inMEC}
Each generated DAG $G_i$ in the sampler $\textsc{FastSamp}$ in Algorithm \ref{algorithm:fastsamp} belongs to the Markov equivalence class of $G^*$.
\end{proposition}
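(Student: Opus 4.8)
The plan is to show that the output $\tG$ of \textsc{FastSamp} is a DAG with the same skeleton and v-structures as $\tG^*$, so that by Lemma \ref{lem:verma} it belongs to $\MEC(G^*)$. The sampler never changes the skeleton of $\tG$ (it only orients undirected edges, never adds or removes edges), so the skeleton is automatically preserved; the content of the proof is therefore that the termination condition of the \textbf{while} loop forces the result to be (i) acyclic and (ii) free of new v-structures while retaining all original v-structures.

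First I would observe what the termination condition actually guarantees: upon exit, every induced subgraph on a triple $\{X_i,X_j,X_k\}$ is fully directed, is not a directed 3-cycle, and is not a v-structure that was absent in $\tG^*$. The "fully directed on every triple" part immediately gives that $\tG$ has no undirected edges at all (any undirected edge $X_i-X_j$ sits inside some triple, which would then not be fully directed), so $\tG$ is a directed graph. Since $\tG$ is chordal (it is a UCEG, hence each chain component is chordal — and more to the point, the skeleton of $\tG^*$ is chordal within chain components), I would then invoke the standard fact that a directed graph whose skeleton is chordal is acyclic if and only if it contains no directed 3-cycle: in a chordal graph every cycle of length $\ge 4$ has a chord, and one can induct on cycle length, using the chord to split any longer directed cycle into two shorter directed closed walks, eventually producing a directed triangle — which the termination condition forbids. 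Hence $\tG$ is a DAG.

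Next I would handle v-structures. A v-structure is a property of a triple of vertices, so "$\tG$ has no v-structure outside $\tG^*$" is exactly one half of the termination condition. For the converse — that every v-structure of $\tG^*$ survives in $\tG$ — I would note that a v-structure $X_a \to X_c \leftarrow X_b$ of $\tG^*$ corresponds to edges $X_a - X_c$ and $X_b - X_c$ with $X_a, X_b$ non-adjacent, which are \emph{already directed in} $\tG^*$ (v-structures are oriented in the essential graph); since \textsc{FastSamp} only orients edges that are undirected in the input $\tG$ and leaves the already-directed edges untouched, these two arrows are never modified, so the v-structure persists. Combining: $\tG$ and $\tG^*$ have identical skeletons and identical v-structure sets, so by Lemma \ref{lem:verma} they are Markov equivalent, i.e., $G_i = \tG \in \MEC(G^*)$.

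The main obstacle I anticipate is the acyclicity argument — specifically making rigorous the claim that forbidding directed \emph{triangles} suffices to forbid all directed cycles. This requires carefully using chordality of the underlying skeleton (true within chain components) together with an induction that reduces a hypothetical shortest directed cycle of length $\ge 4$ to a shorter one via a chord, checking that at least one of the two resulting pieces is again a directed cycle. A secondary point needing care is that the \textbf{while} loop actually terminates; here I would argue that each full pass through all triples is a finite process, and — since $\tG^*$ is chordal with at least one consistent DAG extension (it is an essential graph, so $\MEC(\tG^*)$ is nonempty) — with positive probability a pass produces a valid DAG, so the loop halts almost surely, which is all that is needed for the statement to be meaningful.
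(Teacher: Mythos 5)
Your proposal is correct and follows essentially the same route as the paper's proof: acyclicity is reduced, via chordality of the chain components and the same chord-based induction, to the absence of directed triangles (the paper's Lemma~\ref{claim:tri}), and membership in $\MEC(G^*)$ then follows from identical skeleton and v-structures as in Lemma~\ref{lem:verma}. Your added remarks on why the original v-structures persist (they are already directed, and the sampler only orients undirected edges) and on almost-sure termination of the loop are sensible refinements, but they do not change the argument.
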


We generated $100$ random UCEGs of order $p\in\{20, 30,... ,60\}$ with $r\times {p \choose 2}$ edges, where parameter $0\le r\le1$ controls the graph density. 
Table \ref{table:comp} shows a comparison between the run time of the fast sampler in Algorithm \ref{algorithm:fastsamp}, denoted by $T_f$, compared to the run time of the uniform sampler in Algorithm \ref{algorithm:unifsamp}, denoted by $T_u$, for random essential graphs with different orders. 
As can be seen, %the fast sampler outperforms the uniform sampler in graphs with large order. 
the run time ratio $T_u/T_f$ increases as the order of the the graphs increases.

\begin{table}[t]
\begin{center}
%\resizebox{1\columnwidth}{!}{
  \begin{tabular}{ |c | c | c c c c c c |}
    \hline
 & & $p:$  & 20 & 30 & 40 & 50 & 60 \\ \hline
 & $T_u$ & &  0.50 & 2.26 & 6.65 & 19.55  & 55.59  \\
$r=0.2$ & $T_f$ & & 0.018 & 0.055  &  0.163  &   0.3 & 0.63\\ 
 & $T_u/T_f$ & & 28.41 &  41.09 &  40.67 &  65.17 &   88.24     \\ \hline
 & $T_u$ & & 0.51 & 2.27 & 7.56 & 25.46 & 59.21 \\ 
$r=0.25$ & $T_f$ & &  0.0218 & 0.06  &  0.1686   &   0.35 &  0.66\\
  &  $T_u/T_f$ & &  23.40 &   37.83 &   44.84  &   72.74  &   89.71 \\
    \hline
  \end{tabular}
  %}
  \caption{Average run time (in seconds) for the uniform sampler and the fast sampler.}
\label{table:comp}
  \end{center}
\end{table}

%\begin{proposition}
%For a given essential graph with maximum vertex degree $\Delta$, the computational complexity of the sampler in Algorithm \ref{algorithm:fastsamp} is $\mathcal{O}(?!)$.
%\end{proposition}

%We add the generated DAG to a multiset $\mathcal{G}'$. Finally, we calculate the estimated value $\hat{\mathcal{D}}(\mathcal{I})$ on $\mathcal{G}'$ instead of $\mathcal{G}$ as $\hat{\mathcal{D}}(\mathcal{I})=\frac{1}{|\mathcal{G}'|}\sum_{G'_i\in\mathcal{G}'}D(\mathcal{I},G'_i)$.

% Improved Greedy Algorithm
\section{Improved Greedy Algorithm}
\label{sec:algorithm}

%{\color{red} Define $\delta_v$ as the latest updated marginal gain of vertex $v$. We store marginal gains in a list sorted in descending order. At the beginning of improved greedy algorithm, we set all $\delta_v$'s to infinity. In any iteration of the algorithm, whenever we are updating $\delta_v$ for the vertex $v$, we set the flag $eval_v$ to {\bf true} in order to keep in mind that $\delta_v$ is a valid evaluation of marignal gain. In each iteration, we first mark all $\delta_v$'s in $V\backslash \mathcal{I}$ as invalid by setting $eval$'s to false.} Then, we extract the first element of the list (which we denote it by $v^*$) and updates the marginal gain $\delta_{v^*}$ if it is invalid and insert it in the list. If, after this update, the vertex $v^*$ stays the top element in the list, then the submodularity guarantees that the vertex $v^*$ can be added to $\mathcal{I}$ safely without reevaluating other vertices in $V\backslash \mathcal{I}$. \\\\\\

 We exploit the submodularity of function $\mathcal{D}$ to implement an accelerated variant of the General Greedy Algorithm through {\it lazy} evaluations, originally proposed by \cite{minoux1978accelerated}.\footnote{There are improved versions of this algorithm in the literature \cite{mirzasoleiman2015lazier}.} 
In each round of the General Greedy Algorithm, we check the marginal gain $\Delta_X(\I)$ for all remaining vertices in $V\backslash \mathcal{I}$. Note that as a consequence of submodularity of function $\mathcal{D}$, the set function $\Delta_X$ is monotonically decreasing.
The main idea of the Improved Greedy Algorithm is to take advantage of this property to avoid checking all the variables in each round of the algorithm. More specifically, suppose for vertices $X_1$ and $X_2$, in the $i$-th round of the algorithm we have obtained marginal gains $\Delta_{X_1}(\I_i)>\Delta_{X_2}(\I_i)$. If in the $(i+1)$-th round, we calculate $\Delta_{X_1}(\I_{i+1})$ and observe that $\Delta_{X_1}(\I_{i+1})>\Delta_{X_2}(\I_i)$, from monotonic decreasing property of function $\Delta_X$, we can conclude that $\Delta_{X_1}(\I_{i+1})>\Delta_{X_2}(\I_{i+1})$, and hence, there is no need to calculate $\Delta_{X_2}(\I_{i+1})$.

Improved Greedy Algorithm is presented in Algorithm \ref{algorithm:IG}. The idea can be formalized as follows: We define a profit parameter $\textit{pro}_X$ for each variable $X$ and initialize the value for all variables with $\infty$. Moreover, we define an update flag $\textit{upd}_X$ for all variables, which will be set to \texttt{false} at the beginning of every round of the algorithm, and will be switched to \texttt{true} if we update $\textit{pro}_X$ with the value of the marginal gain of vertex $X$.
In each round, the algorithm picks vertex $X\in V\backslash \I$ with the largest profit, updates its profit with the value of the marginal gain of $X$, and sets $\textit{upd}_X$ to \texttt{true}. This process is repeated until the vertex with the largest profit is already updated, i.e., its update flag is \texttt{true}. Then we add this vertex to $\I$ and end the round.
For example, if in a round, the vertex $X$ has the highest profit and after updating the profit of this vertex, $\textit{pro}_X$ is still larger than all the other profits, we do not need to evaluate the marginal gain of any other vertex and we add $X$ to $\I$.

The correctness of the Improved Greedy Algorithm follows directly from submodularity of function $\mathcal{D}$. Theorem \ref{thm:app} holds for Algorithm \ref{algorithm:IG} as well, that is, for any $\epsilon'>0$, Improved Greedy Algorithm provides us with a $(1-\frac{1}{e}-\epsilon')$-approximation of the optimal value.
This algorithm can lead to orders of magnitude performance speedup, as shown by \cite{leskovec2007cost}.

\begin{algorithm}[t]
\begin{algorithmic}
 \STATE {\bf input:} Essential graph from the observational stage, budget $k$.
\STATE {\bf initialize:} $\I_0=\emptyset$, and $\textit{pro}_X=\infty$, $\forall X\in V$.
\FOR{$i=1$ to $k$}
\STATE $\textit{upd}_X=$ \texttt{false}, $\forall X \in V\backslash \mathcal{I}_{i-1}$
\WHILE{\texttt{true}}
\STATE $X^* =\arg\max_{X\in V\backslash \mathcal{I}_{i-1}} \textit{pro}_X$
\IF{$\textit{upd}_{X^*}$}
\STATE $\I_i=\I_{i-1}\cup \{X^*\}$
\STATE {\bf break};
\ELSE
\STATE $\textit{pro}_{X^*}= \mathcal{D}(\I_{i-1}\cup \{X^*\})-\mathcal{D}(\I_{i-1})$
\STATE $\textit{upd}_{X^*}=\texttt{true}$
\ENDIF
\ENDWHILE
\ENDFOR
\STATE {\bf output:} $\hat{\I}=\I_k$
 \caption{Improved Greedy Algorithm}
 \label{algorithm:IG}
\end{algorithmic}
\end{algorithm}

% Experiment Results

\section{Evaluation Results}
\label{sec:exp}

\subsection{Tree Structures}

We evaluated the performance of Algorithm \ref{algTrED} and the Ran-GrID approach on synthetic tree structures. As shown in Section \ref{sec:tree}, Algorithm \ref{algTrED} is optimum for the worst-case gain optimization problem. We observed that this algorithm also has a good performance on the average gain optimization problem. To see this, we generated random trees based on Barab\'asi-Albert model \citep{barabasi1999emergence,barabasi2016network}, and bounded degree model created according to Galton-Watson branching process \citep{barabasi2016network}. For both models we considered uniform and degree based distributions for the location of the root of the tree. In the degree based distribution, the probability of vertex $X$ being the root is proportional to its degree. Each generated tree was considered as a UCEG. %In other words, each essential graph has only one chain component.
 
 \begin{figure}[t]
	\begin{center}
		\centerline{\includegraphics[scale=0.62]{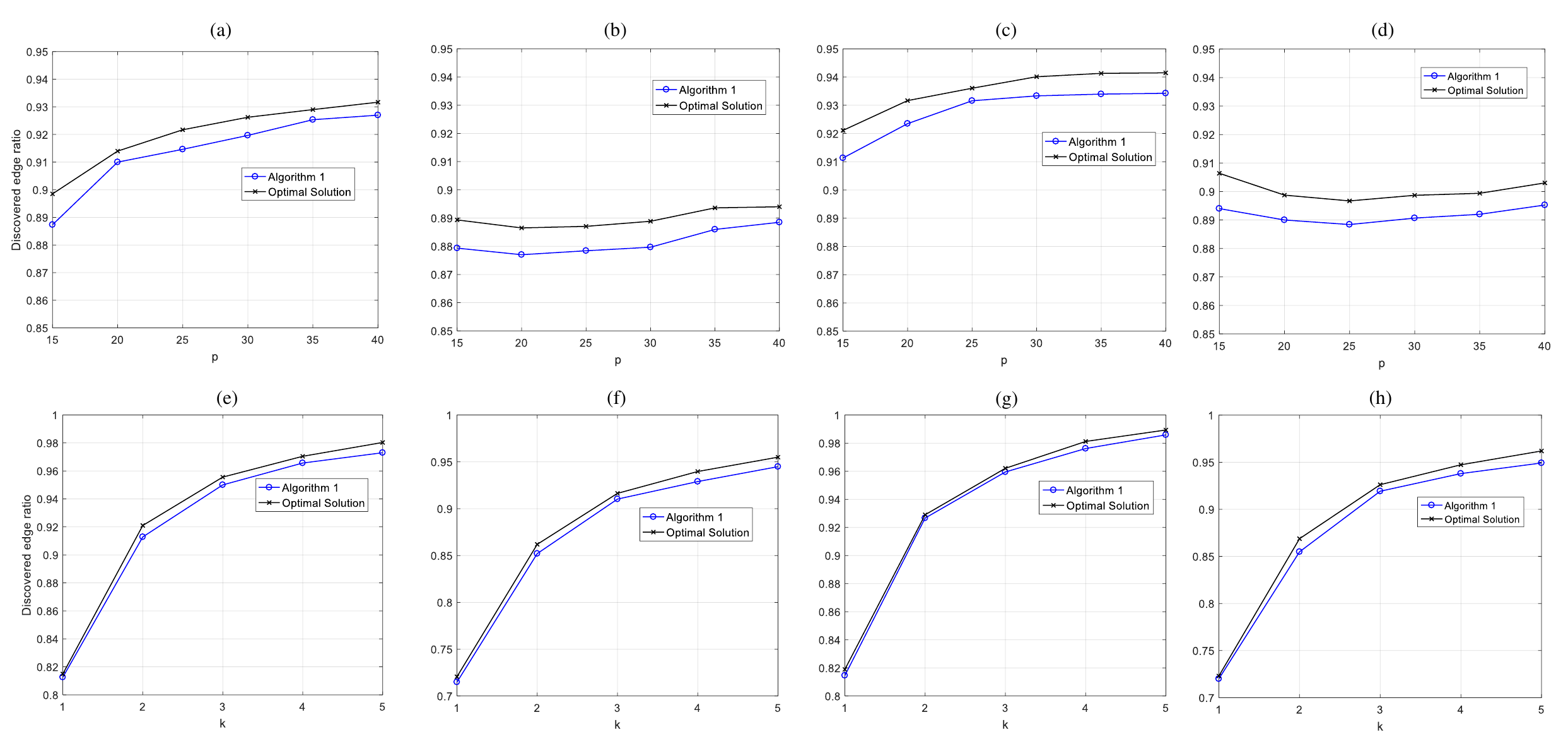}}
		\caption{The discovered edge ratio of Algorithm \ref{algTrED}, 
		%Ran-GrID, 
		and the optimal solution of average gain with respect to the order of the tree with $k=3$ (first row), and with respect to the intervention budget with $p=40$ (second row). In the first two columns, the probability of location of root has uniform distribution, while the degree based distribution is used in the simulations for the second two columns. The tree in parts (a), (c), (e) and (g) are created based on Barab\'asi-Albert model model and the bounded degree model is used in the rest.}
		\label{fig:tree}
	\end{center}
\end{figure}
 
 We considered an oracle experimental settings in evaluating the algorithm which can be seen as infinite sample case, in the absence of estimation errors. In particular, we assumed that the true essential graph is available as the input. Moreover, each intervention on a variable reveals the orientations of edges incident with that variable. As the performance measure, we consider the ratio of the number of edges whose directions are discovered as the result of interventions.
 
We generated 100 instances of random trees based on Barab\'asi-Albert model and bounded degree  model. Figure \ref{fig:tree} depicts the average discovered edge ratio of Algorithm 	\ref{algTrED}, 
%Ran-GrID, 
and the optimal solution for the average gain case versus budget and graph order. As can be seen, in both models, the performance of the proposed algorithm
%s 
is close to the optimal solution.

\subsection{General Structures}

We evaluated the performance of the Ran-GrID algorithm for the case of general structures on synthetic and real graphs.
We compared the performance of Ran-GrID with two naive approaches: 1. Rand: Selecting experiment target set randomly, 2. MaxDeg: Sorting the list of variables based on the number of undirected edges connected to them in descending order and picking the first $k$ variables from the sorted list as the experiment target set. 
%The details of experiment setting are explained in the following subsections.
%\subsection{Synthetic Graphs}
%In this subsection, 
We studied the performance of the algorithms on two models of random graphs, namely, Erd\"{o}s-R\'{e}nyi graphs and random chordal graphs, described below:
\begin{itemize}
	\item Erd\"{o}s-R\'{e}nyi graphs:  In this model, we first generate the skeleton of the graph by drawing an edge between any pair of vertices with a predefined probability. Then, we construct a DAG over this skeleton based on a random permutation of vertices.
	\item Random chordal graphs: The essential graphs of DAGs constructed from Erd\"{o}s-R\'{e}nyi graphs might not have large chain components. Thus, we generate random chordal graphs and consider them as a UCEG. To do so, we use randomly chosen perfect elimination ordering (PEO)\footnote{A perfect elimination ordering $\{X_1, X_2, ...,  X_p\}$ on the vertices of an undirected chordal graph is such that for all $i$, the induced neighborhood of $X_i$ on the subgraph formed by $\{X_1, X_2, ..., X_{i-1}\}$ is a clique.} of the vertices to generate our underlying chordal graphs \citep{hauser2014two,shanmugam2015learning}. For each graph, we pick a random ordering of the vertices. Starting from the vertex $X$ with the highest order, we connect all the vertices with lower order to $X$ with probability inversely proportional to the order of $X$. Then, we connect all the parents of $X$ with directed edges, where each directed edge is oriented from the parent with the lower order to the parent with the higher order. In order to make sure that the generated graph will be connected, if vertex $X$ is not connected to any of the vertices with the lower order, we pick one of them uniformly at random and set it as the parent of $X$. 
\end{itemize}

We considered two experimental settings in evaluating the algorithms which we call \emph{oracle case} and \emph{sample case}. In the oracle case, which can be seen as infinite sample case, we execute algorithms in the absence of estimation errors. In particular, we assume that the true essential graph is available as the input. Moreover, each intervention on a variable reveals the orientations of edges incident with that variable. In the sample case, data is drawn based on a linear structural causal model with Gaussian exogenous variables. In this model, it is just needed to specify the weight of directed edges and variance of exogenous variables. Here, we drew edge weights from a uniform distribution in the range $[-1.5,-0.5]\cup [0.5,1.5]$ and exogenous variable variances from a uniform distribution in the range $[0.01,0.2]$. By intervening on a variable, we removed incoming edges to it and drew the samples of its exogenous variable from normal distribution $\mathcal{N}(2,0.2)$.

\begin{figure*}[t]
	\centering
	\begin{minipage}{.34\textwidth}
		\centering
		\includegraphics[width=.98\linewidth]{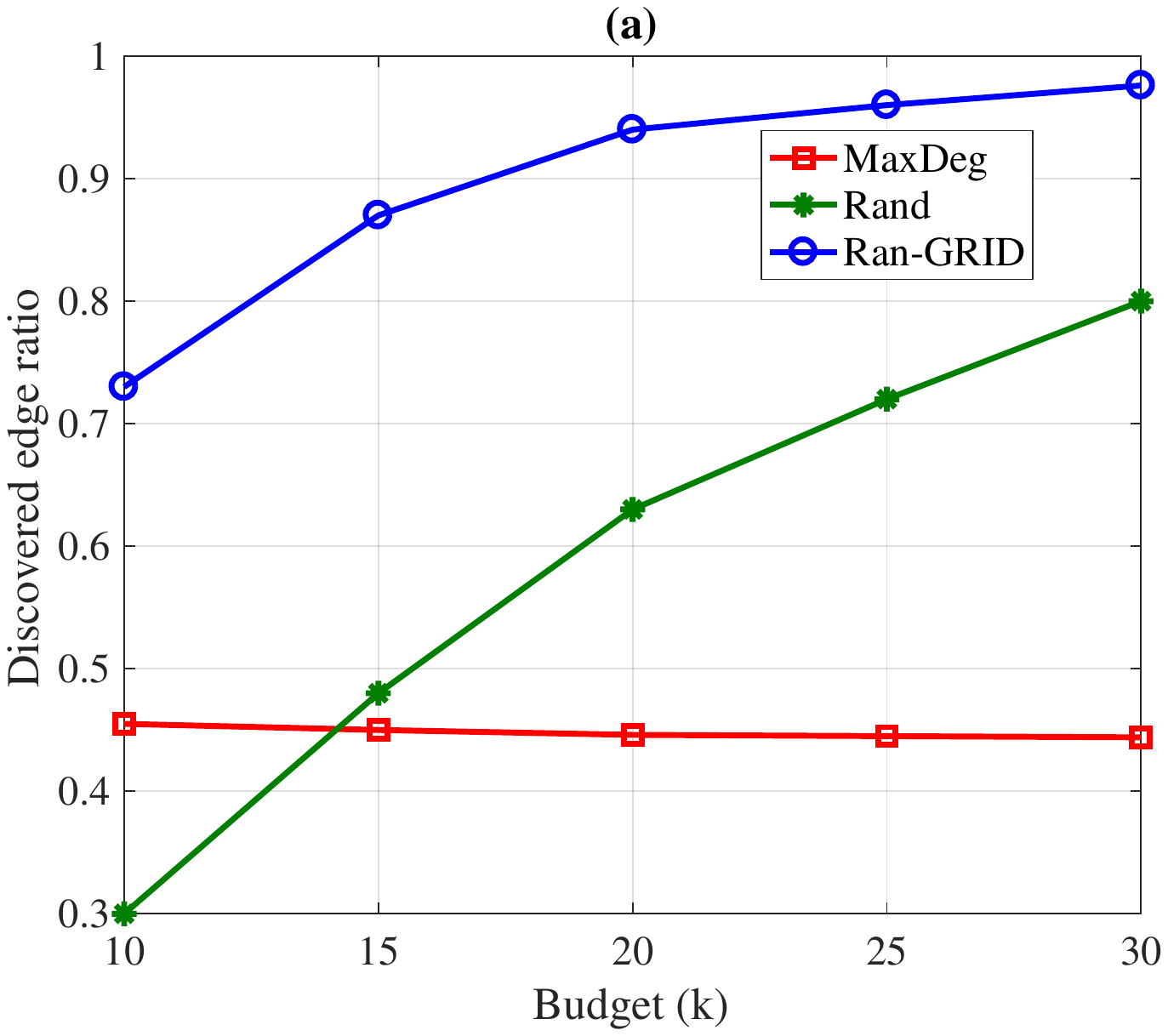}
		%\captionof{figure}{A figure}
		%\label{fig:test1}
	\end{minipage}%
	\begin{minipage}{.333\textwidth}
		\centering
		\includegraphics[width=0.95\linewidth]{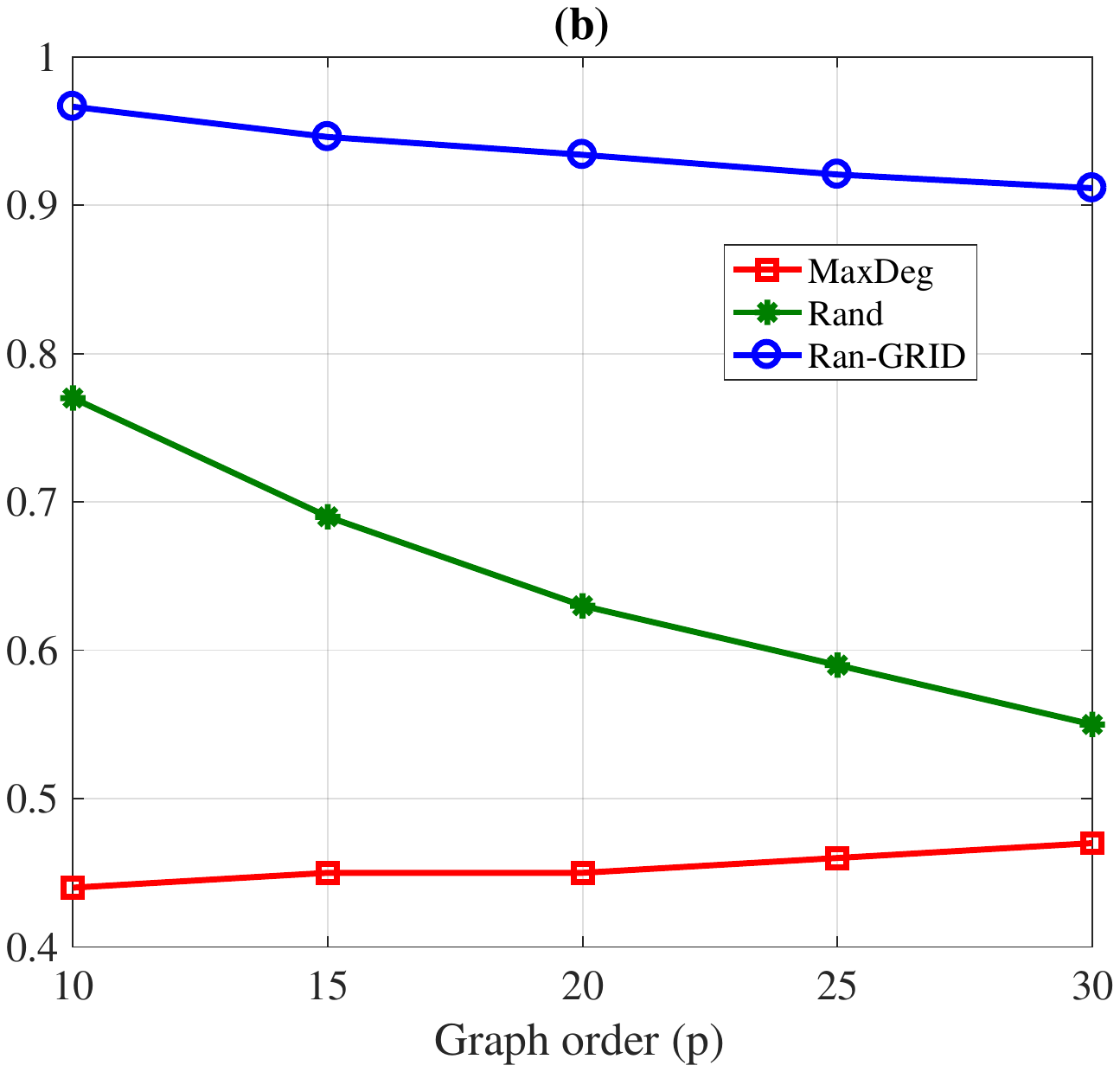}
		%\captionof{figure}{A figure}
		%\label{fig:test1}
	\end{minipage}%
	\begin{minipage}{.333\textwidth}
		\centering
		\includegraphics[width=.98\linewidth]{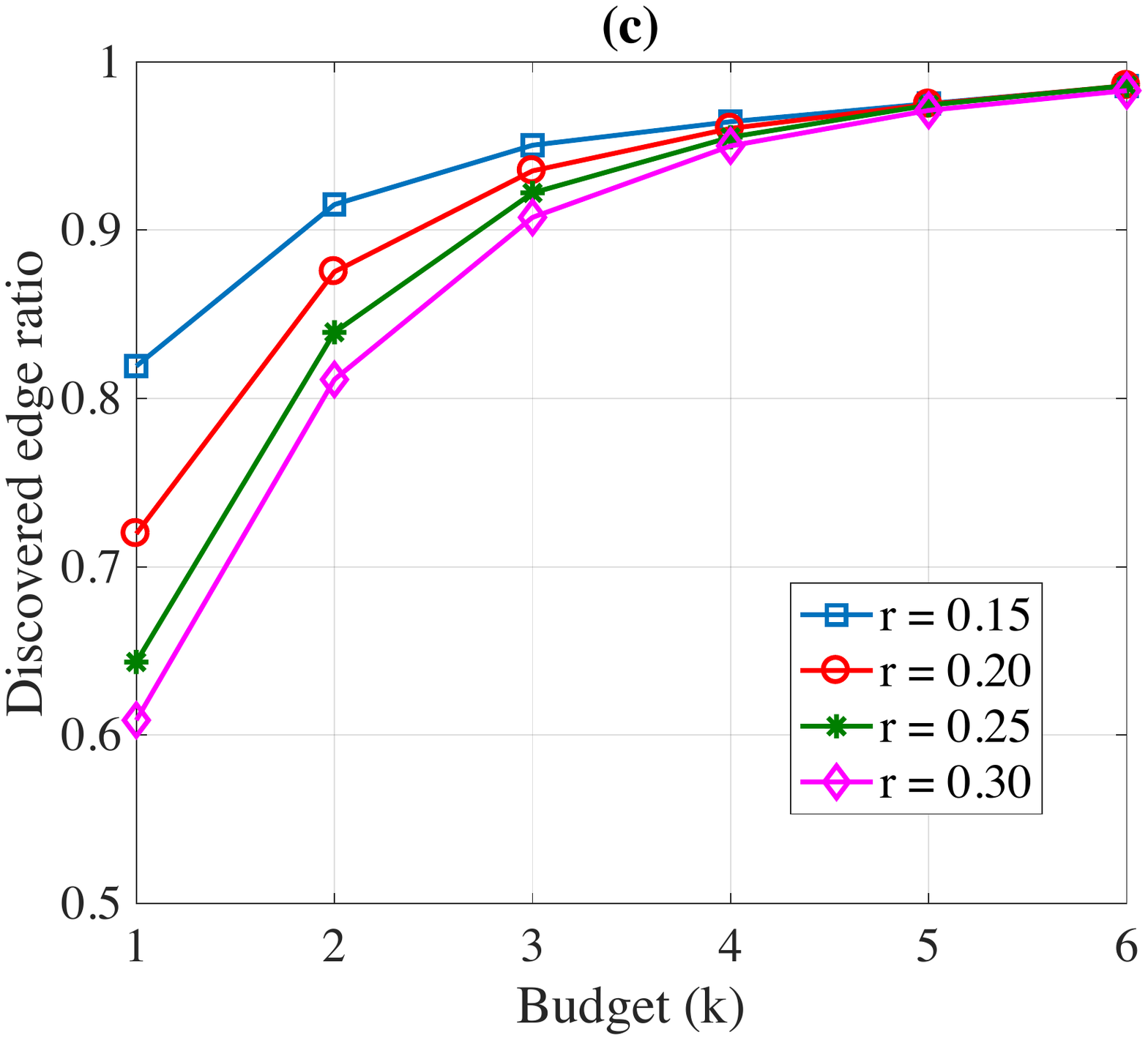}
		%\captionof{figure}{Another figure}
		%\label{fig:test2}
	\end{minipage}
	\caption{Discovered edge ratio versus (a) budget for $p=20$, (b) graph orders for $k=3$, (c) budget for $p=20$ and different densities in the random chordal graphs.} 
	\vspace{-3mm}
	\label{fig:exps}
\end{figure*}

\begin{figure*}[t]
	\centering
	\begin{minipage}{.33\textwidth}
		\centering
		\includegraphics[width=.98\linewidth]{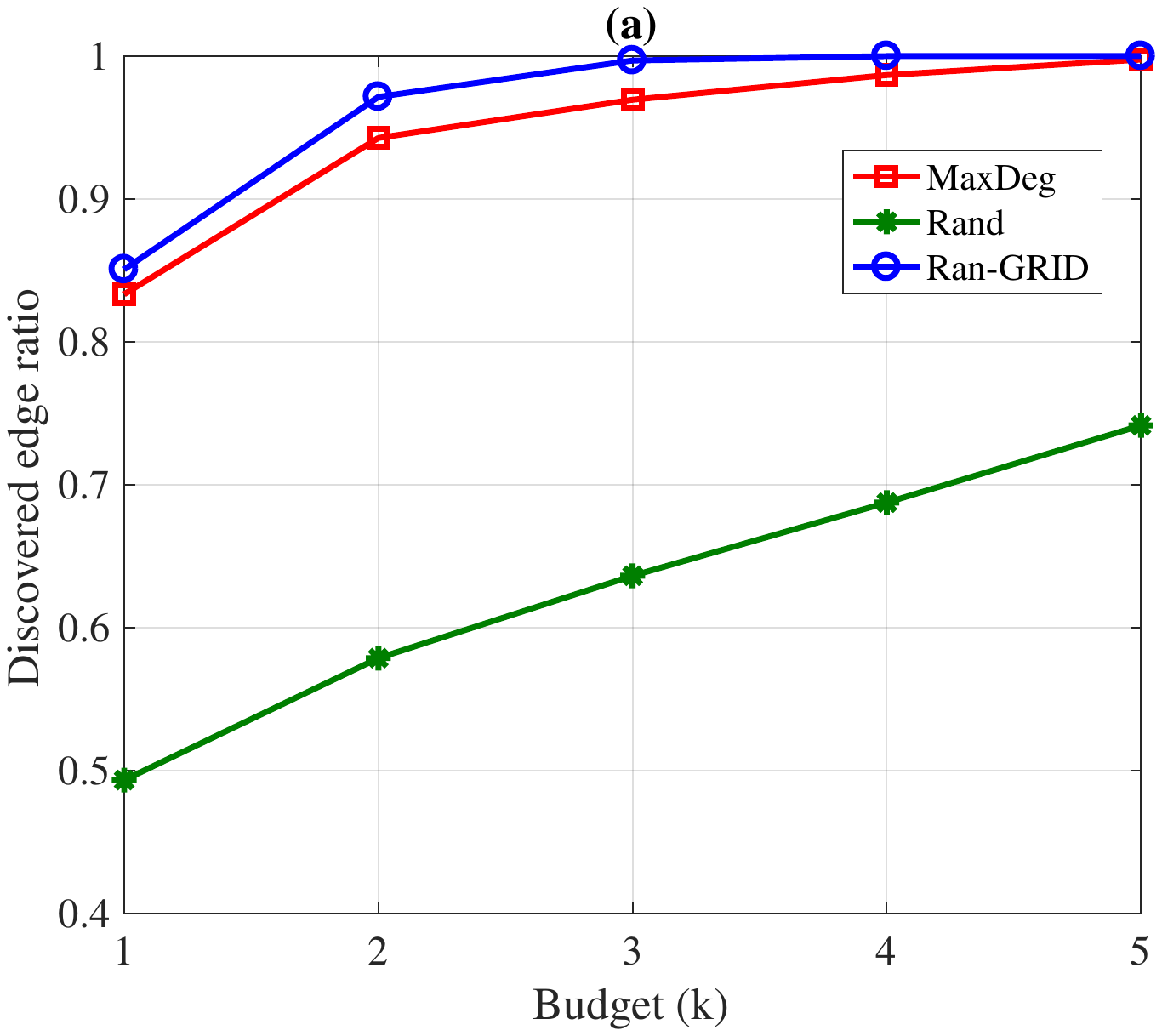}
		%\captionof{figure}{A figure}
		%\label{fig:test1}
	\end{minipage}%
	\begin{minipage}{.333\textwidth}
		\centering
		\includegraphics[width=.98\linewidth]{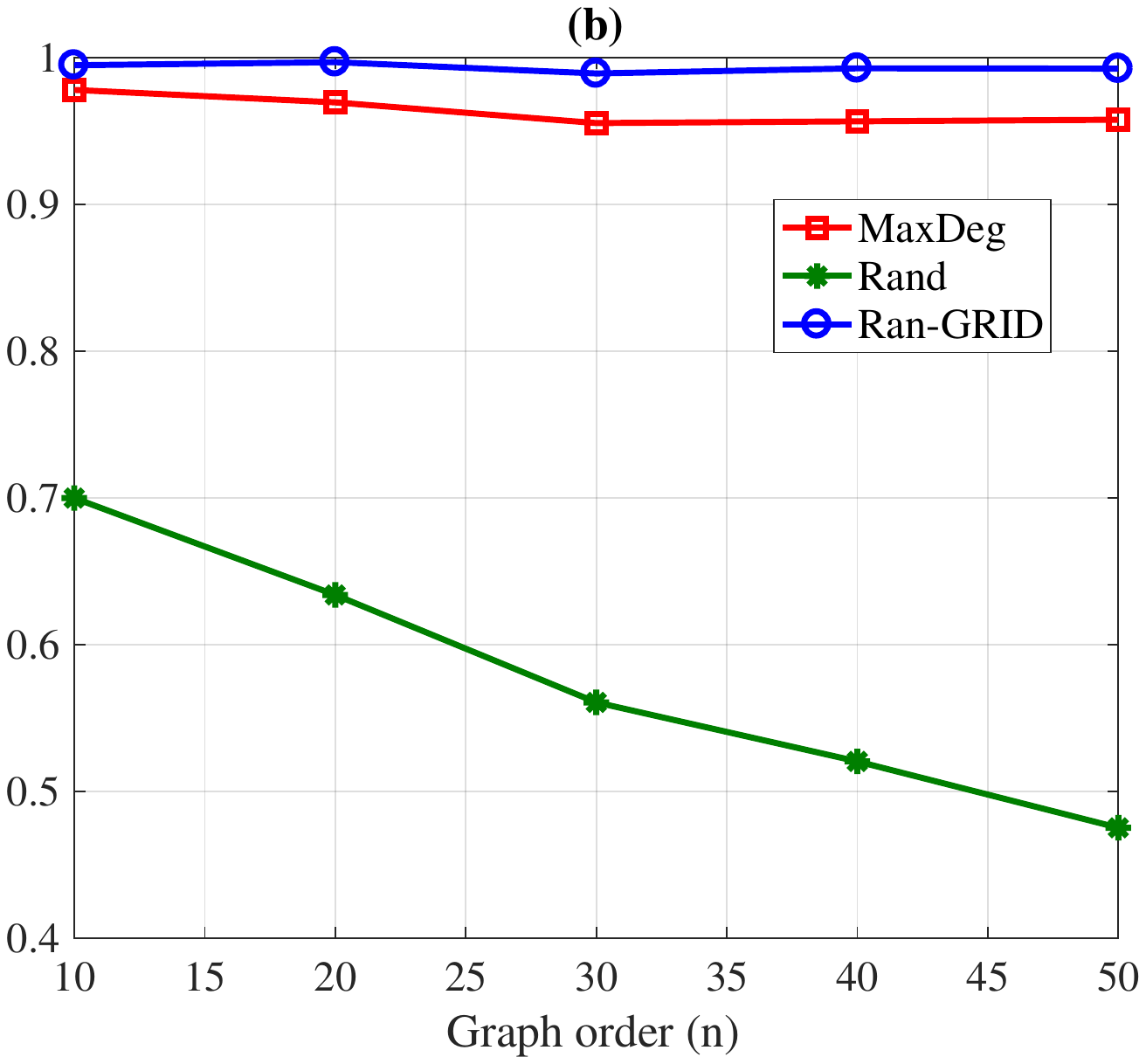}
		%\captionof{figure}{A figure}
		%\label{fig:test1}
	\end{minipage}%
	\begin{minipage}{.333\textwidth}
		\centering
		\includegraphics[width=1\linewidth]{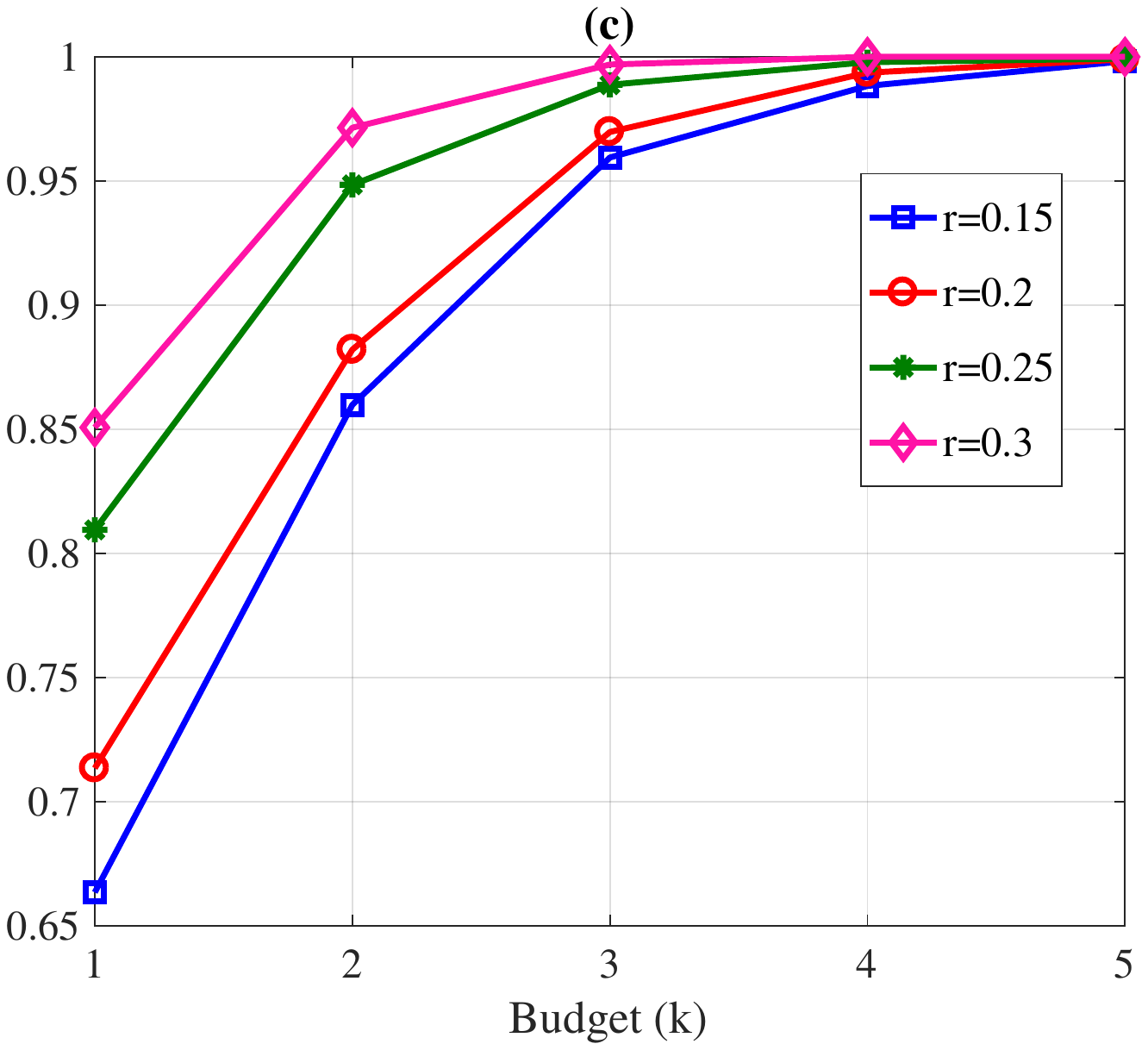}
		%\captionof{figure}{Another figure}
		%\label{fig:test2}
	\end{minipage}
	\caption{Discovered edge ratio versus (a) budget for $p=20$, (b) graph orders for $k=3$, (c) budget for $p=20$ and different densities in Erd\"{o}s-R\'{e}nyi graphs.} 
	\vspace{-3mm}
	\label{fig:exps:erdos}
\end{figure*}

\begin{figure*}[t]
	\centering
	\begin{minipage}{.5\textwidth}
		\centering
		\includegraphics[width=1.02\linewidth]{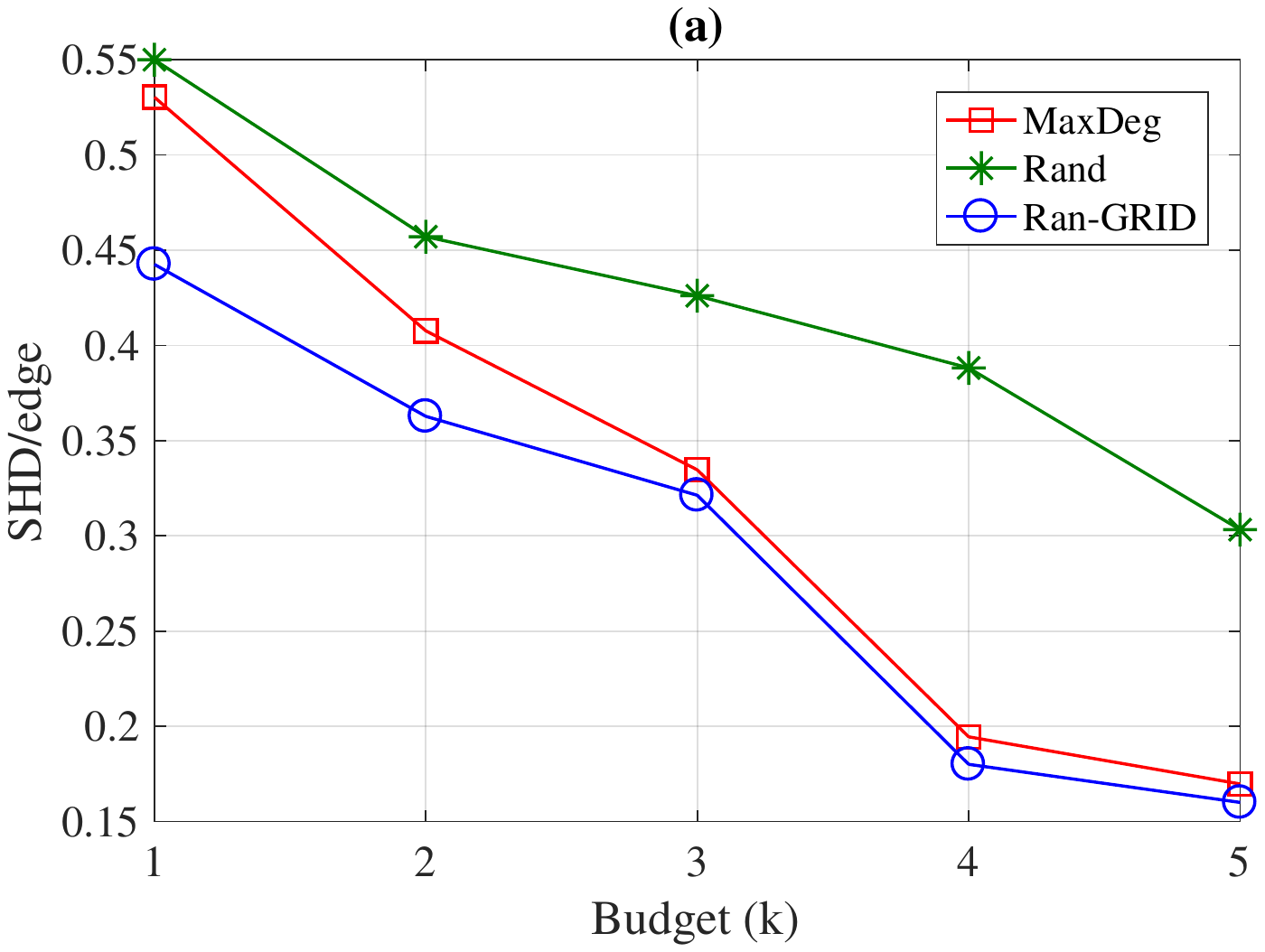}
		%\captionof{figure}{A figure}
	\end{minipage}%
	\begin{minipage}{.5\textwidth}
		\centering
		\includegraphics[width=.98\linewidth]{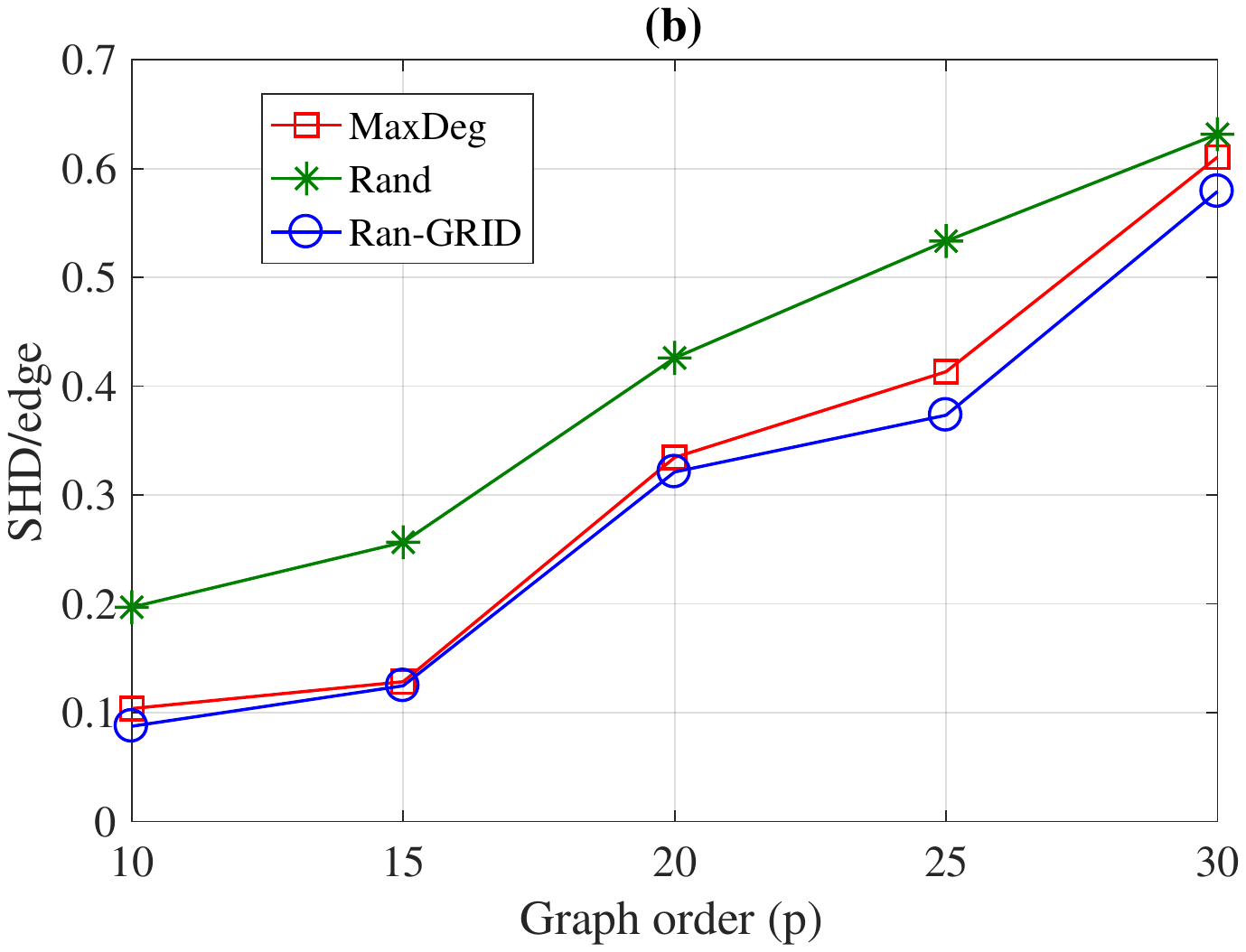}
		%\captionof{figure}{A figure}
	\end{minipage}%
	\caption{SHD per edge of true graph versus (a) budget for $p=20$ and (b) graph orders for $k=3$ in random chordal graphs.} 
	\vspace{-3mm}
	\label{fig:exps:chordal:sample}
\end{figure*}

\begin{figure*}[t]
	\centering
	\begin{minipage}{.5\textwidth}
		\centering
		\includegraphics[width=.98\linewidth]{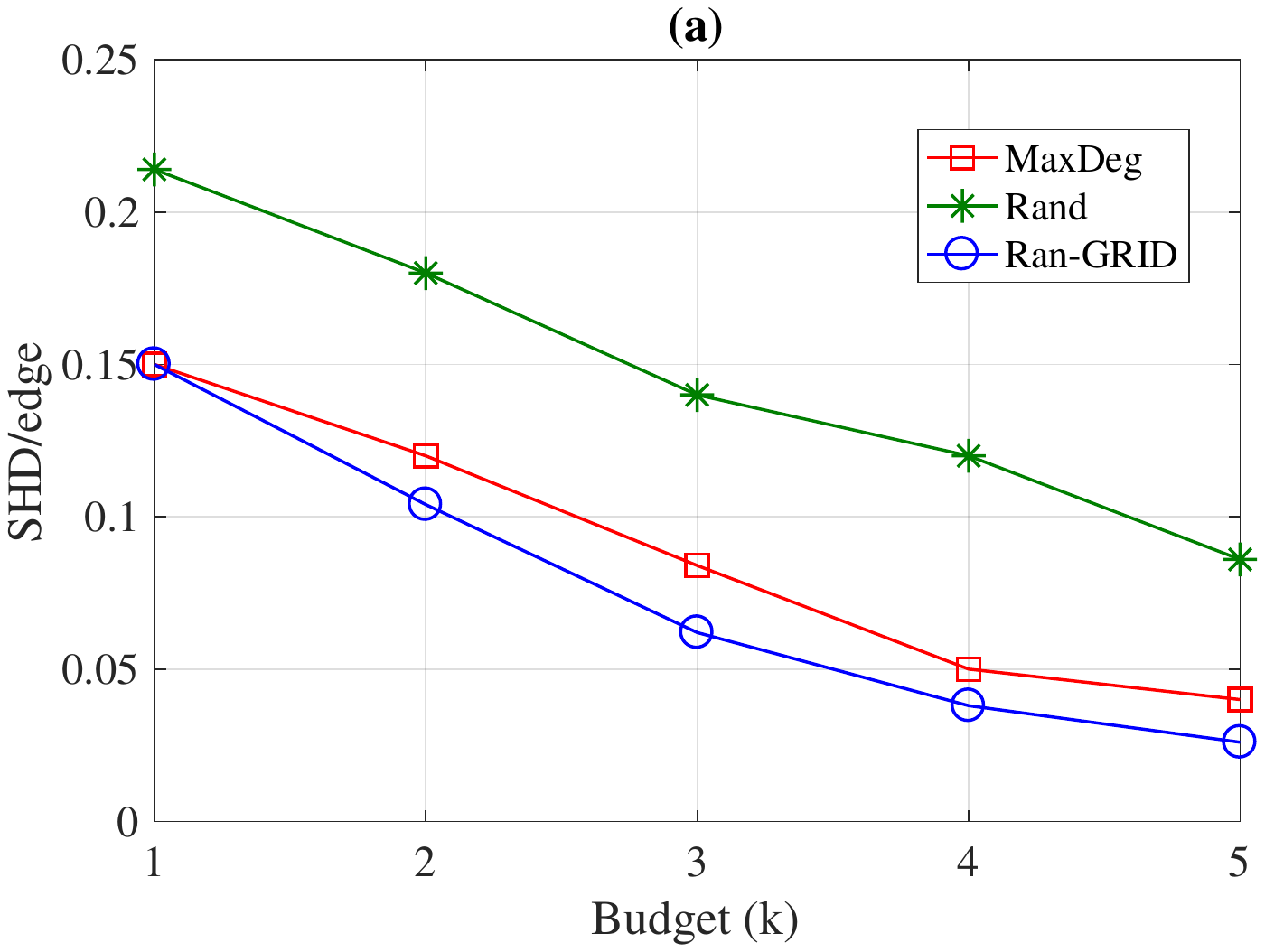}
		%\captionof{figure}{A figure}
	\end{minipage}%
	\begin{minipage}{.5\textwidth}
		\centering
		\includegraphics[width=.98\linewidth]{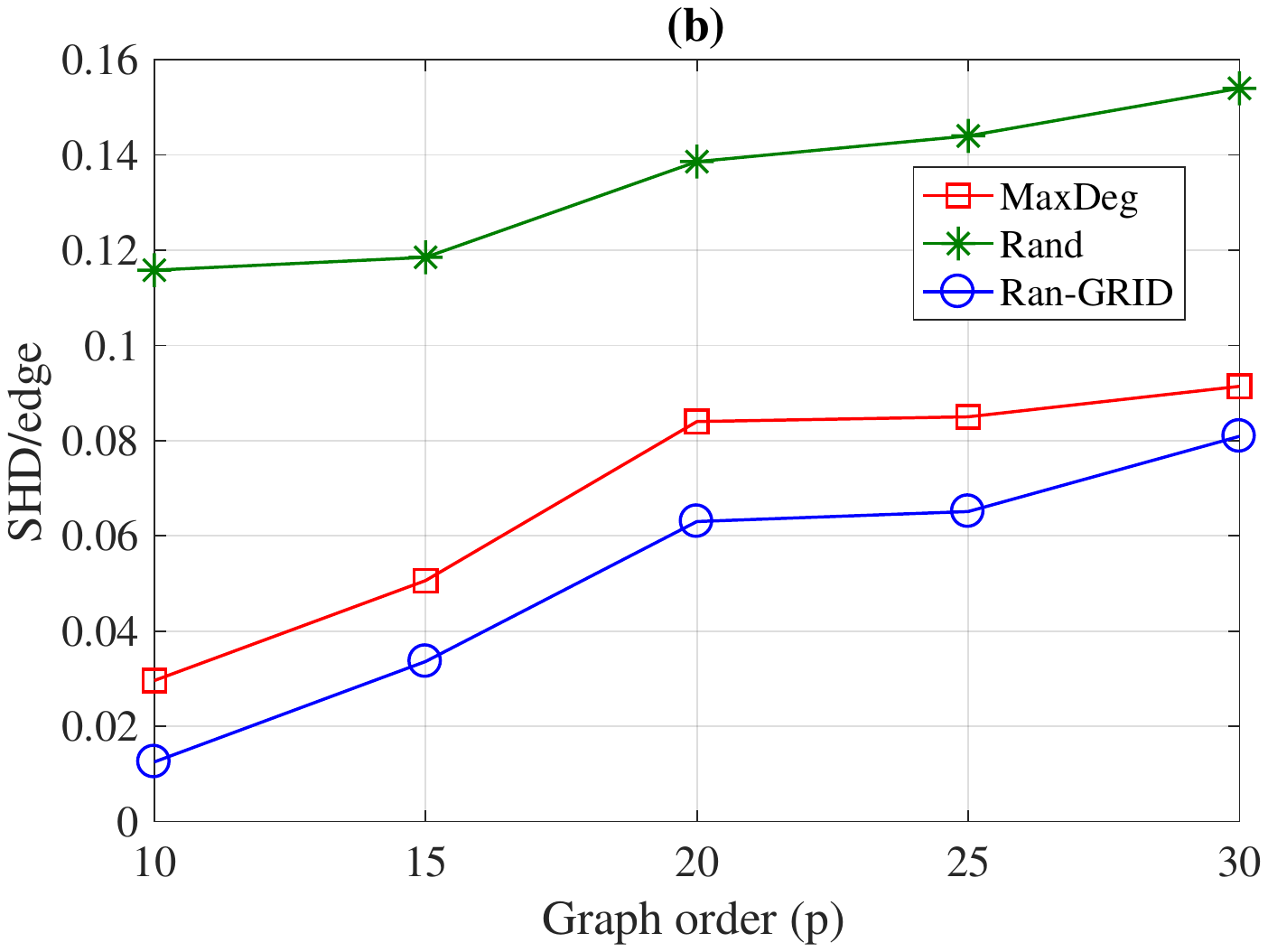}
		%\captionof{figure}{A figure}
	\end{minipage}%
	\caption{SHD per edge of true graph versus (a) budget for $p=20$ and (b) graph orders for $k=3$ in Erd\"{o}s-R\'{e}nyi graphs.} 
	\vspace{-3mm}
	\label{fig:exps:erdos:sample}
\end{figure*}

\textbf{Oracle case:} In the oracle case, as a performance measure, we consider the ratio of the number of edges whose directions are discovered merely as a result of interventions, i.e., $D(\I, G^*)$ to the number of edges whose directions were not resolved from the observational data. Note that due to our specific graph generating approach in random chordal graphs, the orientation of none of the edges is learned from the observational data.

We generated 100 instances of chordal DAGs of order $p=20$. 
%In all experiments, the improved greedy algorithm is utilized  and Subroutine 2 is used for estimating $\mathcal{D}(\mathcal{I})$. 
Figure \ref{fig:exps}(a) depicts the discovered edge ratio with respect to the budget $k$. As seen in this figure, three interventions suffices to discover the direction of more than 90\% of the edges. %whose direction was unknown prior to performing interventions. 
%{\color{blue} Moreover, the performance of RAND approach increases with $k$ while the one of MAXDEG fairly remains unchanged.} 
Further, to investigate the effect of the order of the graph on the performance of the proposed algorithm and two naive approaches, we evaluated the discovered edge ratio for budget $k=3$ on graphs with order $p\in \{10,15,20,25,30\}$ in Figure \ref{fig:exps}(b). As it can be seen in the figure, the discovered edge ratio for the proposed approach is greater than $91\%$ for all orders. The performance of Rand approach degrades dramatically as $p$ increases. Moreover, MaxDeg approach has even lower performance than Rand approach. We also studied the effect of graph density on the performance of proposed algorithm. Let parameter $r$ be the ratio of average number of edges to $\binom{p}{2}$.
%the maximum possible number of edges in a graph $G=(V,E)$, i.e. $r=\mathbb{E}[|E|]/{|V| \choose 2}$. 
%In Figure \ref{fig:exps}(c), for chordal DAGs of order $20$, we plotted discovered edge ratio versus budget for $r=0.15,0.2,0.25,0.3$. As can be seen, discovered edge ratios for different graph densities are close to each other for $k> 3$.  
The discovered edge ratio for chordal DAGs of order 20 versus budget for different densities is depicted in  Figure \ref{fig:exps}(c).

Next, we generated 100 instances of Erd\"{o}s-R\'{e}nyi graphs and repeated the same experiments explained above. Note that in this case, the direction of some of the edges may be discovered in the observational essential graph. Experiment results are given in Figure \ref{fig:exps:erdos}. As can be seen, Ran-GID approach has the best performance and MaxDeg is close to it. Moreover, the discovered edge ratio is higher for denser graphs.

Furthermore, to compare the performance of the proposed algorithm with the optimal solution, we generated 100 instances of chordal DAGs of order $p=10$ and performed a brute force search to find the optimal solution for budget $k=2$. The discovered edge ratio was $0.9$ and $0.916$ for our proposed algorithm and the optimal solution, respectively. For the aforementioned setting, the running time of the proposed approach on a machine with Intel Core i7 processor and 16 GB of RAM was $216$ seconds while the one of the brute force approach was greater than $6000$ seconds.

\textbf{Sample case:} 
In this part, we first generated $10^4$ samples of observational data and fed them as the input to the GES algorithm \citep{chickering2002optimal} to obtain an estimation of the  essential graph.
%in random chordal graphs where the average degree of vertices is set to 1.3. The resulted essential graph was used to design experiments. 
It is noteworthy to mention that the essential graph might be different from the true essential graph due to finite samples. 
Then, we generated $10^4$ samples of interventional data for each experiment and gave the collection of all observational and interventional data to GIES algorithm \citep{hauser2012characterization} to get the final output. We considered Structural Hamming Distance (SHD)  as the performance metric, which measures the differences of the output graph and the true causal graph. 
Let $B$ and $\hat{B}$ be the binary adjacency matrices of the ground truth causal DAG and the output of an algorithm, respectively. SHD is defined as follows:
\[
SHD(B,\hat{B})\coloneqq\sum_{1\leq i<j\leq p} \mathds{1}[(B_{ij}\neq\hat{B}_{ij})\vee (B_{ji}\neq\hat{B}_{ji})],
\]
where $\mathds{1}[\cdot]$ is the indicator function. %We also the define the normalized SHD as SHD divided by $\binom{p}{2}$.
If the output of GES and the output of GIES after performing experiments are too different, one might exclude these instances in computing SHD since the essential graph obtained from observational data has too many errors.  

In Figure \ref{fig:exps:chordal:sample}(a), SHD per edges of true graph is illustrated versus the budget for $p=20$. As can be seen, Ran-GRID outperforms other methods and it can fairly learn the true causal graph after five interventions. 
In Figure \ref{fig:exps:chordal:sample}(b), SHD per edges of true graph is depicted versus the graph order for $k=3$. Again, Ran-GRID has the best performance and SHD per edge increases by increasing the graph order. 
Next, we performed the same experiment for Erd\"{o}s-R\'{e}nyi graphs where the average degree of vertices is set to 3. The results are given in Figure \ref{fig:exps:erdos:sample}. It can be seen that Ran-GRID performs better than other methods for any budget or graph order.

\subsubsection{Real Graphs}
We evaluated the performance of the proposed Improved Greedy Algorithm in gene regulatory networks (GRN). GRN is a collection of biological regulators that interact with each other. In GRN, the transcription factors are the main players to activate genes. The interactions between transcription factors and regulated genes in a species genome can be presented by a directed graph. In this graph, links are drawn whenever a transcription factor regulates a gene's expression. Moreover, some of vertices have both functions, i.e., are both transcription factor and regulated gene. 

We considered GRNs in ``DREAM 3 In Silico Network" challenge, conducted in 2008 \citep{marbach2009generating}. The networks in this challenge were extracted from known biological interaction networks. Since we know the true causal structures in these GRNs, we can obtain $Ess(G^*)$ and give it as an input to the proposed algorithm. Figure \ref{fig:sim3} depicts the discovered edge ratio in five networks extracted from GRNs of E-coli and Yeast bacteria with budget $k=5$. The order of each network is 100. As it can be seen, the discovered edge ratio is at least $0.65$ in all GRNs.

\begin{figure}[t]
	\begin{center}
		\centerline{\includegraphics[scale=0.4]{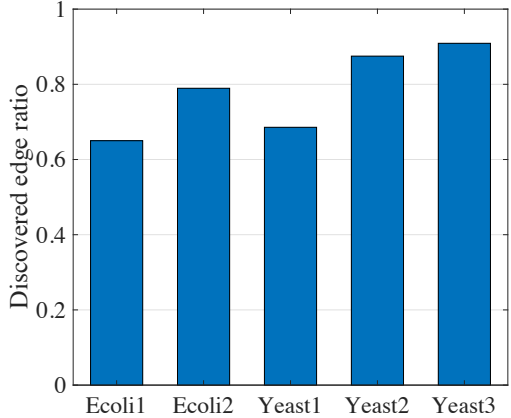}}
		\caption{Discovered edge ratio in five GRNs from DREAM 3 challenge.}
		\label{fig:sim3}
	\end{center}
\end{figure}

% Conclusion
\section{Conclusion}
\label{sec:conc}

Without any assumptions on the causal modules, from observational data, a causal DAG can be learned only up to its Markov equivalence class, and hence, the direction of a large portion of the edges may be remained unidentified. In this case, it is common to perform interventions on a subset of the variables and use the resulting interventional distributions to improve the identifiability. Here, a natural question is that on which variables one should perform the intervention to gain the most from that intervention. We considered a setup in which the experimenter is limited to a budget $k$ for the number of interventions and the interventions should be designed non-adaptively. This setup can be considered as an extension to the customary adaptive design, in which only one intervention is designed at a time. For large values of $k$ a brute force search may not be feasible and efficient strategies for designing the interventions are required. We casted the problem as an optimization problem which aims to maximize the number of edges whose directions are identified due to the performed interventions. Here, both worst-case gain and average gain optimization can be considered. We first focused on the case that the underlying causal structure is a tree. For this case, we proposed an efficient exact algorithm for the worst-case gain setup, and an approximate algorithm for the average gain setup. The proposed approach for the average gain setup was based on our result that the objective function of the optimization in this case is monotonically increasing and submodular. In our synthetic simulations on different tree generation models, we observed that the proposed optimal algorithm for the worst-case gain also had a very high performance for the average gain. We then showed that the proposed approach for the average gain setup can be extended to the case of general causal structures. However, in this case, besides the design of interventions, calculating the objective function of the optimization problem is also challenging. This is due to the fact that the number of the members of a Markov equivalence class can potentially be super exponential in the number of the variables. We propose an efficient exact calculator for the objective function as well as two estimators. All these methods are based on a proposed method for counting and uniform sampling from the members of a Markov equivalence class. We evaluate the proposed methods using synthetic as well as real data. 

Providing an exact algorithm for the average gain setup, designing interventions for the worst-case gain setup for general causal structures, and considering the problem when the variables of the system can have latent confounders are among the directions that can be considered as future work.

% Acknowledgements should go at the end, before appendices and references
%\acks{We would like to acknowledge support for this project from... }

\newpage

%\appendix
\begin{appendices}

\section{Example of Comparison with the Influence Maximization Problem}

\begin{figure}[h]
\begin{center}
\centerline{\includegraphics[scale=0.35]{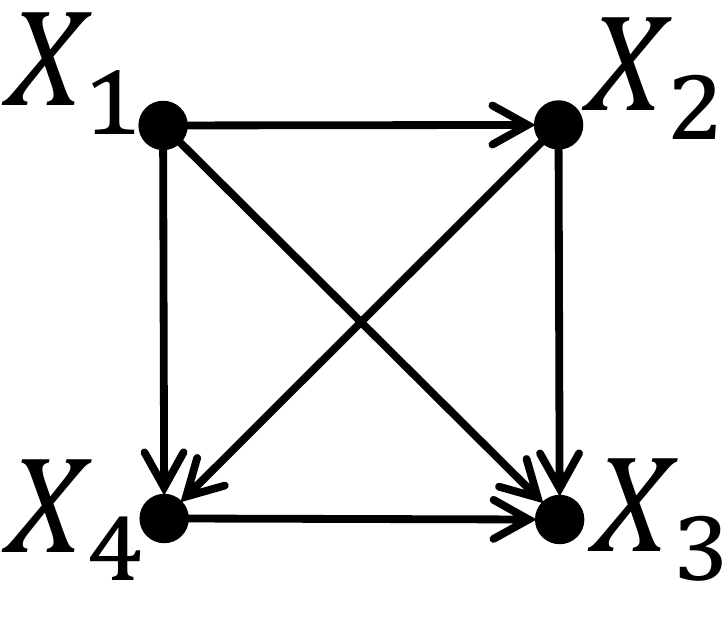}}
\caption{Example of comparison with the influence maximization problem.}

\label{fig:ex}
\end{center}
\end{figure}

Suppose $k=1$. Figure \ref{fig:ex} depicts a graph for which the optimal solution to the influence maximization problem is different from the optimal solution to the budgeted experiment design problems. Clearly, influencing vertex $X_1$ leads to influencing all the vertices in the graph, and hence, this vertex is the solution to the influence maximization problem. But, intervening on $X_1$ leads to discovering the orientation of only 3 edges, while intervening on, say $X_2$, leads to discovering the orientation of 5 edges.

\section{Proof of Lemma \ref{lem:neigh}}
\label{app:lem:neigh}

	From the passive observational stage, the set of all edges incident with $X_i$ is known. Suppose $X_j$ is adjacent with $X_i$ with unknown edge direction. If this edge in the ground truth structure has direction $X_i\rightarrow X_j$, then in the interventional distribution, there exists a subset of vertices $X_S$ containing $X_i$, for which $W_i\perp X_j|X_S$, where $W_i$ is the intervention variable corresponding to the singleton intervention on $X_i$. On the other hand, if this edge in the ground truth structure has direction $X_i\leftarrow X_j$, then in the interventional distribution, for all subsets of vertices $X_S$ containing $X_i$, we have $W_i\not\perp X_j|X_S$.
	
	The proof above works for both cases of hard and soft interventions.  \cite{eberhardt2005number} provided an alternative proof for the case of hard interventions, and \cite{he2008active} provided  alternative proofs for both cases of of soft and hard interventions.

\section{Proof of Lemma \ref{lem:rootree}}
\label{app:lem:rootree}

Suppose the root vertex is $X$. Since $\tT$ is a tree, there is a unique path from X to every other vertex.
For every vertex with path length 1 from the root, i.e., every vertex adjacent to the root, by definition, the edge is from $X$ to that vertex.
For every vertex $X_j$ with path length 2 from the root, we have the induced subgraph $X\rightarrow X_i - X_j$, and hence, since there cannot be any v-structures in the graph, the edge $X_i - X_j$ should be oriented as $X_i \rightarrow X_j$.
As the induction hypothesis, assume that for every vertex $X_i$ with path length $m$ from the root, we have the induced subgraph $X\rightarrow\cdots\rightarrow X_i$. 
Now for every vertex $X_j$ with path length $m+1$ from the root, we have the induced subgraph $X\rightarrow\cdots\rightarrow X_i - X_j$. Again, since there cannot be any v-structures in the graph, the edge $X_i - X_j$ should be oriented as $X_i \rightarrow X_j$.
Therefore, the location of the root variable identifies the direction of all the edges.

\section{Proof of Lemma \ref{lem:gain}}
\label{app:lem:gain}

We use the following lemma for the proof.
\begin{lemma}
\label{lem:desc}
For a tree UCEG $\tT$ on variable set $V$, an intervention on a variable $X_k\in V$ only determines the direction of all the edges incident to $\textit{Desc}(X_k)$, where descendants of a variable are defined with respect to the ground truth directed tree.	
\end{lemma}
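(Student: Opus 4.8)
The plan is to pin down the interventional Markov equivalence class produced by intervening on $X_k$ and then simply read off which edges are common to all of its members. Fix the ground truth directed tree $T$ and write $B\coloneqq V\setminus\textit{Desc}(X_k)$ for the non-descendants of $X_k$. If $X_k$ is the root of $T$, then $\textit{Desc}(X_k)=V$ and the only directed tree agreeing with $T$ on the edges at $X_k$ is $T$ itself, so every edge is determined and every edge is incident to a descendant --- the claim holds; assume henceforth $X_k$ has a ground-truth parent $p$. First I would record that any DAG $G'$ in the interventional MEC is Markov equivalent to $T$, hence (same tree skeleton, no v-structures) is itself a directed tree, determined by Lemma~\ref{lem:rootree} by a single root vertex $r'$; and by Lemma~\ref{lem:neigh} the intervention orients exactly the edges incident to $X_k$, so the interventional MEC is precisely the set of directed trees agreeing with $T$ on every edge incident to $X_k$.

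The structural heart of the argument is the claim that the tree $T^{r'}$ rooted at $r'$ agrees with $T$ on all edges incident to $X_k$ if and only if $r'\in B$. Since $T$ is a tree, the only edge joining $\textit{Desc}(X_k)$ to $B$ is the parent edge $p-X_k$; consequently, for a root $r'\in B$ the unique $r'$-to-$w$ path for any $w\in\textit{Desc}(X_k)$ must enter the subtree through $X_k$, so $T^{r'}$ orients $p\to X_k$ and every subtree edge away from $X_k$, exactly matching $T$. Conversely, if $r'\in\textit{Desc}(X_k)$ then either $r'=X_k$, which reverses $p-X_k$, or $r'$ lies in the subtree of some child $c$ of $X_k$, which reverses the edge $X_k-c$; in either case $T^{r'}$ disagrees with $T$ at $X_k$. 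Hence the interventional MEC is exactly $\{T^{r'}:r'\in B\}$.

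With this characterization an edge is determined by the intervention precisely when all trees in $\{T^{r'}:r'\in B\}$ orient it identically, and I would finish by a two-case split. If an edge is incident to $\textit{Desc}(X_k)$, it is either the parent edge $p-X_k$, which every admissible root orients $p\to X_k$, or an internal edge of the subtree, which every admissible root (reaching the subtree only through $X_k$) orients away from $X_k$; so every such edge is determined. If instead an edge $u-v$ has both endpoints in $B$, then $u$ and $v$ are themselves admissible roots, and $T^{u}$ orients it $u\to v$ while $T^{v}$ orients it $v\to u$; the two members of the interventional MEC disagree, so the edge is left unoriented. Combining the cases shows the determined edges are exactly those incident to $\textit{Desc}(X_k)$.

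The main obstacle is the ``nothing more'' half --- ruling out that the intervention leaks orientations above $X_k$ or into sibling subtrees --- and I handle it by the two-roots disagreement just described, whose validity rests entirely on the observation that deleting an edge with both endpoints in $B$ leaves both of its endpoints as legitimate non-descendant roots; here the tree structure is essential, since the absence of any second crossing edge between $\textit{Desc}(X_k)$ and $B$ is exactly what makes the root-location criterion clean. As an independent check of the ``at least'' half one may argue constructively with Meek's rules: from the edges at $X_k$ supplied by Lemma~\ref{lem:neigh}, rule~1 propagates orientations down the subtree (no v-structure may be created at any child), while rules~2--4 are vacuous on a tree as they require a triangle or a longer cycle, and no rule can fire at $p$ since $p$ has no incoming directed edge --- reproducing precisely the descendant-incident edges.
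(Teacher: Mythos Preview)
Your proof is correct and takes a genuinely different route from the paper's.

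The paper argues by \emph{local propagation}: starting from the edges at $X_k$ supplied by Lemma~\ref{lem:neigh}, it invokes (the reasoning behind) Lemma~\ref{lem:rootree} to push orientations down through the descendant subtree, and for the converse observes that at the parent $p$ of $X_k$ any neighboring edge $X_j-p$ sits in an induced subgraph $X_j-p\rightarrow X_k$ where either orientation of $X_j-p$ avoids a v-structure, so no Meek rule fires and nothing propagates upward. Your approach instead gives a \emph{global characterization} of the interventional MEC as $\{T^{r'}:r'\in B\}$ and reads off the determined edges from agreement across that set. The payoff of your route is a fully explicit ``nothing more'' argument: for an edge with both endpoints in $B$ you exhibit two concrete members of the class ($T^{u}$ and $T^{v}$) that disagree, which is more airtight than the paper's somewhat informal step from ``no rule fires at $p$'' to ``no edge anywhere in $B$ is oriented''. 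The paper's propagation argument is closer to the algorithmic picture (and you recover it as your final sanity check), while your characterization of the interventional class is a statement of independent interest that could be reused elsewhere.
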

\begin{proof}
By Lemma \ref{lem:neigh}, an intervention on $X_k$ identifies the direction of all edges incident to $X_k$.
Since $\tT$ is a tree, there is a unique path from X to every other vertex.
For every vertex for which the path from $X_k$ to that vertex goes through a child of $X_k$, similar to Lemma \ref{lem:rootree}, the direction of incident edges to that vertex will be identified.
Therefore, we learn the direction of all the edges incident to $\textit{Desc}(X_k)$.
Now, suppose $X_i$ is a parent of $X_k$. Therefore, for every vertex $X_j$ adjacent to $X_i$, we have the induced subgraph $X_j - X_i\rightarrow X_k$. Hence the edge $X_j - X_i$ can have either of the directions without creating a v-structure, and hence, the direction of such edge cannot be identified. Therefore, the direction of any of the edges incident to $X_j$ cannot be identified either. Consequently, we do not learn the direction of all any of the edges incident to $\textit{Non-Desc}(X_k)$.
\end{proof}

Suppose the ground truth directed tree is $T^{X}_r$. 
 By Lemma \ref{lem:desc}, after an experiment with target set $\I_r$, the edges whose directions are remained unresolved are those which are incident only to $\cap_{X_k\in\I_r}\textit{Non-Desc}(X_k)$, which are the edges of the component $C_j(\I_r)$, where $X\in C_j(\I_r)$. Noting that the size of a tree of order $p$ is $p-1$ concludes that the number of unresolved edges are $|C_j(\I_r)|-1$. 
If 	$X\in\I_r$, then $\cap_{X_k\in\I_r}\textit{Non-Desc}(X_k)=\emptyset$, i.e., the direction of all the edges are identified and the gain will be $D(\I_r,T_r^X)=|\tT_r|-1$. Otherwise the gain will be $D(\I_r,T^X_r)=|\tT_r|-1-|C_j(\I_r)|+1=|\tT_r|-|C_j(\I_r)|$.
%We note that in a v-structure-free directed tree, all vertices except the root vertex have exactly one parent, otherwise, there will be a v-structure colliding on that vertex. 

\section{Proof of Proposition \ref{prop:avggain}}
\label{app:prop:avggain}

	%Let $I_r$ be the number of interventions that are assigned to $\tT^r$ in $\mathcal{I}$. 
	We can write the average gain $\mathcal{D}(\I)$ as follows:
	\begin{align*}
	\mathcal{D}(\I)&=\frac{1}{p_u}\sum_{r=1}^R\sum_{X\in V(\tT_r)}D(\I_r,T^{X}_r) \\
	&\overset{(a)}{=}\frac{1}{p_u}\sum_{r=1}^R\sum_{X\in\I_r \cap V(\tT_r)}(|\tT_r|-1)+\frac{1}{p_u}\sum_{r=1}^R\sum_{j=1}^{J(\I_r)}\sum_{X\in C_j({\I_r})}|\tT_r|-|C_j({\I_r})|\\
	&=\frac{1}{p_u}\sum_{r=1}^R |\I_r|(|\tT_r|-1)+\frac{1}{p_u}\sum_{r=1}^R\sum_{j=1}^{J(\I_r)} |\tT_r||C_j({\I_r})|-|C_j({\I_r})|^2\\
	&\overset{(b)}{=}\frac{1}{p_u}\sum_{r=1}^R |\I_r|(|\tT_r|-1)+\frac{1}{p_u} \sum_{r=1}^R |\tT_r|(|\tT_r| -|\I_r|)-\frac{1}{p_u}\sum_{r=1}^R \sum_{j=1}^{J(\I_r)} |C_j({\I_r})|^2\\
	&=\frac{1}{p_u}\sum_{r=1}^R |\tT_r|^2-\frac{k}{p_u}-\frac{1}{p_u}\sum_{r=1}^R \sum_{j=1}^{J(\I_r)} |C_j(I_r)|^2,
	\end{align*}
	where $(a)$ is due to Lemma \ref{lem:gain} and 
	$(b)$ follows from the fact that vertices which belong to component, only exclude vertices in $\I$.

%Using Lemma \ref{lem:gain} we have
%\begin{align*}
%\mathcal{D}(\I)&=\frac{1}{p}\sum_{X_i\in V}D(\I,T_{X_i})\\
%&\overset{(a)}{=}\frac{1}{p}\sum_{X_i\in\I}(p-1)+\frac{1}{p}\sum_{j=1}^J\sum_{X_i\in C_j}p-|C_j|\\
%&=\frac{k(p-1)}{p}+\frac{1}{p}\sum_{j=1}^Jp|C_j|-|C_j|^2\\
%&\overset{(b)}{=}\frac{k(p-1)}{p}+p-k+\frac{1}{p}\sum_{j=1}^J-|C_j|^2\\
%&=p-\frac{k}{p}-\frac{1}{p}\sum_{j=1}^J|C_j|^2,
%\end{align*}	
%where $(a)$ and $(b)$ follow from the fact that vertices which belong to components only exclude vertices in $\I$.

\section{Proof of Theorem \ref{thm:algTrED}}
\label{app:thm:algTrED}

%Suppose the starting vertex of DFS is $X_i$.
We use the following lemma for the proof.
\begin{lemma}
\label{lem:leastremoval}
Among all algorithms achieving a threshold $mid$, Algorithm \ref{algTrED} uses the least number of vertex removals.
\end{lemma}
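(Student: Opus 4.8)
The plan is to prove the lemma by a packing argument: to the set $R_G$ of vertices that Algorithm~\ref{algTrED} removes (for a fixed threshold $\textit{mid}$ and a fixed DFS start vertex) I will attach a family of pairwise disjoint, connected subtrees of $\tT_r$, each having strictly more than $\textit{mid}$ vertices, and then observe that any vertex set whose removal achieves threshold $\textit{mid}$ must contain a vertex from each of these subtrees, forcing it to be at least as large as $R_G$.

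Concretely, first I would fix the DFS start vertex and regard $\tT_r$ as rooted there, so that the traversal in the inner \textbf{for} loop processes every vertex after all of its children. Let $R_G$ be the set of vertices added to $\I$ in this run. For each $v\in R_G$, let $D_v$ be the set $\textit{Desc}(v)$ \emph{in the working tree $T$ at the moment $v$ is processed}. The bookkeeping step — tracking the effect of the updates $T\leftarrow T\setminus\textit{Desc}(X_j)$ together with the post-order — shows that $D_v$ is exactly $\{v\}$ together with the connected components of $\tT_r\setminus R_G$ whose topmost vertex (the one closest to the root) is a child of $v$; consequently $\tT_r[D_v]$ is connected, and $|D_v|>\textit{mid}$, since this is precisely the test $|\textit{Desc}(X_j)|>\textit{mid}$ in the inner loop that caused $v$ to be placed in $R_G$. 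Moreover the sets $\{D_v:v\in R_G\}$ are pairwise disjoint: a vertex of $R_G$ lies in no component of $\tT_r\setminus R_G$, hence only in its own $D_v$, and each component of $\tT_r\setminus R_G$ has a unique top vertex, whose parent is unique, so that component is absorbed into at most one $D_v$. The same bookkeeping also certifies that $R_G$ itself achieves threshold $\textit{mid}$ (each component of $\tT_r\setminus R_G$ coincides with $\textit{Desc}(t)$ in $T$ at the step where its top vertex $t$ was kept, and $t$ was kept only because $|\textit{Desc}(t)|\le\textit{mid}$ at that step), so the statement is not vacuous.

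Next I would take an arbitrary vertex set $R\subseteq V(\tT_r)$ whose removal leaves every connected component of $\tT_r$ with at most $\textit{mid}$ vertices. For each $v\in R_G$: if $D_v\cap R=\emptyset$ then $D_v\subseteq V(\tT_r)\setminus R$, and since $\tT_r[D_v]$ is connected it lies inside a single component of $\tT_r\setminus R$, which would then have at least $|D_v|>\textit{mid}$ vertices — a contradiction. Hence $D_v\cap R\neq\emptyset$ for every $v\in R_G$, and choosing one vertex of $R$ from each $D_v$ (legitimate by disjointness) yields $|R|\ge|R_G|$, which is the claim. I expect the one genuinely delicate point to be the structural description of $D_v$ in the second paragraph: one must carefully track which vertices remain in $T$ across the run — because the algorithm deletes whole current subtrees $\textit{Desc}(X_j)$ rather than single vertices — and confirm that $D_v$ is exactly $v$ plus the components of $\tT_r\setminus R_G$ hanging below $v$. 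Once that picture is in place, the disjointness and the final packing inequality are routine. As a by-product, the argument shows that $|R_G|$ does not depend on the chosen DFS start vertex.
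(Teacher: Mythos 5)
Your proposal is correct, but it proves the lemma by a genuinely different route than the paper. The paper argues by induction on the rooted subtrees, in the spirit of a greedy exchange argument: since Algorithm \ref{algTrED} removes a vertex only when keeping it would create a component of size exceeding $\textit{mid}$, it delays every removal as long as possible, and the induction maintains that each subtree is handled with the fewest removals while leaving the smallest possible residual piece attached to the subtree's root, so minimality propagates upward to the parent. You instead construct an explicit optimality certificate: the sets $D_v$, one per vertex removed by the algorithm, which are pairwise disjoint, connected, and each of size greater than $\textit{mid}$, so that any removal set $R$ achieving the threshold must intersect every $D_v$ and hence satisfies $|R|\ge|R_G|$; no comparison with a competing algorithm's run is needed. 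The trade-off is as follows: the paper's induction is shorter and follows the natural ``delay removals'' intuition, but it silently relies on a two-part invariant (fewest removals so far \emph{and} smallest leftover at the subtree root among solutions using that many removals) that it only sketches; your packing argument requires the post-order bookkeeping identifying $D_v$ with $\{v\}$ together with the components of $\tT_r\setminus R_G$ hanging below $v$ (the step you rightly flag as delicate, and which does go through since each kept child's surviving descendant set is exactly its final component), but once that is in place the lower bound is immediate, feasibility of $R_G$ is certified along the way, and you get the additional by-product that the minimum number of removals for a given threshold is independent of the DFS start vertex.
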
 
\begin{proof}
Proof by induction. We show for each subtree, the smallest number of vertex removal is used. Since the proposed algorithm removes a vertex only if not doing so results in having a subtree with the order larger than the threshold, it delays a removal as much as possible. Now suppose for vertex $X_j$, we have used the smallest number of removals, say $l$, in subtrees rooted at the children of $X_j$. Because in each of those subtrees, the removals have been delayed the most, the order of remaining part for the subtree rooted at $X_j$ with $l$ removals is minimum. Therefore the subtree rooted at $X_j$ also contributes the least value (zero if it is chosen to intervene on) to the order of the subtree rooted at its parent.
 
\end{proof}

Now, suppose for the optimum experiment target set $\I^*_r$, that is,
\[
\I^*_r=\arg\min_{\I_r:\I_r\subseteq V(\tT_r)} \max_{1\leq j \leq J(\I_r)} |C_j(\I_r)|,
\]
with $|\I^*_r|=k_r$ we have $M^*\coloneqq\max_{1\leq j \leq J(\I^*_r)} |C_j(\I^*_r)|<\min_{X_i} mid(X_i)$. 
In this case, in the binary search in Algorithm \ref{algTrED}, when the threshold is set to $mid$ such that $M^*-1<mid\le M^*$, then by Lemma \ref{lem:leastremoval}, Algorithm \ref{algTrED} should have used less than or equal to $k_r$ vertex removals.
If it has used less than $k_r$ vertex removals, it means that it can achieve $M^*$ with $|\hat{\I}_r|<k_r$, and hence, can achieve a value less than $M^*$ with $k_r$ vertex removals, which implies that $\I^*_r$ is not optimum. Therefore, we should have 
\[
\min_{X_i} mid(X_i)\linebreak=\min_{\I_r:\I_r\subseteq V(\tT_r)} \max_{1\leq j \leq J(\I_r)} |C_j(\I_r)|.
\]

\section{Proof of Proposition \ref{prop:treeavg}}
\label{app:prop:treeavg}

{\bf Monotonicity.}
Consider $\I^1\subseteq\I^2$. Target set $\I^2$ divides some of the components of target set $\I^1$ into smaller components, or removes vertices from some of them, and keeps the rest unchanged. Suppose $C_j$ is a changed component. Therefore, corresponding to this component, for $\I^1$ we have the term $|C_j|^2$, and for $\I^2$ we have  $\sum_{l=1}^L|C_{jl}|^2$ such that $\sum_{l=1}^L|C_{jl}|<|C_j|$. Basic algebra and induction on $L$ indicates that under this condition $\sum_{l=1}^L|C_{jl}|^2$ is always less that $|C_j|^2$. Hence, $\mathcal{D}(\I^1)\le\mathcal{D}(\I^2)$.\\

\noindent
{\bf Submodularity.}
We first show that the for every root vertex $X_i$, the set function $D(\I,T^{X_i})$ is submodular. i.e., for $\I^1\subseteq\I^2$, vertex $X$,
\[
D(\I^1\cup\{X\},T^{X_i})-D(\I^1,T^{X_i})\ge D(\I^2\cup\{X\},T^{X_i})-D(\I^2,T^{X_i}).
\]
By Lemma \ref{lem:gain}, the value of the function $D(\I,T^{X_i})$ only depends on the component containing the root.
Suppose under experiment $\I^1$ the root vertex falls in component $C_{\I^1}$, and under experiment $\I^2$ the root vertex falls in component $C_{\I^2}$. 
If $C_{\I^1}=C_{\I^2}$, the result is immediate, as without intervening on $X$, $\I^1$ and $\I^2$ result in the same value for function $D$, and intervening on $X$ will also have the same result in bot experiments. Otherwise. since $\I^1\subseteq\I^2$, we have $C_{\I^2}\subseteq C_{\I^1}$. Hence, the cardinality of the set of the edges which are incident to $\textit{Desc}(X)$ in $C_{\I^1}$ is larger that the cardinality of the set of the edges which are incident to $\textit{Desc}(X)$ in $C_{\I^2}$. This implies that we have a larger gain by intervening on $X$ starting from $\I_1$ compared to $\I^2$, i.e., $D(\I^1\cup\{X\},T^{X_i})-D(\I^1,T^{X_i})\ge D(\I^2\cup\{X\},T^{X_i})-D(\I^2,T^{X_i})$.

Finally, using equality $\mathcal{D}(\I)=\frac{1}{p_u}\sum_{r=1}^R\sum_{X\in V(\tT_r)}D(\I_r,T_r^{X})$, since a non-negative linear combination of submodular functions is also submodular, the desired result is concluded.

\section{Proof of Proposition \ref{prop:mono}}
\label{app:prop:mono}

First we show that for a given directed graph $G_i\in\MEC(G^*)$ the function $D(\I,G_i)$ is a monotonically increasing function of $\I$. 
In the proposed method, intervening on elements of $\I$, we first discover the orientation of the edges in $A(\I,G_i)$, and then applying the Meek rules, we possibly learn the orientation of some extra edges. 
Having $\mathcal{I}_1\subseteq \mathcal{I}_2$ implies that $A(\I_1,G_i)\subseteq A(\I_2,G_i)$. Therefore using $\mathcal{I}_2$, we have more information about the direction of edges. Hence, in the step of applying Meek rules, by soundness and order-independence of Meek algorithm, we recover the direction of more extra edges, i.e., $R(\I_1,G_i)\subseteq R(\I_2,G_i)$, which in turn implies that $D(\mathcal{I}_1,G_i)\le D(\mathcal{I}_2,G_i)$.
Finally, from the equation $\mathcal{D}(\I)=\frac{1}{|\MEC(G^*)|}\sum_{G_i\in\MEC(G^*)}D(\mathcal{I},G_i)$, the desired result is immediate.

\section{Proof of Lemma \ref{lem:nofusion}}
The direction $R(\I_1,G^*)\cup R(\I_2,G^*)\subseteq R(\I_1\cup\I_2,G^*)$ is proved in the proof of Proposition \ref{prop:mono}. 
Define $A(\tG^*)$ as the set of directed edges in $\tG^*$, and let $R(M,G^*)$ be the set of undirected edges of $\tG^*$ whose directions can be identified by applying Meek rules starting from $A(\tG^*)\cup R(\I_1,G^*)\cup R(\I_2,G^*)$.
Again by the reasoning in the proof of Proposition \ref{prop:mono}, we have $R(\I_1\cup\I_2,G^*)\subseteq R(M,G^*)$. Therefore, in order to prove that $R(\I_1\cup\I_2,G^*)\subseteq R(\I_1,G^*)\cup R(\I_2,G^*)$, it suffices to show that $R(M,G^*)\subseteq R(\I_1,G^*)\cup R(\I_2,G^*)$, for which it suffices to show that for every directed edge $e$, if $e\not\in R(\I_1,G^*)$ and $e\not\in R(\I_2,G^*)$, then $e\not\in R(M,G^*)$.

\textit{Proof by contradiction.} Let $e\not\in R(\I_1,G^*)$ and $e\not\in R(\I_2,G^*)$, but its orientation is learned in the first iteration of applying Meek rules to $A(\tG^*)\cup R(\I_1,G^*)\cup R(\I_2,G^*)$. Then, we have learned the orientation of $e$ due to one of Meek rules \citep{verma1992algorithm}:
\begin{itemize}
\item \textbf{Rule 1.} $e=A-B$ is oriented as $A\rightarrow B$ if there exists $C$ such that $e_1=C\rightarrow A\in A(\tG^*)\cup R(\I_1,G^*)\cup R(\I_2,G^*)$, and $C-B\not\in$ skeleton of $G^*$.
\item \textbf{Rule 2.} $e=A-B$ is oriented as $A\rightarrow B$ if there exists $C$ such that $e_1=A\rightarrow C\in A(\tG^*)\cup R(\I_1,G^*)\cup R(\I_2,G^*)$, and $e_2=C\rightarrow B\in A(\tG^*)\cup R(\I_1,G^*)\cup R(\I_2,G^*)$.
\item \textbf{Rule 3.} $e=A-B$ is oriented as $A\rightarrow B$ if there exists $C$ and $D$ such that $e_1=C\rightarrow B\in A(\tG^*)\cup R(\I_1,G^*)\cup R(\I_2,G^*)$, $e_2=D\rightarrow B\in A(\tG^*)\cup R(\I_1,G^*)\cup R(\I_2,G^*)$, $A-C\in$ skeleton of $G^*$, $A-D\in$ skeleton of $G^*$, and $C-D\not\in$ skeleton of $G^*$.
\item \textbf{Rule 4.} $e=A-B$ is oriented as $A\rightarrow B$ and $e=B-C$ is oriented as $C\rightarrow B$ if there exists $D$ such that $e_1=D\rightarrow C\in A(\tG^*)\cup R(\I_1,G^*)\cup R(\I_2,G^*)$, $A-C\in$ skeleton of $G^*$, $A-D\in$ skeleton of $G^*$, and $B-D\not\in$ skeleton of $G^*$.
\end{itemize}
%(R2) (a b)isorientedas(a!b)if9cs.t.(a!c)and(c!b).
%\begin{itemize}
%\item cycle case: We have a cycle C in G with $e\in C$ such that the orientation of all of the edges on $C$ except $e$ is recovered and orienting $e$ in one way makes $C$ a directed cycle and contradicts the acyclicity assumption.
%\end{itemize}

%The edge $e$ cannot belong to $E_{X\cup Y}$, otherwise it belongs to $E_X$ or $E_Y$ which implies that it belongs to $R(E_X)$ or $R(E_Y)$.

%We consider an edge $e$ which is oriented in the first iteration of applying Meek rules, and show that such an edge cannot exist unless it belongs to $R(E_X)$ or $R(E_Y)$.\\

%%%such that $e\not\in R(E_X)$ and $e\not\in R(E_Y)$, but $e\in R(E_{X\cup Y})$ 

In what follows, we show that the orientation of $e$ cannot be learned due to any of the Meek rules unless directed edge $e$ belongs to $R(\I_1,G^*)$ or $R(\I_2,G^*)$.\\

\noindent
\textbf{Rule 1.}

Without loss of generality, assume $e_1\in A(\tG^*)\cup R(\I_1,G^*)$. Therefore, we should have the condition of rule 1 satisfied when only intervening on $\mathcal{I}_1$ as well, which implies that $e\in R(\I_1,G^*)$, which is a contradiction.\\

%The orientation of $e$ cannot be learned due to rule 1. Without 
%The orientation of $e$ cannot be learned due to the $v$-structure case above. 
%Because since the orientation of $k$ is not recovered in the first iteration,
%Since we know the direction of $k$ before applying the first iteration of Meek, we should have $k\in E_X$ or $k\in E_Y$, and hence $k\in R(E_X)$ or $k\in R(E_Y)$. Without loss of generality assume $k\in R(E_X)$. Then we should have $e\in R(E_X)$ as well (otherwise we will have a $v$-structure), which implies that $e$ cannot be oriented in the first orientation and $e\not\in R(E_X)$ and $e\not\in R(E_Y)$.\\
%but $e\in R(E_{X\cup Y})$. 
%Therefore, such an edge should be oriented due to the cycle case above.\\

\noindent
\textbf{Rule 2.}

If both $e_1$ and $e_2$ belong to $A(\tG^*)\cup R(\I_1,G^*)$ (or $A(\tG^*)\cup R(\I_2,G^*)$), then we should have the condition of rule 2 satisfied when only intervening on $\mathcal{I}_1$ (or $\mathcal{I}_2$) as well, which implies that $e\in R(\I_1,G^*)$ (or $e\in R(\I_1,G^*)$), which is a contradiction. Therefore, it suffices to show that the case that $e_1$ belongs to exactly one of $A(\tG^*)\cup R(\I_1,G^*)$ or $A(\tG^*)\cup R(\I_2,G^*)$ and $e_2$ belongs only to the other one, does not happen. To this end, it suffices to show that there does not exist experiment target set $\mathcal{I}$ such that $e_1\in A(\tG^*)\cup R(\I,G^*)$, and $e,e_2\not\in A(\tG^*)\cup R(\I,G^*)$, i.e., there does not exist experiment target set $\mathcal{I}$ that has structure $S_0$, depicted in Figure \ref{fig:s0}, as a subgraph of $\tG^*$ after applying the orientations learned from $R(\I,G^*)$.

\begin{figure}[h]
\begin{center}
\centerline{\includegraphics[scale=0.23]{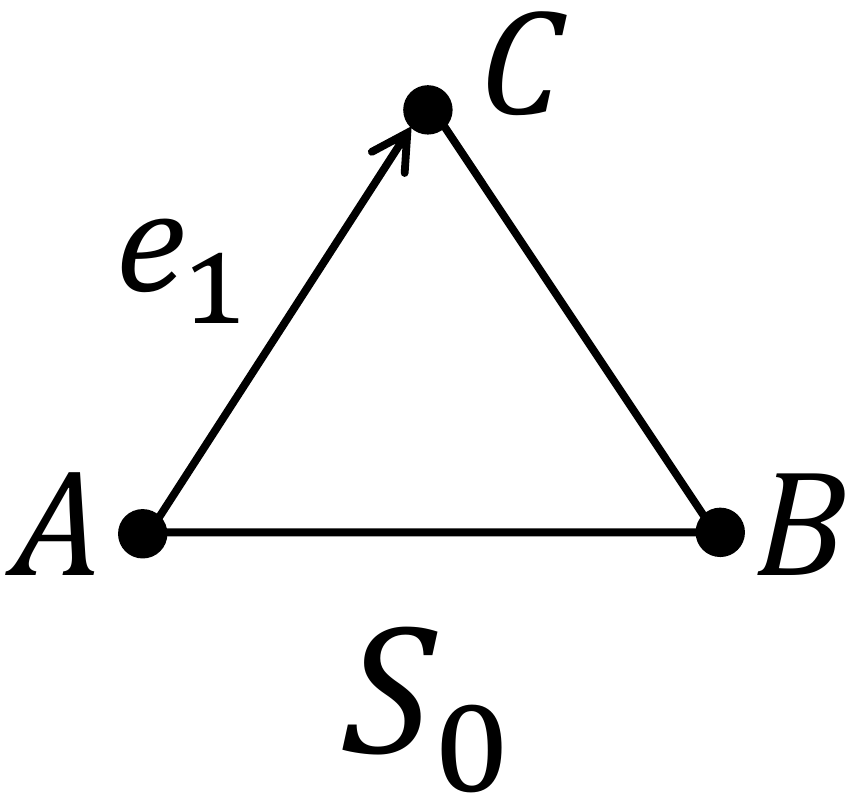}}
\caption{Structure $S_0$}
\label{fig:s0}
\end{center}
\end{figure}

If $e_1\in A(\I,G^*)$, then $A\in \I$ or $C\in \I$, which implies $e\in A(\I,G^*)$ or $e_2\in A(\I,G^*)$, respectively, and hence, $e\in R(\I,G^*)$ or $e_2\in R(\I,G^*)$, respectively. Therefore, in either case, $e\in R(\I,G^*)$, and $S_0$ will not be a subgraph.
% which is a contradiction. 
Therefore, $e_1\not\in A(\I,G^*)$, and hence, $e_1$ was learned by applying one of the Meek rules. We consider each or the rules in the following:
\begin{itemize}
\item If we have learned the orientation of $e_1$ from rule 1, then we should have had one of the structures in Figure \ref{fig:rule1} as a subgraph of $\tG^*$ after applying the orientations learned from $R(\I,G^*)$. In case of structure $S_1$, using rule 1 on subgraph induced on vertices $\{X_1,A,B\}$, we will also learn $A\rightarrow B$. In case of structure $S_2$, using rule 4, we will also learn $B\rightarrow C$. Therefore, we cannot learn only the direction of $e_1$ and hence, $S_0$ will not be a subgraph.
\begin{figure}[h]
\begin{center}
\centerline{\includegraphics[scale=0.23]{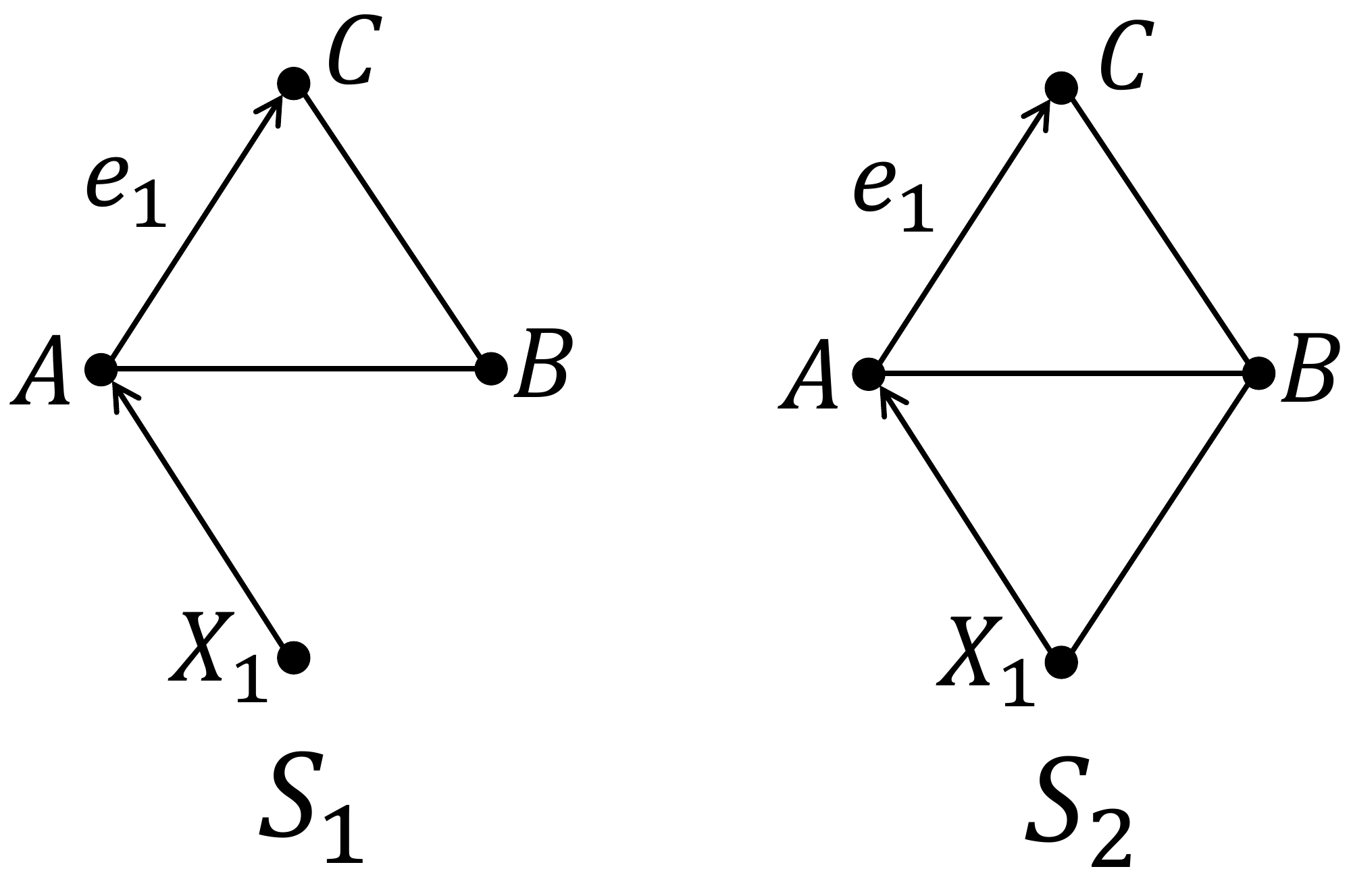}}
\caption{Rule 1}
\label{fig:rule1}
\end{center}
\end{figure}

\item If we have learned the orientation of $e_1$ from rule 3, then we have had one of the structures in Figure \ref{fig:rule3} as a subgraph of $\tG^*$ after applying the orientations learned from $R(\I,G^*)$. In case of structures $S_3$ and $S_4$, using rule 1 on subgraph induced on vertices $\{X_2,C,B\}$, we will also learn $C\rightarrow B$. In case of structure $S_5$, using rule 3 on subgraph induced on vertices $\{B,X_2,C,X_1\}$, we will also learn $B\rightarrow C$. Therefore, we cannot learn only the direction of $e_1$ and hence, $S_0$ will not be a subgraph.
\begin{figure}[h]
\begin{center}
\centerline{\includegraphics[scale=0.23]{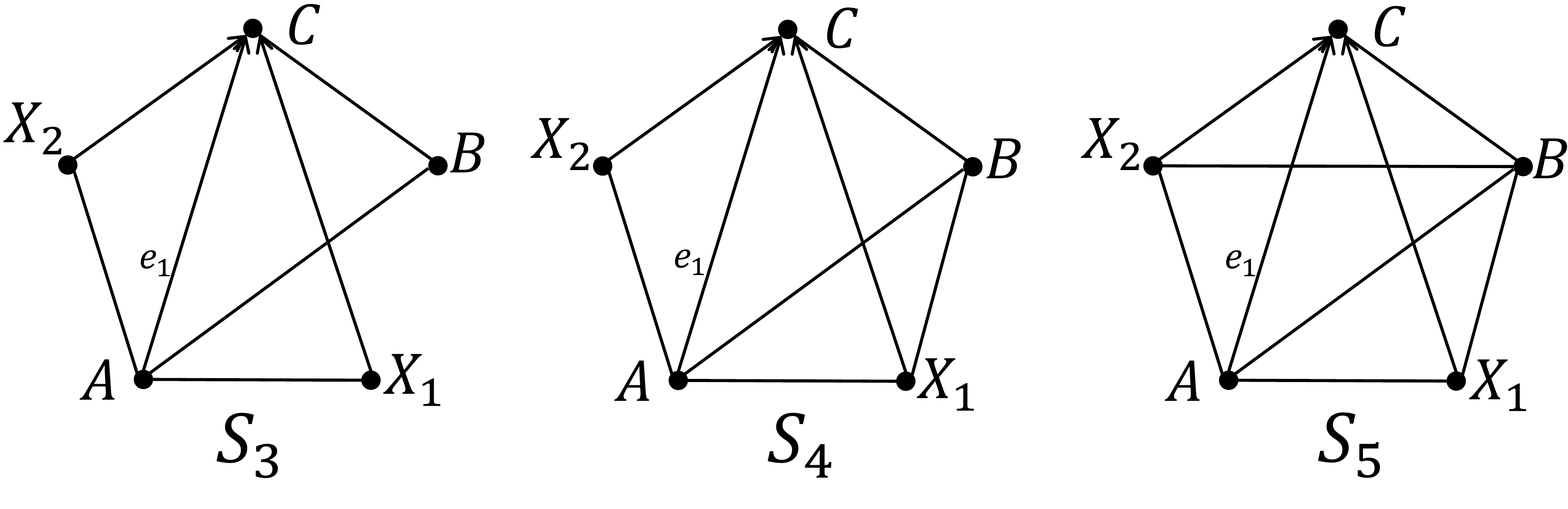}}
\caption{Rule 3}
\label{fig:rule3}
\end{center}
\end{figure}

\item If we have learned the orientation of $e_1$ from rule 4, then we have had one of the structures in Figure \ref{fig:rule4} as a subgraph of $\tG^*$ after applying the orientations learned from $R(\I,G^*)$. In case of structures $S_6$, using rule 1 on subgraph induced on vertices $\{X_1,C,B\}$, we will also learn $C\rightarrow B$. In case of structure $S_7$, using rule 1 on subgraph induced on vertices $\{X_2,X_1,B\}$, we will also learn $X_1\rightarrow B$, and then using rule 4 on subgraph induced on vertices $\{B,A,X_2,X_1\}$, we will also learn $A\rightarrow B$. In case of structures $S_{8}$, using rule 4 on subgraph induced on vertices $\{B,X_2,X_1,C\}$, we will also learn $B\rightarrow C$. Therefore, we cannot learn only the direction of $e_1$ and hence, $S_0$ will not be a subgraph.
\begin{figure}[h]
\begin{center}
\centerline{\includegraphics[scale=0.23]{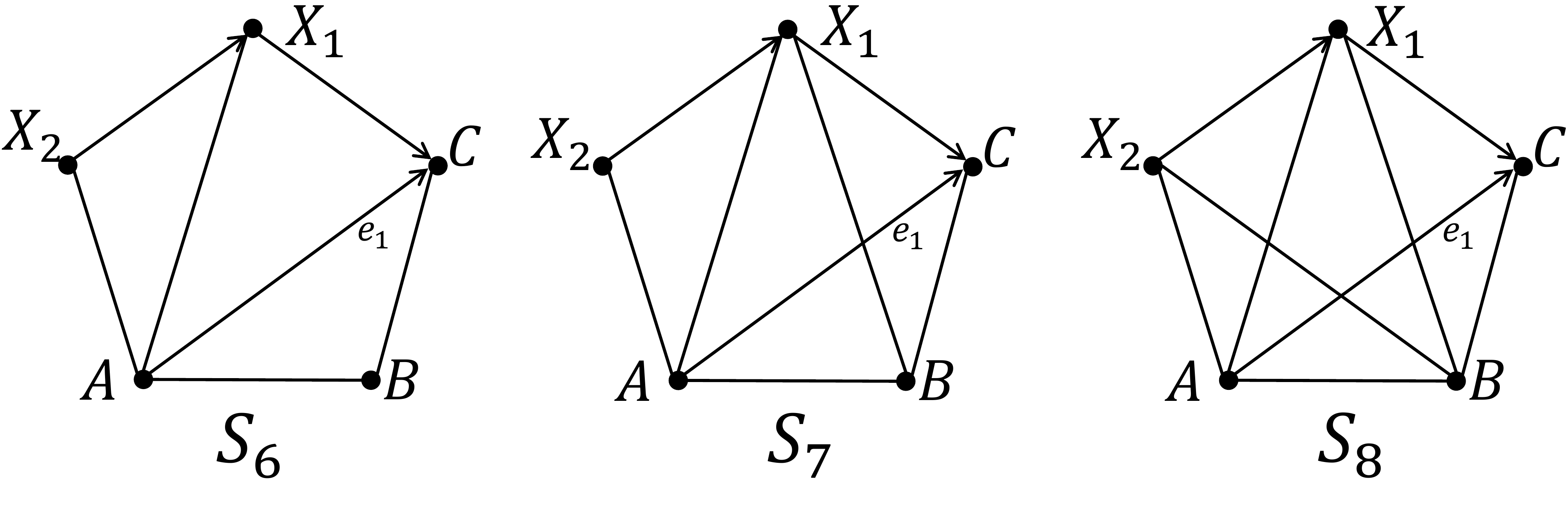}}
\caption{Rule 4}
\label{fig:rule4}
\end{center}
\end{figure}

\item If we have learned the orientation of $e_1$ from rule 2, then we should have had one of the structures in Figure \ref{fig:rule2} as a subgraph of $\tG^*$ after applying the orientations learned from $R(\I,G^*)$. In case of structure $S_9$, using rule 1 on subgraph induced on vertices $\{X_1,C,B\}$, we will also learn $C\rightarrow B$ and hence, $S_0$ will not be a subgraph.
In case of structure $S_{10}$, if $X_1\in \I$, then the direction of the edge $X_1-B$ will be also known. If the direction of this edge is $X_1\rightarrow B$, then using rule 2 on subgraph induced on vertices $\{A,X_1,B\}$, we will also learn $A\rightarrow B$; otherwise, using rule 2 on subgraph induced on vertices $\{B,X_1,C\}$, we will also learn $C\rightarrow B$. Therefore, $X_1\not\in \I$. Also, as mentioned earlier, $A\not\in \I$. Therefore, we have learned the orientation of $A\rightarrow X_1$ from applying Meek rules.

In the triangle induced on vertices $\{X_1,B,A\}$, we have learned only the orientation of one edge, which is $A\rightarrow X_1$. But as seen in structures $S_1$ to $S_9$, all of them lead to learning the orientation of at least 2 edges of a triangle. In the following, we will show that a structure of form $S_{10}$, does not lead to learning the orientation of only $A\rightarrow X_1$ and making $S_{10}$ a subgraph either.

%basically we want to say we can never have S_10 as a subgraph because it is not possible that we have learned only those specific directions but not the 3 others if any method has led to learning say (a, v_1) it should lead to learning other edges as well 
% Note that this is the same approach we have about (a,c), i.e., if we have learned this edge we should have learned e_2 or e as well.

%that is we say that clearly S1 to S9 cannot be subgraph, because we learn more than what we have in them
%S10 cannot be either because although we don't learn more than what we have in them but if S10 is a subgraph S11 should be too if S11 is S12 should be too, and so on and so forth but the order is finite, so contradiction!!!! :)

\begin{figure}[h]
\begin{center}
\centerline{\includegraphics[scale=0.23]{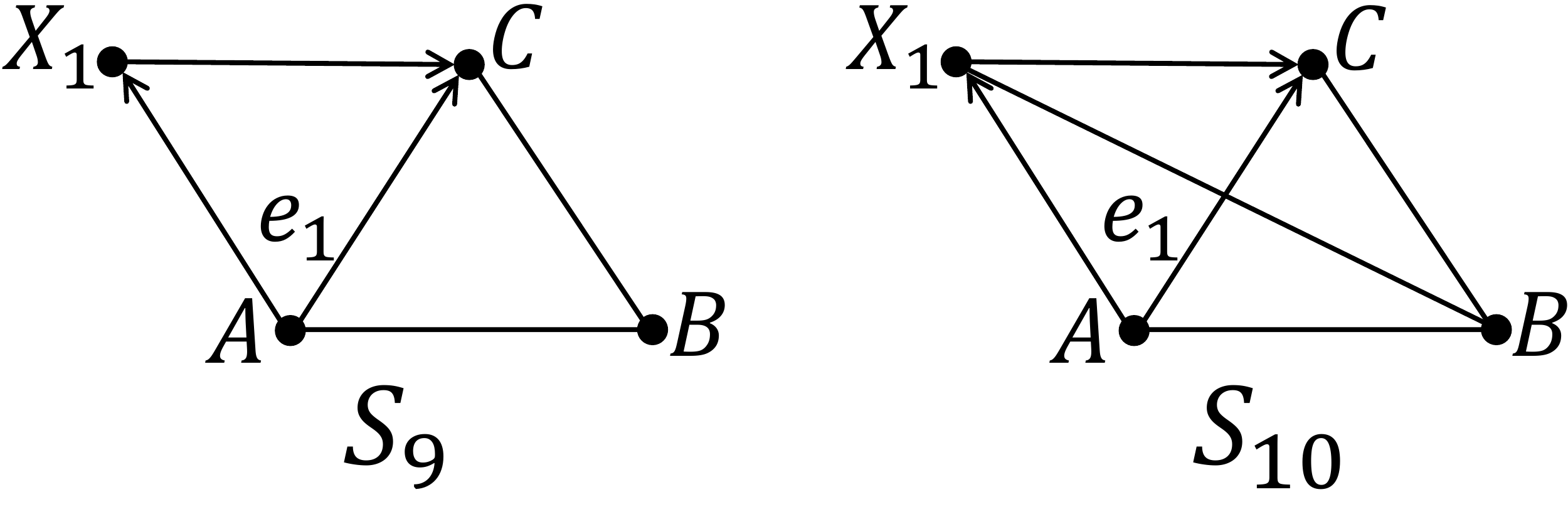}}
\caption{Rule 2}
\label{fig:rule2}
\end{center}
\end{figure}

Suppose we had learned $A\rightarrow X_1$ via a structure of form $S_{10}$, as depicted in Figure \ref{fig:s10}(a).
Using rule 4 on subgraph induced on vertices $\{X_2,X_1,C,B\}$, we will also learn $B\rightarrow C$. Therefore, we should have the edge $X_2-C$ too. Also, using rule 2 on triangle induced on vertices $\{X_2,X_1,C\}$, the orientation of this edges should be $X_2\rightarrow C$.
Therefore, in order to have $S_{10}$ as a subgraph, we need to have the structure depicted in Figure \ref{fig:s10}(b) as a subgraph.
As seen in Figure \ref{fig:s10}(b), we again have a structure similar to $S_{10}$: a complete skeleton $K_5$, which contains $X_j\rightarrow C$, $A\rightarrow X_j$, $X_j-B$, for $j\in\{1,2\}$ and $X_2\rightarrow X_1$, with a triangle on vertices $\{X_2,B,A\}$, in which we have learned only the orientation of $A\rightarrow X_2$.

\begin{figure}[h]
\begin{center}
\centerline{\includegraphics[scale=0.23]{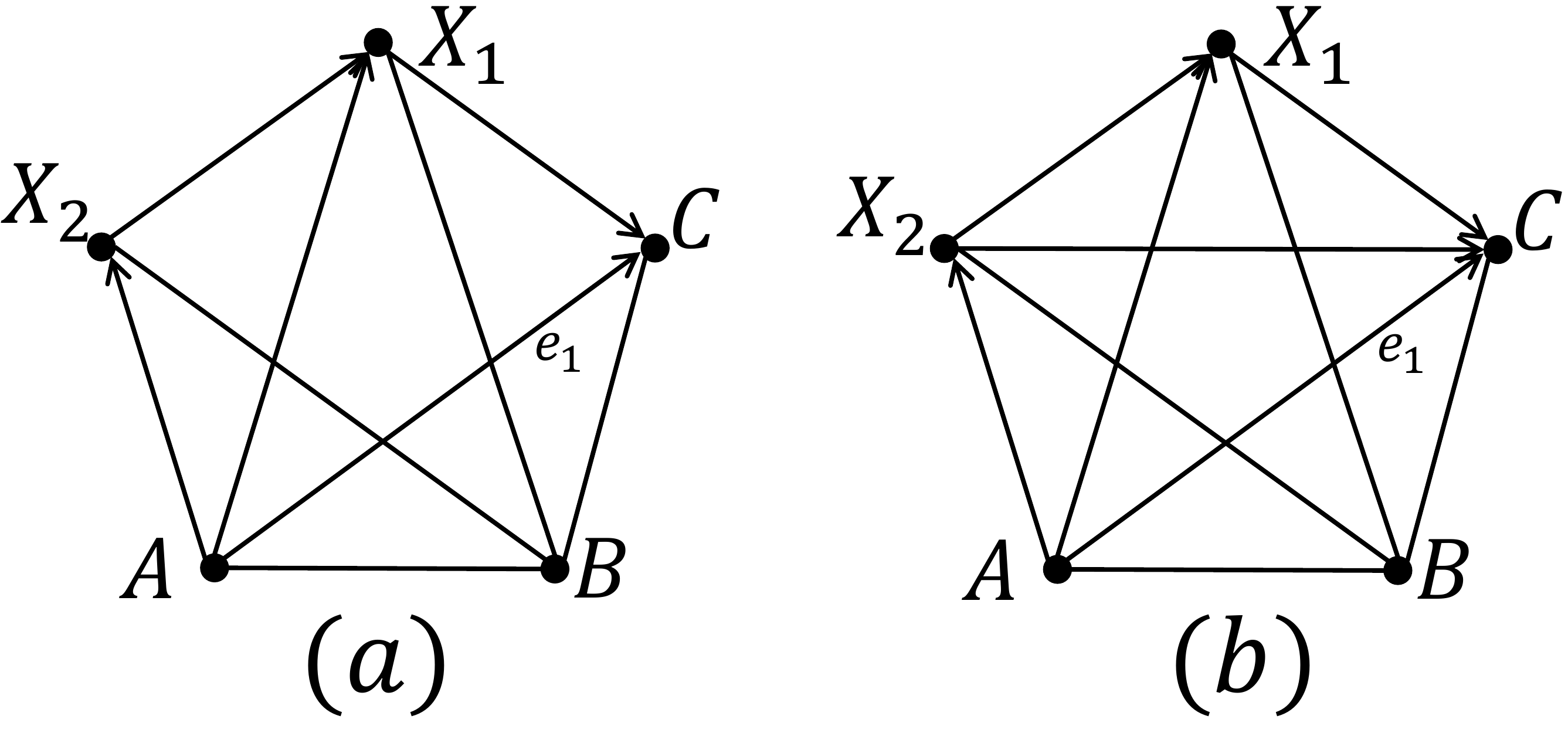}}
\caption{Step of the induction.}
\label{fig:s10}
\end{center}
\end{figure}

We claim that this procedure always repeats, i.e., at step $i$, we end up with skeleton $K_i$, which contains $X_j\rightarrow C$, $A\rightarrow X_j$, $X_j-B$, for $j\in\{1,...,i\}$ and $X_k\rightarrow X_j$, for $1\le j<k\le i$, with a triangle induced on vertices $\{X_i,B,A\}$, in which we have learned only the orientation of $A\rightarrow X_i$. We prove this claim by induction. We have already proved the base of the induction above. For the step of the induction, suppose the hypothesis is true for $i-1$. Add vertex $X_i$ to form a structure of form $S_{10}$ for $A\rightarrow X_{i-1}$. $X_i$ should be adjacent to $X_j$, for $j\in\{1,...,i-2\}$; otherwise, using rule 4 on subgraph induced on vertices $\{X_i,X_{i-1},X_j,B\}$, we will also learn $B\rightarrow X_j$. Moreover, using rule 2 on triangle induced on vertices $\{X_i,X_{i-1},X_j\}$, the direction of $X_i-X_j$ should be $X_i\rightarrow X_j$. Also, using rule 4 on subgraph induced on vertices $\{X_i,X_{i-1},C,B\}$, we will also learn $B\rightarrow C$. Therefore, we should have the edge $X_i-C$ too.

We showed that $S_0$ is a subgraph only if $S_{10}$ is a subgraph, and $S_{10}$ is a subgraph only if the structure in Figure \ref{fig:s10}(b) is a subgraph, and this chain of required subgraphs continues.
Therefore, since the order of the graph is finite, there exist a step where since we cannot add a new vertex, it is not possible to have one of the required subgraphs, and hence we conclude that $S_0$ is not a subgraph.\\
\end{itemize}

\noindent
\textbf{Rule 3.}

Since edges $e_1$ and $e_2$ form a v-structure, they should appear in $A(\tG^*)$ as well.  Therefore, we should have the condition of rule 3 satisfied when only intervening on $\mathcal{I}_1$ as well, which implies that $e\in R(\I_1,G^*)$, which is a contradiction.\\

\noindent
\textbf{Rule 4.}

%Without loss of generality, assume $e_1\in R(E_X)$. Therefore, by rule 1, $e_2\in R(E_X)$. Therefore, we have the condition of rule 4 satisfied when only intervening on $X$ as well, which implies that $e\in R(E_X)$, which is a contradiction.\\

Without loss of generality, assume $e_1\in R(\I_1,G^*)\cup A(\tG^*)$. Therefore, we should have the condition of rule 4 satisfied when only intervening on $\mathcal{I}_1$ as well, which implies that $e\in R(\I_1,G^*)$, which is a contradiction.\\

The argument above proves that there is no edge $e$ such that $e\not\in R(\I_1,G^*)$ and $e\not\in R(\I_2,G^*)$, but $e\in R(M,G^*)$.

\section{Proof of Theorem \ref{thm:submodular}}
\label{app:thm:submodular}

Due to Proposition \ref{prop:mono}, it suffices to show that for $\mathcal{I}_1\subseteq \mathcal{I}_2\subseteq V$, and $X_i\in V$, we have
$
\mathcal{D}(\I_1\cup \{X_i\})-\mathcal{D}(\I_1)\ge \mathcal{D}(\I_2\cup \{X_i\})-{\mathcal{D}}(\I_2)$.
First we show that for a given directed graph $G_i\in\MEC(G^*)$ the function $D(\I,G_i)$ is a submodular function of $\mathcal{I}$. 
From Lemma \ref{lem:nofusion}, we have $R(\I_1\cup \{X_i\},G_i)=R(\I_1,G_i)\cup R(\{X_i\},G_i)$. Therefore,
\begin{align*}
D(\I_1\cup \{X_i\},G_i)-D(\I_1,G_i)
&=|R(\I_1\cup \{X_i\},G_i)|-|R(\I_1,G_i)|\\
&=|R(\I_1,G_i)\cup R(\{X_i\},G_i)|-|R(\I_1,G_i)|\\
&=|R(\{X_i\},G_i)|-|R(\I_1,G_i)\cap R(\{X_i\},G_i)|.
\end{align*}
Similarly, 
\begin{align*}
D(\I_2\cup \{X_i\},G_i)-D(\I_2,G_i)
&=|R(\{X_i\},G_i)|-|R(\I_2,G_i)\cap R(\{X_i\},G_i)|.
\end{align*}
Since $\mathcal{I}_1\subseteq \mathcal{I}_2$, as seen in the proof of Proposition \ref{prop:mono}, $R(\I_1,G_i)\subseteq R(\I_2,G_i)$. Therefore, 
$-|R(\I_1,G_i)\cap R(\{X_i\},G_i)|
\ge-|R(\I_2,G_i)\cap R(\{X_i\},G_i)|$,
which implies that
\[
D(\I_1\cup \{X_i\},G_i)-D(\I_1,G_i)\ge D(\I_2\cup \{X_i\},G_i)-D(\I_2,G_i).
\] 
This together with the fact that the function $D(\I,G_i)$ is a monotonically increasing function of $\I$ (observed in the proof of Proposition \ref{prop:mono}) shows that $D(\I,G_i)$ is a submodular function of $\I$.

Finally, we have $\mathcal{D}(\I)=\frac{1}{|\MEC(G^*)|}\sum_{G_i\in\MEC(G^*)}D(\mathcal{I},G_i)$.
Since a non-negative linear combination of submodular functions is also submodular, the proof is concluded.

\section{Proof of Proposition \ref{prop:countcomp}}
\label{app:prop:countcomp}

The worst case in terms of computational complexity happens when $H=\tG$, as it requires maximum number of recursions. In function $\textsc{Counter}$, we set each vertex $X_i$ as the root and call the function $\textsc{Counter}$ for the rooted essential graph $\tG_r^{X_i}$ to compute the number of DAGs in the MEC corresponding to $\tG_r^{X_i}$. Using Meek rules, the directed edges in $\tG^{X_i}_r$ can be recovered in time $\mathcal{O}(p^3)$.

Now, we show that the degree of each vertex $X_j$ in
 $\tG^{X_i}_r$ decreases at least by one after removing directed edges. To do so, we prove that there exists a directed edge in $\tG^{X_i}_r$ that goes to vertex $X_j$.  
If $X_j$ is a neighbor of $X_i$ the proof is done, as edges are always directed from the root vertex towards its neighbors. Otherwise, consider the shortest path from $X_i$ to $X_j$ in $\tG^{X_i}_r$. This path must pass through one of the neighbors of $X_j$, say, $X_k$. Since the distance from $X_i$ to $X_k$ is less than $X_i$ to $X_j$, $X_k-X_j$ should be oriented as $X_k\rightarrow X_j$ \citep{bernstein2017sampling}. Therefore, the degree of each vertex $X_j$ in $\tG^{X_i}_r$ decreases at least by one after removing directed edges in $\tG^{X_i}_r$. 

Let $t(\Delta)$ be the computational complexity of Algorithm \ref{algorithm:counting} on a graph with maximum degree $\Delta$. Based on what we proved above, we have
\begin{equation*}
t(\Delta)\leq p t(\Delta-1)+Cp^3,
\end{equation*}  
where $C$ is a constant. The above inequality holds true since we have at most $p$ chain component in $\tG^{X_i}_r$, where the maximum degree in each of them is at most $\Delta-1$. From this inequality, it can be shown that $t(\Delta)$ is in the order of $\mathcal{O}(p^{\Delta+1})$. Since we may have at most $p$ chain components in essential graph $\tG$, the computational complexity of Algorithm \ref{algorithm:counting} is in the order of $\mathcal{O}(p^{\Delta+2})$.

\section{Proof of Theorem \ref{thm:unif}}
\label{app:thm:unif}

The objective is to show that for the input essential graph $\tG$, any DAG $G$ in $\MEC(\tG)$ is generated with probability $1/Size(\tG)$.

\emph{Proof by induction:} The function $\textsc{Counter}$ finds the size of a chain component recursively, i.e., after setting a vertex $X$ as the root and finding the orientations in $\tG^X_r$, it calls itself to obtain the size of the chain components of $\tG^X_r$. We induct on the maximum number of recursive calls required for complete orienting.\\
{\bf Induction base:}
For the base of the induction, we consider an essential graph with no required recursive call: Consider essential graph $\tG$ with chain component set $\mathcal{G}$, for which, for all $\tG_r\in\mathcal{G}$, for all $X\in V(\tG_r)$, $Size(\tG^X_r)=1$ (as an example, consider the case that $\tG_r$ is a tree). Consider $G$ in the MEC represented by $\tG$, and assume vertex $X_{\tG_r}$ is required to be set as the root in chain component $\tG_r\in\mathcal{G}$ for $G$ to be obtained. We have
\begin{align*}
P(G) &= \prod_{\tG_r\in\mathcal{G}}P(X_{\tG_r} \text{ picked})
=\prod_{\tG_r\in\mathcal{G}}\frac{\Size(\tG_r^{X_{\tG_r}})}{\Size(\tG_r)}\\
&=\prod_{\tG_r\in\mathcal{G}}\frac{1}{\Size(\tG_r)}
=\frac{1}{\prod_{\tG_r\in\mathcal{G}}\Size(\tG_r)}\\
&=\frac{1}{\Size(\tG)},
\end{align*}
where, the last equality follows from equation \eqref{eq:prod}.\\
{\bf Induction hypothesis:} 
For an essential graph $\tG$ with maximum required recursions of $l-1$, any DAG $G$ in the MEC represented by $\tG$ is generated with probability $1/Size(\tG)$.\\
{\bf Induction step:} We need to show that for an essential graph $\tG$ with maximum required recursions of $l$, any DAG $G$ in the MEC represented by $\tG$ is generated with probability $1/Size(\tG)$.
Assume vertex $X_{\tG_r}$ is required to be set as the root in chain component $\tG_r\in\mathcal{G}$, and $V_{\tG_r^{X_{\tG_r}}}$ is the set of vertices required to be set as root in the next recursions in obtained chain components in $\tG_r^{X_{\tG_r}}$ for $G$ to be obtained. We have
\begin{align*}
P(G) &= \prod_{\tG_r\in\mathcal{G}}P(X_{\tG_r} \text{ picked})P(V_{\tG_r^{X_{\tG_r}}} \text{ picked})\\
&= \prod_{\tG_r\in\mathcal{G}}\frac{Size(\tG_r^{X_{\tG_r}})}{\Size(\tG_r)}P(V_{\tG_r^{X_{\tG_r}}} \text{ picked}).
%&=\prod_{G\in\mathcal{G}}\frac{Size(G^{(v)})}{Size(G)}\\
%&=\prod_{G\in\mathcal{G}}\frac{1}{Size(G)}\\
%&=\frac{1}{\prod_{G\in\mathcal{G}}Size(G)}\\
%&=\frac{1}{Size(G^*)},
\end{align*}
By the induction hypothesis, 
\[
P(V_{\tG_r^{X_{\tG_r}}} \text{ picked})=1/\Size(\tG_r^{X_{\tG_r}}).
\]
Therefore,
\begin{align*}
P(G) &= \prod_{\tG_r\in\mathcal{G}}\frac{\Size(\tG_r^{X_{\tG_r}})}{\Size(\tG_r)}\frac{1}{\Size(\tG_r^{X_{\tG_r}})}\\
&=\frac{1}{\prod_{\tG_r\in\mathcal{G}}\Size(\tG_r)}\\
&=\frac{1}{\Size(\tG)},
\end{align*}
where, the last equality follows from equation \eqref{eq:prod}.

\section{Proof of Corollary \ref{cor:sampcomp}}
\label{app:cor:sampcomp}

For any chain component $\tG$, for calculating $\textsc{Counter}(\tG,\tG)$ we are required to calculate the size of all possible subsequent rooted classes. Therefore, we do not need to calculate the size of any rooted subclasses anymore. Hence, by Proposition \ref{prop:countcomp}, we obtain all probabilities of the from $\frac{\textsc{Counter}(\tG^X,\tG^X)}{\textsc{Counter}(\tG,\tG)}$ in $\mathcal{O}(p^{\Delta+2})$.
%For any chain component $\tG$ in $\mathcal{G}$, we obtain probabilities of the from $\frac{\textsc{Counter}(\tG^X,\tG^X)}{\textsc{Counter}(\tG,\tG)}$ in $\mathcal{O}(p^{\Delta+2})$ (Proposition \ref{prop:countcomp}).
After selecting one of the vertices in $\tG$ as the root, say $X$, we recover all directed edges in $\tG^{X}$ in $\mathcal{O}(p^3)$ and obtain chain components of $\tG^X$. Similar to the proof of Proposition \ref{prop:countcomp}, let $t(\Delta)$ be the running time of the algorithm on a chain component in $\mathcal{G}$ with maximum degree of $\Delta$. We have
\begin{equation*}
t(\Delta)\leq pt(\Delta-1)+Cp^3,
\end{equation*}
where $C$ is a constant. It can be shown that $t(\Delta)$ is in the order of $\mathcal{O}(\Delta p^{\Delta+1})$. Since we may have at most $p$ chain components in $\mathcal{G}$, the computational complexity of uniform sampler would be in the order of $\mathcal{O}(p^{\Delta+2})$.
Therefore, the computational complexity of the approach is $\mathcal{O}(p^{\Delta+2}+p^{\Delta+2})=\mathcal{O}(p^{\Delta+2})$.

\section{Proof of Theorem \ref{thm:uconv} }

\begin{proposition}[\textbf{Chernoff Bound}]
\label{prop:Chbound}
Let $X_1,...,X_N$ be independent random variables such that for all $i$, $0 \le X_i \le 1$. Let $\mu=\mathbb{E}[\sum_{i=1}^NX_i]$. Then
\[
P(|\sum_{i=1}^NX_i - \mu |\ge\epsilon\mu) \le 2 \exp (-\frac{\epsilon^2}{2+\epsilon}\mu).
\]
\end{proposition}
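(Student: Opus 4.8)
The plan is to prove this multiplicative Chernoff bound by the standard exponential-moment method, establishing the two one-sided deviations separately and then combining them via a union bound to recover the factor of $2$. Write $S=\sum_{i=1}^N X_i$ and $p_i=\mathbb{E}[X_i]$, so that $\mu=\sum_i p_i$. The key preliminary observation is that since each $X_i\in[0,1]$ and $x\mapsto e^{tx}$ is convex, one has the pointwise bound $e^{tX_i}\le 1+(e^{t}-1)X_i$; taking expectations and using $1+x\le e^{x}$ yields $\mathbb{E}[e^{tX_i}]\le \exp\!\big((e^{t}-1)p_i\big)$. By independence of the $X_i$, this factorizes to give $\mathbb{E}[e^{tS}]\le \exp\!\big((e^{t}-1)\mu\big)$ for every real $t$.

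For the upper tail I would apply Markov's inequality to $e^{tS}$ with $t>0$: for any $\epsilon>0$, $P(S\ge(1+\epsilon)\mu)\le e^{-t(1+\epsilon)\mu}\,\mathbb{E}[e^{tS}]\le \exp\!\big((e^{t}-1-t(1+\epsilon))\mu\big)$. Minimizing the exponent over $t$ selects $t=\ln(1+\epsilon)$ and produces the classical form $P(S\ge(1+\epsilon)\mu)\le\big(e^{\epsilon}/(1+\epsilon)^{1+\epsilon}\big)^{\mu}$. The lower tail is handled symmetrically by applying Markov to $e^{-tS}$ with $t>0$ and optimizing at $t=-\ln(1-\epsilon)$, which yields $P(S\le(1-\epsilon)\mu)\le\big(e^{-\epsilon}/(1-\epsilon)^{1-\epsilon}\big)^{\mu}$ for $0<\epsilon<1$.

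The remaining and most delicate step is to simplify these two multiplicative factors into the single clean exponential $\exp(-\tfrac{\epsilon^{2}}{2+\epsilon}\mu)$. For the upper tail this reduces to the scalar inequality $(1+\epsilon)\ln(1+\epsilon)-\epsilon\ge \tfrac{\epsilon^{2}}{2+\epsilon}$, and for the lower tail to $e^{-\epsilon}/(1-\epsilon)^{1-\epsilon}\le e^{-\epsilon^{2}/2}$, equivalently $-\epsilon-(1-\epsilon)\ln(1-\epsilon)\le-\tfrac{\epsilon^{2}}{2}$. I would verify the first by setting $g(\epsilon)=(1+\epsilon)\ln(1+\epsilon)-\epsilon-\tfrac{\epsilon^{2}}{2+\epsilon}$, checking $g(0)=0$, and showing $g'(\epsilon)=\ln(1+\epsilon)-\tfrac{\epsilon(4+\epsilon)}{(2+\epsilon)^{2}}\ge 0$ for $\epsilon\ge 0$ (which itself follows from $g'(0)=0$ together with monotonicity of $g'$); the second is the analogous elementary calculus check. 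These routine but somewhat fiddly estimates are the main obstacle, since everything else in the argument is essentially mechanical.

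Finally, I would combine the two one-sided bounds. Because $\tfrac{\epsilon^{2}}{2}\ge\tfrac{\epsilon^{2}}{2+\epsilon}$, the lower-tail bound $e^{-\epsilon^{2}\mu/2}$ is dominated by $e^{-\epsilon^{2}\mu/(2+\epsilon)}$, so both tails are at most $\exp(-\tfrac{\epsilon^{2}}{2+\epsilon}\mu)$. Writing $\{|S-\mu|\ge\epsilon\mu\}=\{S\ge(1+\epsilon)\mu\}\cup\{S\le(1-\epsilon)\mu\}$ and applying the union bound then gives $P(|S-\mu|\ge\epsilon\mu)\le 2\exp(-\tfrac{\epsilon^{2}}{2+\epsilon}\mu)$, which is exactly the claimed inequality.
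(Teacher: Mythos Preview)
Your proof is the standard exponential-moment derivation of the multiplicative Chernoff bound and is correct in outline; the only place to be a little careful is the scalar inequality $(1+\epsilon)\ln(1+\epsilon)-\epsilon\ge \epsilon^{2}/(2+\epsilon)$, which is cleanest to verify by rewriting it as $\ln(1+\epsilon)\ge 2\epsilon/(2+\epsilon)$ and differentiating once (the derivative of the difference is $\epsilon^{2}/[(1+\epsilon)(2+\epsilon)^{2}]\ge 0$), rather than trying to argue via monotonicity of $g'$.

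As for comparison with the paper: there is nothing to compare. The paper does not prove this proposition at all; it simply states the Chernoff bound as a known result and then invokes it in the proof of Theorem~\ref{thm:uconv}. So your write-up supplies a proof where the paper supplies none, and the standard argument you give is exactly what one would expect.
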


\begin{proof}[Proof of Proposition \ref{prop:inMEC}]
For $i\in\{1,...,N\}$, define $X_i=\frac{D(\I,G_i)}{|\bar{A}(\tG)|}$.
We note that for the estimator in Algorithm \ref{algorithm:unifsamp}, we have $\mathbb{E}[D(\I,G_i)]=\mathcal{D}(\I)$, where $G_i$ is a random generated DAG in the sampler in Algorithm \ref{algorithm:unifsamp}. This can be proven as follows:
\begin{align*}
	\mathbb{E}[D(\I,G_i)]&=\sum_{G'_i\in\MEC(G^*)}P(G_i=G'_i)D(\I,G'_i)\\
	&=\sum_{G'_i\in\MEC(G^*)}\frac{1}{|\MEC(G^*)|}D(\I,G'_i)\\
	&=\mathcal{D}(\I).
\end{align*}
Therefore, $\mathbb{E}[X_i]=\frac{1}{|\bar{A}(\tG)|}\mathcal{D}(\mathcal{I})$. 

Using Chernoff bound we have
\begin{align*}
P(|\sum_{i=1}^NX_i-\frac{N}{|\bar{A}(\tG)|}\mathcal{D}(\I)|\ge\epsilon\frac{N}{|\bar{A}(\tG)|}\mathcal{D}(\I))
&\le 2 \exp (-\frac{N\epsilon^2}{|\bar{A}(\tG)|(2+\epsilon)}\mathcal{D}(\I))\\
&\le 2 \exp (-\frac{N\epsilon^2}{|\bar{A}(\tG)|(2+\epsilon)}).
\end{align*}
Therefore,
%\begin{align*}
%P(|\frac{1}{N}\sum_{i=1}^ND(\mathcal{I},G'_i)-&\mathcal{D}(S)|\ge\epsilon\mathcal{D}(\mathcal{I}))\\
%&\le 2 \exp (-\frac{N\epsilon^2}{|E(Ess(G^*))|(2+\epsilon)}).
%\end{align*}
\[
P(|\frac{1}{N}\sum_{i=1}^ND(\I,G_i)-\mathcal{D}(\I)|\ge\epsilon\mathcal{D}(\I))\le 2 \exp (-\frac{N\epsilon^2}{|\bar{A}(\tG)|(2+\epsilon)}).
\]
Hence,
%\begin{align*}
%P(|\hat{\mathcal{D}}(\mathcal{I})-\mathcal{D}(\mathcal{I})|<&\epsilon\mathcal{D}(\mathcal{I}))\\
%&> 1-2 \exp (-\frac{N\epsilon^2}{|E(Ess(G^*))|(2+\epsilon)}).
%\end{align*}
\[
P(|\hat{\mathcal{D}}(\I)-\mathcal{D}(\I)|<\epsilon\mathcal{D}(\I))
> 1-2 \exp (-\frac{N\epsilon^2}{|\bar{A}(\tG)|(2+\epsilon)}).
\]
Setting $N>\frac{|\bar{A}(\tG)|(2+\epsilon)}{\epsilon^2}\ln(\frac{2}{\delta})$, upper bounds the right hand side with $1-\delta$ and concludes the desired result.
	
\end{proof}

\section{Proof of Theorem \ref{thm:app}}

Let $\I^*=\{X_1^*,...,X_k^*\}\in\arg\max_{\I:\I\subseteq V,|\I|= k} \mathcal{D}(\I)$. We have
\begin{equation}
\begin{aligned}
\label{eq:app1}
\mathcal{D}(\mathcal{I}^*)&\overset{(a)}{\le}\mathcal{D}(\mathcal{I}^*\cup \mathcal{I}_i)=\mathcal{D}(\mathcal{I}_i)+\sum_{j=1}^k[\mathcal{D}(\mathcal{I}_i\cup\{X_1^*,...,X_j^*\})-\mathcal{D}(\mathcal{I}_i\cup\{X_1^*,...,X_{j-1}^*\})]\\
&\overset{(b)}{\le}\mathcal{D}(\mathcal{I}_i)+\sum_{j=1}^k[\mathcal{D}(\mathcal{I}_i\cup\{X_j^*\})-\mathcal{D}(\mathcal{I}_i)],
\end{aligned}
\end{equation}
where $(a)$ follows from Proposition \ref{prop:mono}, and $(b)$ follows from Theorem \ref{thm:submodular}.
Define $\hat{\mathcal{D}}_{i,X,1}$ and $\hat{\mathcal{D}}_{i,X,2}$ as the first and second calls of the estimator in $i$-th step for variable $X$, respectively. By the assumption of the theorem we have 
\begin{align*}
\mathcal{D}(\mathcal{I}_i\cup\{X_j^*\})-\epsilon\mathcal{D}(\mathcal{I}_i\cup\{X_j^*\})<\hat{\mathcal{D}}_{i,X^*_j,1}(\mathcal{I}_i\cup\{X_j^*\}),
\end{align*}
with probability larger than $1-\delta$. Therefore,
\begin{align*}
\mathcal{D}(\mathcal{I}_i\cup\{X_j^*\})<\hat{\mathcal{D}}_{i,X^*_j,1}(\mathcal{I}_i\cup\{X_j^*\})+\epsilon\mathcal{D}(\mathcal{I}^*),
\end{align*}
with probability larger than $1-\delta$.
Similarly
\begin{align*}
\hat{\mathcal{D}}_{i,X^*_j,2}(\mathcal{I}_i)<\mathcal{D}(\mathcal{I}_i)+\epsilon\mathcal{D}(\mathcal{I}_i)\hspace{1cm}&w.p.>1-\delta,\\
\Rightarrow-\mathcal{D}(\mathcal{I}_i)< -\hat{\mathcal{D}}_{i,X^*_j,2}(\mathcal{I}_i)+\epsilon\mathcal{D}(\mathcal{I}^*)\hspace{1cm}&w.p.>1-\delta,
\end{align*}
Therefore,
\begin{equation}
\label{eq:app2}
\begin{aligned}
\mathcal{D}&(\mathcal{I}_i\cup\{X_j^*\})-\mathcal{D}(\mathcal{I}_i)<\hat{\mathcal{D}}_{i,X^*_j,1}(\mathcal{I}_i\cup\{X_j^*\})\\
&-\hat{\mathcal{D}}_{i,X^*_j,2}(\mathcal{I}_i)+2\epsilon\mathcal{D}(\mathcal{I}^*)\hspace{1cm}w.p.>1-2\delta.
\end{aligned}
\end{equation}
Also, by the definition of the greedy algorithm,
\begin{equation}
\label{eq:app3}
\begin{aligned}
\hat{\mathcal{D}}_{i,X^*_j,1}&(\mathcal{I}_i\cup\{X_j^*\})-\hat{\mathcal{D}}_{i,X^*_j,2}(\mathcal{I}_i)\\
&\le\hat{\mathcal{D}}_{i,X_{i+1},1}(\mathcal{I}_i\cup\{X_{i+1}\})-\hat{\mathcal{D}}_{i,X_{i+1},2}(\mathcal{I}_i)\\
&=\hat{\mathcal{D}}_{i,X_{i+1},1}(\mathcal{I}_{i+1})-\hat{\mathcal{D}}_{i,X_{i+1},2}(\mathcal{I}_i),
\end{aligned}
\end{equation}
and similar to \eqref{eq:app2}, we have
\begin{equation}
\label{eq:app4}
\begin{aligned}
\hat{\mathcal{D}}&_{i,X_{i+1},1}(\mathcal{I}_{i+1})-\hat{\mathcal{D}}_{i,X_{i+1},2}(\mathcal{I}_i)<
\mathcal{D}(\mathcal{I}_{i+1})\\
&-\mathcal{D}(\mathcal{I}_i)+2\epsilon\mathcal{D}(\mathcal{I}^*)\hspace{1cm}w.p.>1-2\delta.
\end{aligned}
\end{equation}
Therefore, from equations \eqref{eq:app2}, \eqref{eq:app3}, and \eqref{eq:app4} we have
\begin{equation}
\label{eq:app5}
\mathcal{D}(\mathcal{I}_i\cup\{X_j^*\})-\mathcal{D}(\mathcal{I}_i)<\mathcal{D}(\mathcal{I}_{i+1})-\mathcal{D}(\mathcal{I}_i)+4\epsilon\mathcal{D}(\mathcal{I}^*),
\end{equation}
with probability larger than $1-4\delta$. Plugging \eqref{eq:app5} back in \eqref{eq:app1}, we get
\begin{align*}
\mathcal{D}(\mathcal{I}^*)&<\mathcal{D}(\mathcal{I}_i)+\sum_{j=1}^k[\mathcal{D}(\mathcal{I}_{i+1})-\mathcal{D}(\mathcal{I}_i)+4\epsilon\mathcal{D}(\mathcal{I}^*)]\\
&=\mathcal{D}(\mathcal{I}_i)+k[\mathcal{D}(\mathcal{I}_{i+1})-\mathcal{D}(\mathcal{I}_i)]+4k\epsilon\mathcal{D}(\mathcal{I}^*),
\end{align*}
with probability larger than $1-4k\delta$. Therefore,
\begin{align*}
&\mathcal{D}(\mathcal{I}^*)-\mathcal{D}(\mathcal{I}_i)\\
&< k[\mathcal{D}(\mathcal{I}^*)-\mathcal{D}(\mathcal{I}_i)]-k[\mathcal{D}(\mathcal{I}^*)-\mathcal{D}(\mathcal{I}_{i+1})]+4k\epsilon\mathcal{D}(\mathcal{I}^*),
\end{align*}
with probability larger than $1-4k\delta$. Defining $a_i\coloneqq\mathcal{D}(\mathcal{I}^*)-\mathcal{D}(\mathcal{I}_i)$, and noting that $a_0=\mathcal{D}(\mathcal{I}^*)$, by induction we have 
\begin{align*}
a_k&=\mathcal{D}(\mathcal{I}^*)-\mathcal{D}(\mathcal{I}_k)\\
&<(1-\frac{1}{k})^k\mathcal{D}(\mathcal{I}^*)+4\epsilon\mathcal{D}(\mathcal{I}^*)\sum_{j=0}^{k-1}(1-\frac{1}{k})^j\\
&<[\frac{1}{e}+4\epsilon k]\mathcal{D}(\mathcal{I}^*)\hspace{1cm}w.p.>1-4k^2\delta.
\end{align*}
It concludes that
\[
\mathcal{D}(\mathcal{I}_k)>(1-\frac{1}{e}-4\epsilon k)\mathcal{D}(\mathcal{I}^*)\hspace{1cm}w.p.>1-4k^2\delta.
\]
Therefore, for $\epsilon=\frac{\epsilon'}{4k}$ and $\delta=\frac{\delta'}{4k^2}$,  Algorithms \ref{algorithm:GG} is a $(1-\frac{1}{e}-\epsilon')$-approximation algorithm with probability larger than $1-\delta'$.

%\section{Proof of Theorem \ref{thm:comp}}
%
%
%We run the algorithm for $k$ iterations. In each iteration, we execute the function $\hat{\mathcal{D}}(.)$ using Subroutine 1 for at most $n$ vertices. Furthermore, in this subroutine, we generate $N$ random DAGs by calling the function \textsc{RandEdge}, where in \cite{ghassami2018mec} it is shown that the complexity of each call is $O(n^{\Delta})$. Hence, the computational complexity of the algorithm is $O(knN\times n^{\Delta})$. 
%

\section{Proof of Proposition \ref{prop:inMEC}}

We require the following lemma for the proof.
%\vspace{-2mm}
\begin{lemma}
\label{claim:tri}
If a directed chordal graph has a directed cycle then it has a directed cycle of size 3.
\end{lemma}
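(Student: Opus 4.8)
The plan is to prove the contrapositive-style statement by considering a \emph{shortest} directed cycle and showing that if its length exceeds $3$ we can chord it down. So suppose $G$ is a directed graph whose skeleton is chordal and that $G$ contains a directed cycle; let $\mathcal{C}\colon v_1\to v_2\to\cdots\to v_m\to v_1$ be a directed cycle of minimum length $m$, and assume toward a contradiction that $m\ge 4$. Forgetting orientations, $\mathcal{C}$ is a cycle of length $m\ge 4$ in the skeleton of $G$, so by chordality it has a chord: there is an edge of $G$ joining two vertices $v_i,v_j$ that are non-consecutive on $\mathcal{C}$. Taking $i<j$, non-consecutiveness (which rules out $j=i+1$ and the pair $(i,j)=(1,m)$) gives $2\le j-i\le m-2$. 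Since $G$ is a directed graph, the chord is oriented exactly one way.

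Next I would split on the orientation of the chord. If the chord is $v_j\to v_i$, then concatenating the sub-path $v_i\to v_{i+1}\to\cdots\to v_j$ of $\mathcal{C}$ with the edge $v_j\to v_i$ gives a directed cycle on the distinct vertices $v_i,\dots,v_j$ of length $j-i+1$, which lies in $[3,m-1]$ by the bound on $j-i$ — contradicting minimality of $\mathcal{C}$. If instead the chord is $v_i\to v_j$, then concatenating the complementary sub-path $v_j\to v_{j+1}\to\cdots\to v_m\to v_1\to\cdots\to v_i$ of $\mathcal{C}$ with the edge $v_i\to v_j$ gives a directed cycle on distinct vertices of length $m-(j-i)+1$, which again lies in $[3,m-1]$ — the same contradiction. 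Hence $m=3$, i.e. $G$ has a directed cycle of size $3$.

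The only delicate point is checking that the two closed directed walks produced in the case analysis are \emph{genuine} directed cycles: they must use distinct vertices (immediate, since $i<j$ and the listed vertices are a contiguous block of the distinct $v_1,\dots,v_m$) and they must have length at least $3$. The latter is precisely why we need the full strength of ``$v_i,v_j$ non-consecutive on $\mathcal{C}$'' — giving $2\le j-i\le m-2$ — rather than merely $v_i\neq v_j$; I would state this bound and its consequence explicitly. Beyond that the argument is routine and uses nothing about $G$ except chordality of its skeleton.
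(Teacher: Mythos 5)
Your proof is correct. It is the same basic idea as the paper's --- use a chord of the chordal skeleton to produce a shorter directed cycle --- but your execution differs in a way worth noting. The paper argues by recursion on the cycle length: it asserts that chordality yields a chord forming a triangle with three \emph{consecutive} cycle vertices $\{X_i,X_{i+1},X_{i+2}\}$, and then either that triangle is a directed $3$-cycle or the chord's orientation yields a directed cycle of length $n-1$, and one repeats. That step quietly uses the fact that in a chordal graph every cycle of length at least $4$ has a \emph{short} chord (one joining vertices at distance $2$ on the cycle), which is true but is not immediate from the definition of chordality and is not proved in the paper. Your argument sidesteps this entirely: by taking a \emph{shortest} directed cycle and an arbitrary chord, you only need the bare definition of chordality, and the case split on the chord's orientation together with the bound $2\le j-i\le m-2$ gives a directed cycle of length in $[3,m-1]$ either way, contradicting minimality. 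So your route is marginally more elementary and self-contained, while the paper's recursion shortens the cycle by exactly one vertex at a time at the cost of the implicit short-chord claim.
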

%\vspace{-2mm}
\begin{proof}
If the directed cycle is of size 3 itself, the claim is trivial. Suppose the directed cycle $C_n$ is of size $n>3$. Relabel the vertices of $C_n$ to have $C_n=(X_1,...,X_n,X_1)$. Since the graph is chordal, $C_n$ has a chord and hence we have a triangle induced on vertices $\{X_i,X_{i+1},X_{i+2}\}$ for some $i$. If the direction of $X_i-X_{i+2}$ is $X_{i+2}\rightarrow X_i$, we have the directed cycle $(X_i,X_{i+1},X_{i+2},X_i)$ which is of size 3. Otherwise, we have the directed cycle $C_{n-1}=(X_1,...,X_i,X_{i+2},..,X_n,X_1)$ on $n-1$ vertices. Relabeling the vertices from $1$ to $n-1$ and repeating the above reasoning concludes the lemma.

\end{proof}
\begin{proof}[Proof of Proposition \ref{prop:inMEC}]
All the components in the undirected subgraph of $\tG$ are chordal \citep{hauser2012characterization}. Therefore, by Lemma \ref{claim:tri}, to insure that a generated directed graph is a DAG, it suffices to make sure that it does not have any directed cycles of length 3, which is one of the checks that we do in the proposed procedure. For checking if the generated DAG is in the same Markov equivalence class as $G^*$, since they have the same skeleton, it suffices to check if they have the same set of v-structures \citep{judea1991equivalence}, which is the other check that we do in the sampler in Algorithm \ref{algorithm:fastsamp}.

\end{proof}

\end{appendices}

~\newpage

\vskip 0.2in
\bibliographystyle{plainnat}
\bibliography{Refs}

\end{document}